%
%
%

\documentclass[graybox]{article}
\usepackage{fullpage}
\usepackage{hyperref}
\usepackage{times,amssymb,url,amsmath,graphicx,ifpdf,booktabs,bm,amsthm}
\usepackage[ruled,vlined,linesnumbered,lined]{algorithm2e}
\usepackage[justification=centering]{caption}
\usepackage{subcaption}
\captionsetup{compatibility=false}

\usepackage{multirow}
\usepackage{amsmath,amssymb}

\usepackage{mathptmx}       
\usepackage{helvet}         
\usepackage{courier}        
\usepackage{type1cm}        
%
\usepackage{makeidx}         
\usepackage{graphicx}        
\usepackage{multicol}        
\usepackage[bottom]{footmisc}

\usepackage{listings}  
\usepackage{xcolor}
\definecolor{backcolour}{rgb}{0.95,0.95,0.92}
\lstdefinestyle{pystyle}{
    backgroundcolor=\color{backcolour},
    keywordstyle=\color{magenta},
    basicstyle=\ttfamily\footnotesize,
    breakatwhitespace=false,
    breaklines=true,
    captionpos=b,
    keepspaces=true,
    numbers=left,
    numbersep=5pt,
    showspaces=false,
    showstringspaces=false,
    showtabs=false,
    tabsize=2
}
\lstset{style=pystyle}

\usepackage{authblk}

 \newtheorem{property}{Property}
 \newtheorem{lemma}{Lemma}
\newtheorem{theorem}{Theorem}
 \newtheorem{corollary}{Corollary}

\graphicspath{{./figures/}}
\ifpdf \DeclareGraphicsRule{*}{mps}{*}{} \fi

\sloppy

\def\calP{\mathcal{P}}
\def\eps{\epsilon}
\def\Mult{\mathrm{Mult}}

\def\diag{\mathrm{diag}}

\def\TV{\mathrm{TV}}
\def\var{\mathrm{var}}
\def\Pr{\mathrm{Pr}}
\def\CS{\mathrm{CS}}
\def\LC{\mathrm{LC}}
\def\KL{\mathrm{KL}}
\def\tr{\mathrm{tr}}

\def\dt{\mathrm{d}t}

\def\Ball{\mathrm{Ball}}
\def\calB{\mathcal{B}}

\def\FD{\mathrm{FD}}
\def\calI{\mathcal{I}}
\def\calX{\mathcal{X}}
\def\calC{\mathcal{C}}

\def\calV{\mathcal{V}}

\def\bbR{\mathbb{R}}
\def\Finner#1#2#3{{\langle {#1},\;{#2} \rangle}_{#3}}
\def\inner#1#2{ \langle {#1},{#2} \rangle }

\def\aff{\mathrm{affine}}
\def\co{\mathrm{hull}}
\DeclareMathOperator*{\argmax}{arg\,max}
\DeclareMathOperator*{\argmin}{arg\,min}
\def\st{\ :\ }

\def\FHR{\mathrm{FHR}}
\def\IG{\mathrm{IG}}
\def\HG{\mathrm{HG}}
\def\TG{\mathrm{TG}}
\def\BG{\mathrm{BG}}
\def\NH{\mathrm{NH}}  

\def\dP{{\mathrm{d}P}}
\def\dQ{{\mathrm{d}Q}}

\def\eqdef{{:=}}

\def\min{\mathrm{min}}
\def\max{\mathrm{max}}

\def\rhofd{\rho_{\mathrm{FD}}}
\def\rhohg{\rho_{\mathrm{HG}}}
\def\rhofhr{\rho_{\mathrm{FHR}}}
\def\rhoeuc{\rho_{\mathrm{EUC}}}
\def\rhol1{\rho_{\mathrm{L1}}}
\def\rhok{\rho_{\mathrm{K}}}

\begin{document}

\title{Clustering in Hilbert simplex geometry\footnote{Compared to the chapter published in the edited book ``Geometric Structures of Information''~\cite{ChapterHSG-2019,gsi-nielsen-2019} (Springer, 2019), this document includes the proof of information monotonicity of the non-separable Hilbert simplex distance (\S\ref{sec:hsgim}), reports a closed-form formula for Birkhoff's projective distance in the elliptope, and does not need a tailored algorithm to compute the Hilbert simplex distance but rather apply formula~\ref{eq:BG}. This paper also presents further experimental results on positive measures (Table~\ref{tab:eKLBG}: Birkhoff cone metric versus extended Kullback-Leibler divergence) and Thompson metric versus forward/reverse/symmetrized Kullback-Leibler divergences in the elliptope (Table~\ref{tab:tg}). We show that both the Aitchison and Hilbert simplex distances are norm distances on normalized logarithmic representations.}}

\date{}

\author[$\star$]{Frank Nielsen}
\affil[$\star$]{Sony Computer Science Laboratories Inc.}
\affil[$\star$]{Japan}
\affil[$\star$]{ORCID:~0000-0001-5728-0726}
\affil[$\star$]{{\small\tt Frank.Nielsen@acm.org}\vspace{.5em}}
\author[$\dagger$]{{Ke Sun}}
\affil[$\dagger$]{Data61, Australia}
\affil[$\dagger$]{The Australian National University}
\affil[$\dagger$]{ORCID:~0000-0001-6263-7355}
\affil[$\dagger$]{{\small\tt sunk@ieee.org}}
\date{}

\maketitle

\begin{abstract}
Clustering categorical distributions in the finite-dimensional probability simplex is a fundamental task met in many applications
dealing with normalized histograms.
Traditionally, the differential-geometric structures of the probability simplex have been used either by
(i) setting the Riemannian metric tensor to the Fisher information matrix of the categorical distributions, or
(ii) defining the dualistic information-geometric structure induced by a smooth dissimilarity measure, the Kullback-Leibler divergence.
In this work, we introduce for clustering tasks a novel computationally-friendly framework for modeling geometrically the probability simplex: The {\em Hilbert simplex geometry}.
In the Hilbert simplex geometry, the distance is the non-separable Hilbert's metric distance which
satisfies the property of information monotonicity with distance level set functions described by polytope boundaries.
We show that both the Aitchison and Hilbert simplex distances are norm distances on normalized logarithmic representations with respect to the $\ell_2$ and variation norms, respectively.
We discuss the pros and cons of those different statistical modelings, and benchmark experimentally  these different kind of geometries for center-based  $k$-means and $k$-center clustering.
Furthermore, since a canonical Hilbert distance can be defined on any bounded convex subset of the Euclidean space, we also consider   Hilbert's geometry of the elliptope of correlation matrices and study its clustering performances compared to Fr\"obenius and log-det divergences.
\end{abstract}

\noindent Keywords: Fisher-Riemannian geometry, information geometry,  Aitchison geometry, Hilbert geometry, Birkhoff geometry, Finsler geometry, information monotonicity, multinoulli distribution, variation norm, polytope distance, elliptope, Thompson metric, Funk weak metric, $k$-means clustering, $k$-center clustering.

\section{Introduction and motivation}\label{sec:intro}

The categorical distributions and multinomial distributions are important probability distributions often met
in data analysis~\cite{CategoricalDA-2003}, text mining~\cite{MiningTextData-2012}, computer vision~\cite{bagoffeat-2009} and machine learning~\cite{Murphy-2012}.
A multinomial distribution over a set $\calX=\{e_0,\ldots, e_{d}\}$ of outcomes (e.g., the $d+1$ distinct colored faces of a die) is defined as follows:
Let  $\lambda_p^i>0$ denote the probability that outcome $e_i$ occurs for $i\in\{0,\ldots, d\}$ (with $\sum_{i=0}^d \lambda_p^i=1$).
Denote by $m$ the total number of events, with $m_i$ reporting the number of outcome $e_i$.
Then the probability $\Pr(X_0=m_0,\ldots, X_d=m_d)$ that a multinomial random variable $X=(X_0,\ldots, X_d)\sim\Mult(p=(\lambda_p^0,\ldots,\lambda_p^d),m)$ (where $X_i$ count the number of events $e_i$, and $\sum_{i=0}^d m_i=m$) is given by the following probability mass function (pmf):
$$
\Pr(X_0=m_0,\ldots, X_d=m_d)= \frac{m!}{\prod_{i=0}^d m_i!}  \prod_{i=0}^d \left( \lambda_p^i \right)^{m_i}.
$$
The multinomial distribution is called a binomial distribution when $d=1$ (e.g., coin tossing), a Bernoulli distribution when $m=1$,
and a ``multinoulli distribution'' (or categorical distribution) when $m=1$ and $d>1$.
The multinomial distribution is also called a generalized Bernoulli distribution.
A random variable $X$ following a multinoulli distribution is denoted by $X=(X_0,\ldots, X_d)\sim\Mult(p=(\lambda_p^0,\ldots,\lambda_p^d))$.
The multinomial/multinoulli distribution provides an important {\em feature representation} in machine learning that is often met
in applications~\cite{MetricLearning-2002, MultMixTextClustering-2007,ClusteringSimplex-2008,codapca} as normalized histograms (with non-empty bins) as illustrated in Figure~\ref{fig:trinoulli}.

\begin{figure}
\centering
\includegraphics[width=\textwidth]{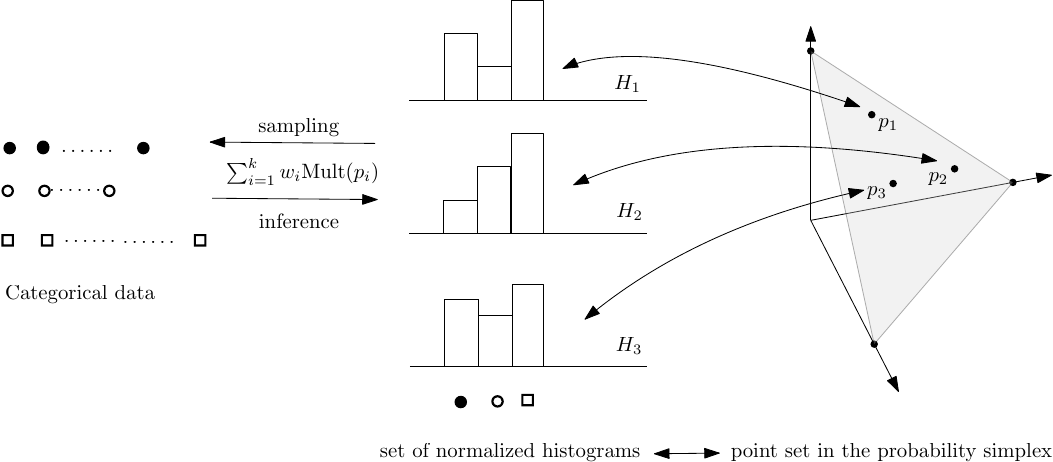}

\caption{Categorical datasets modeled by a generative statistical mixture model of multinoulli distributions can be visualized as a weighted set of normalized histograms or equivalently by a weighted point set encoding multinoulli distributions in the probability simplex $\Delta^d$ (here, $d=2$ for trinoulli distributions --- trinomial distributions with a single trial).}\label{fig:trinoulli}
\end{figure}

\begin{figure}[htp]
\centering
\begin{subfigure}{.9\textwidth}
\includegraphics[width=\textwidth]{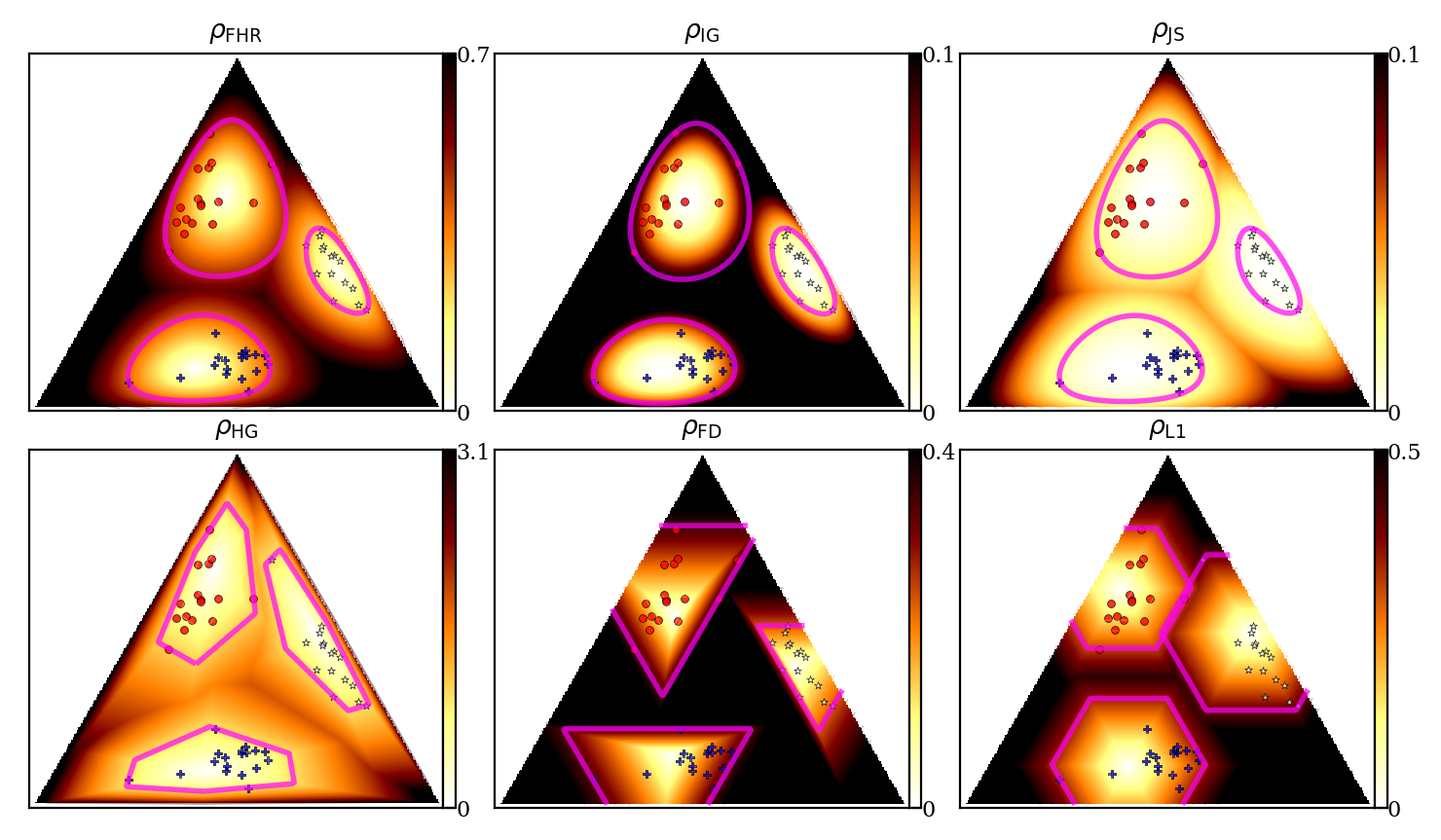}
\caption{$k=3$ clusters}
\end{subfigure}
\begin{subfigure}{.9\textwidth}
\includegraphics[width=\textwidth]{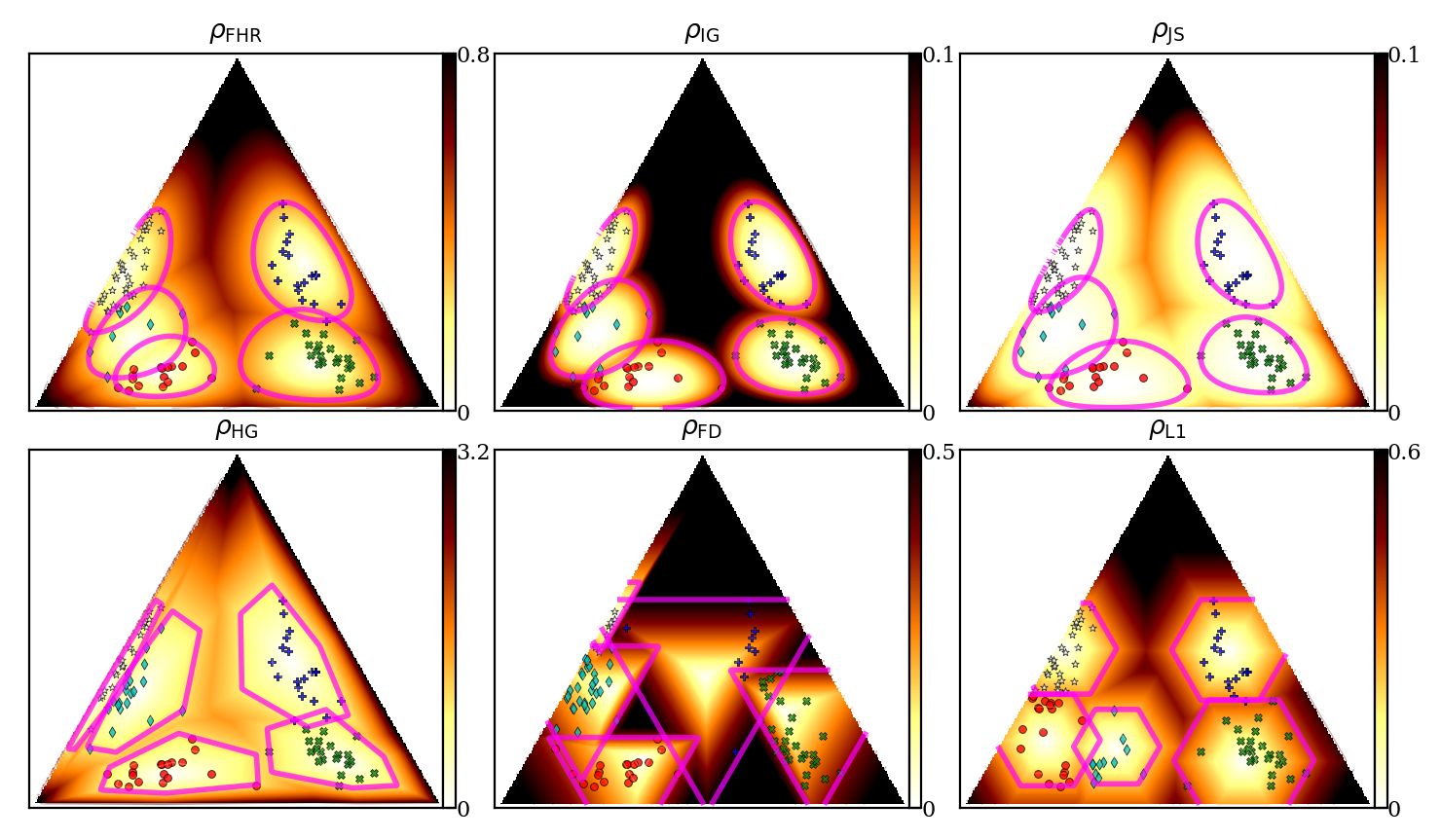}
\caption{$k=5$ clusters}
\end{subfigure}

\caption{Visualizing some $k$-center clustering results on a toy dataset in the space of trinomials $\Delta^2$ for the considered four types of distances (and underlying geometries): Fisher-Hotelling-Rao metric distance (Riemannian geometry), Kullback-Leibler non-metric divergence (information geometry), Hilbert metric distance (Hilbert projective geometry), and total variation/$L_1$ metric distance (norm geometry).
Observe that the $L_1$ balls have hexagonal shapes on the probability simplex (intersection of a rotated cube with the plane $H_{\Delta^d}$).
The color density maps indicate the distance from any point to its nearest cluster center.}\label{fig:results}
\end{figure}

A multinomial distribution  $p\in\Delta^d$ can be thought as a point lying in the probability simplex $\Delta^d$ (standard simplex) with coordinates $p=(\lambda_p^0,\ldots,\lambda_p^d)$ such that $\lambda_p^i=\Pr(X=e_i)>0$ and $\sum_{i=0}^d \lambda_p^i=1$.
The open probability simplex $\Delta^d$   can be embedded    in $\bbR^{d+1}$ on the hyperplane $H_{\Delta^d}:\;\sum_{i=0}^d x^i =1$.
Notice that observations with $D$ categorical attributes can be clustered using $k$-mode~\cite{kmode-1998} with respect to the Hamming distance.
Here, we consider the different task of clustering a set $\Lambda=\{p_1,\ldots,p_n\}$ of $n$ categorical/multinomial distributions
in $\Delta^d$~\cite{ClusteringSimplex-2008}
using center-based $k$-means++ or $k$-center clustering algorithms~\cite{kmeanspp-2007,kcenter-1985},
which rely on a dissimilarity measure (loosely called distance or divergence when smooth) between any two categorical distributions.
In this work, we mainly consider four distances with their underlying geometries:
(1) Fisher-Hotelling-Rao distance $\rho_\FHR$ (spherical geometry), (2) Kullback-Leibler divergence $\rho_\IG$ (dually flat geometry),  (3) Hilbert distance $\rho_\HG$ (generalize Klein's hyperbolic geometry), and (4) the total variation/L1 distance (norm geometry).
The geometric structures of spaces are necessary for algorithms, for example, to define midpoint distributions.
Figure~\ref{fig:results} displays the $k$-center clustering results obtained with these four geometries
as well as the  $L^1$ distance $\rho_{L1}$ normed geometry on toy synthetic datasets in $\Delta^2$.
We shall now explain the Hilbert simplex geometry applied to the probability simplex, describe
how to perform $k$-center clustering in Hilbert geometry, and report experimental results that
demonstrate the superiority of the Hilbert geometry when clustering multinomials and correlation matrices.

The rest of this paper is organized as follows:
Section~\ref{sec:distances} formally introduces the distance measures in $\Delta^d$.
Section~\ref{sec:dist} introduces how to efficiently compute the Hilbert distance, and prove that Hilbert simplex distance satisfies the information monotonicity (in \S\ref{sec:hsgim}).
Section~\ref{sec:clustering} presents algorithms for Hilbert minimax centers and Hilbert center-based clustering.
Section~\ref{sec:exp} performs an empirical study of clustering multinomial distributions,
comparing Riemannian geometry, information geometry, and Hilbert geometry.
Section~\ref{sec:elliptope} presents a second use case of Hilbert geometry in machine learning:
clustering correlation matrices in the elliptope~\cite{CorrelationElliptope-2018} (a kind of simplex with strictly convex facets).
Finally, section~\ref{sec:con} concludes this work by summarizing the pros and cons of each geometry.
Although some contents require prior knowledge of geometric structures,
we will present the detailed algorithms so that the general audience can still benefit from this work.

\section{Four distances with their underlying geometries}\label{sec:distances}

\subsection{Fisher-Hotelling-Rao Riemannian geometry}

The Rao distance between two multinomial distributions is~\cite{KassVos-1997,MetricLearning-2002}:
\begin{equation}
\rho_{\FHR}(p,q) = 2\arccos\left(\sum_{i=0}^{d} \sqrt{\lambda_p^i\lambda_q^i}\right).
\end{equation}
It is a Riemannian metric length distance (satisfying the symmetric and triangular inequality axioms)
obtained by setting the metric tensor $g$ to the {\em Fisher information matrix} (FIM)
$\mathcal{I}(p)=(g_{ij}(p))_{d\times{d}}$ with respect to the coordinate system $(\lambda_p^1,\cdots,\lambda_p^d)$,
where
$$
g_{ij}(p) = \frac{\delta_{ij}}{\lambda_p^i} + \frac{1}{\lambda_p^0}.
$$
We term this geometry the {\em Fisher-Hotelling-Rao (FHR) geometry}~\cite{Hotelling-1930,storyLM-2007,Rao-1945,Rao-reprint-1992}.
The metric tensor $g$ allows one to define an inner product on each tangent plane $T_p$ of the probability simplex manifold: $\Finner{u}{v}{p}=u^\top g(p) v$.
When $g$ is everywhere the identity matrix, we recover the Euclidean (Riemannian) geometry with the inner product being the scalar product: $\inner{u}{v}=u^\top v$.
The geodesics $\gamma(p,q;\alpha)$ are defined by the Levi-Civita metric connection~\cite{IG-2016,IG-2014} that is derived from the metric tensor.
The FHR manifold can be embedded in the positive orthant of a Euclidean $d$-sphere in $\bbR^{d+1}$
by using the {\em square root representation} $\lambda\mapsto\sqrt{\lambda}$~\cite{KassVos-1997}.
Therefore the FHR manifold modeling of $\Delta^d$ has constant {\em positive} curvature: It is a
spherical geometry restricted to the positive orthant with the metric distance measuring the arc length on a great circle.

\subsection{Information geometry}

A divergence $D$ is a smooth $C^3$ differentiable dissimilarity measure~\cite{DivIG-2010}
that allows defining a dual structure in Information Geometry~(IG), see~\cite{HessianStructure-2007,IG-2014,IG-2016,ElementaryIG-2018}.
An $f$-divergence is defined for a strictly convex function $f$ with $f(1)=0$ by:
$$
I_f(p:q)=\sum_{i=0}^{d} \lambda_p^i f\left(\frac{\lambda_q^i}{\lambda_p^i}\right)\geq f(1)=0.
$$
It is a {\em separable} divergence since the $d$-variate divergence can be written as a sum of $d$ univariate (scalar) divergences:
$I_f(p:q)=\sum_{i=0}^{d} I_f( \lambda_p^i: \lambda_q^i)$.
The class of $f$-divergences plays an essential role in information theory since they are provably the {\em only} separable divergences that satisfy
 the {\em information monotonicity} property~\cite{IG-2016,SeparableDivergence-2016} (for $d\geq 2$).
That is, by coarse-graining the histograms,
we obtain lower-dimensional multinomials, say $p'$ and $q'$, such that $0\leq I_f(p':q')\leq  I_f(p:q)$~\cite{IG-2016}.
The Kullback-Leibler (KL) divergence $\rho_{\IG}$ is a $f$-divergence obtained for the functional generator $f(u)=-\log u$:
\begin{equation}
\rho_{\IG}(p,q) = \sum_{i=0}^d \lambda_p^i \log\frac{\lambda_p^i}{\lambda_q^i}.
\end{equation}
It is an asymmetric non-metric distance: $\rho_{\IG}(p,q)\not =\rho_{\IG}(q,p)$.
In differential geometry, the structure of a manifold is defined by two independent components:
\begin{enumerate}
\item A {\em metric tensor} $g$ that allows defining an inner product $\Finner{\cdot}{\cdot}{p}$ at each tangent space (for measuring vector lengths and angles between vectors);
\item A {\em connection} $\nabla$ that defines
{\em parallel transport} ${\prod_c}^\nabla$, {\it i.e.}, a way to move a tangent vector from
one tangent plane $T_p$ to any other one $T_q$ along a smooth curve $c$, with $c(0)=p$ and $c(1)=q$.
\end{enumerate}
In FHR geometry, the implicitly-used connection is called the Levi-Civita connection that is induced by the metric $g$: $\nabla^{\mathrm{LC}}=\nabla(g)$.
It is a metric connection since it ensures that $\Finner{u}{v}{p}=\Finner{\prod_{c(t)}^{\nabla^\LC} u}{\prod_{c(t)}^{\nabla^\LC} v}{c(t)}$ for $t\in [0,1]$.
The underlying information-geometric structure of KL is characterized by a pair of \emph{dual} connections~\cite{IG-2016}
$\nabla=\nabla^{(-1)}$ (mixture connection) and $\nabla^*=\nabla^{(1)}$ (exponential connection) that induces a corresponding pair of dual geodesics
(technically, $\pm1$-autoparallel curves,~\cite{IG-2014}). Those connections are said to be \emph{flat}~\cite{ElementaryIG-2018}\ as they define two dual global affine
coordinate systems $\theta$ and $\eta$ on which the $\theta$- and $\eta$-geodesics are (Euclidean) straight line segments, respectively.
For multinomials, the {\em expectation parameters} are: $\eta = (\lambda^1,\ldots,\lambda^d)$
and they one-to-one correspond to the {\em natural parameters}:
$\theta = \left(\log\frac{\lambda^1}{\lambda^0},\ldots,\log\frac{\lambda^d}{\lambda^0}\right)\in\mathbb{R}^d$.
Thus in IG, we have two kinds of midpoint multinomials of $p$ and $q$, depending on whether we perform the (linear) interpolation
on the $\theta$- or the $\eta$-geodesics.
Informally speaking, the dual connections $\nabla^{(\pm 1)}$ are said coupled to the FIM since we have
$\frac{\nabla+\nabla^*}{2}=\nabla(g)=\nabla^{\LC}$. Those dual (torsion-free affine) connections are not metric connections but enjoy the following metric-compatibility property when used together as follows:
$\Finner{u}{v}{p} = \Finner{{\prod}_{c(t)} u}{{\prod^*}_{c(t)} v}{c(t)}$ (for $t\in [0,1]$), where $\prod:=\prod^{\nabla}$ and ${\prod^*}:={\prod^{\nabla^*}}$
are the corresponding induced dual parallel transports.
The geometry of $f$-divergences~\cite{DivIG-2010} is the $\alpha$-geometry (for $\alpha=3+2f'''(1)$) with the dual $\pm\alpha$-connections,
where $\nabla^{(\alpha)}=\frac{1+\alpha}{2}\nabla^*+\frac{1-\alpha}{2}\nabla$. The Levi-Civita metric connection is $\nabla^\LC=\nabla^{(0)}$.
More generally, it was shown how to build a dual information-geometric structure for {\em any} divergence~\cite{DivIG-2010}.
For example, we can build a dual structure from the symmetric Cauchy-Schwarz divergence~\cite{CS-2006}:
\begin{equation}
\rho_\CS(p,q)=- \log \frac{\inner{\lambda_p}{\lambda_q}}{\sqrt{\inner{\lambda_p}{\lambda_p}\inner{\lambda_q}{\lambda_q}}}.
\end{equation}

\subsection{Hilbert simplex geometry}

\begin{figure}%
\centering
\includegraphics[width=.4\textwidth]{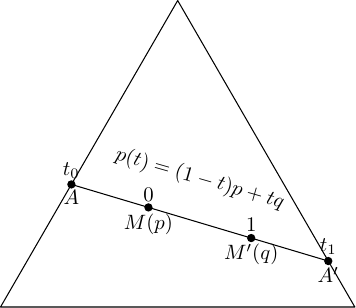}%

\caption{Computing the Hilbert metric distance for trinomials on the 2D probability simplex as the logarithm of the cross ratio $(M,M';A,A')$ of the four  collinear points $A, M, M'$ and $A'$.}%
\label{fig:hd}%
\end{figure}

In Hilbert geometry~(HG), we are given a bounded convex domain $\calC$ (here, $\calC=\Delta^d$),
and the distance between any two points $M$, $M'$ of $\calC$ is defined~\cite{Hilbert-1895} as follows:
Consider the two intersection points $AA'$ of the line $(MM')$ with  $\calC$, and order them on the line so that we have $A,M,M',A'$.
Then the Hilbert metric distance~\cite{Busemann-2011} is defined by:
\begin{equation}\label{eg:hgd}
\rhohg(M,M')=\left\{
\begin{array}{ll}
\left\vert\log\frac{|A'M| |AM'|}{|A'M'| |AM|}\right\vert, & M \not=M',\\
0 & M=M'.
\end{array}
\right.
\end{equation}
It is also called the Hilbert cross-ratio metric distance~\cite{HilbertHarpe-1991,BH-2014} or Cayley metric~\cite{Samelson-1957}.
Notice that we take the absolute value of the logarithm since the Hilbert distance is a {\em signed distance}~\cite{Richter-2011}.
When we order $A,M,M',A'$ along the line passing through them, the cross-ratio is positive.

When $\calC$ is the unit ball, HG lets us recover the Klein hyperbolic geometry~\cite{BH-2014}.
When $\calC$ is a quadric bounded convex domain, we obtain the Cayley-Klein hyperbolic geometry~\cite{CKM-2015}
which can be studied with the Riemannian structure and the corresponding metric distance called the curved Mahalanobis
distances~\cite{LMNN-2016,CayleyClassification-2016}.
Cayley-Klein hyperbolic geometries have negative curvature.
Elements on the boundary are called ideal elements~\cite{IdealHilbert-2014}.
We may scale the distance by a positive scalar
(e.g., $\frac{1}{2}$ when $\calC$ is the open ball so that the induced Hilbert geometry corresponds with the Klein model of hyperbolic geometry of negative unit curvature).

In Hilbert geometry, the geodesics are {\em straight} Euclidean lines making them convenient for computation.
Furthermore, the domain boundary $\partial\calC$ needs not to be smooth: One may also consider bounded polytopes~\cite{HGPolytope-2009}.
This is particularly interesting for modeling $\Delta^d$, the $d$-dimensional open standard simplex.
We call this geometry the \emph{Hilbert simplex geometry}~\cite{ClusteringHilbertSimplex-2017}.
In Figure~\ref{fig:hd}, we show that the Hilbert distance between two multinomial distributions
$p$ ($M$) and $q$ ($M'$) can be computed by
finding the two intersection points of the line $(1-t)p+tq$ with
$\partial\Delta^d$, denoted as $t_0\le0$ and $t_1\ge1$. Then
$$
\rhohg(p,q)=\left\vert\log\frac{(1-t_0)t_1}{(-t_0)(t_1-1)}\right\vert
=\log\left(1-\frac{1}{t_0}\right)-\log\left(1-\frac{1}{t_1}\right).
$$

It can be shown using Birkhoff geometry~\cite{BH-2014} that the Hilbert's distance between two points $p=(p^1,\ldots,p^d)$
and $q=(q^1,\ldots, q^d)$ on the $(d-1)$-dimensional probability simplex~\cite{bonnabel2011contraction} can be expressed as:
\begin{equation}
\rhohg(p, q)
=\log \frac{\max _{i\in\{1,\ldots, d\}} \frac{p_{i}}{q_{i}}}{\min _{j\in\{1,\ldots, d\}} \frac{p_{j}}{q_{j}}}.
\end{equation}

Figure~\ref{fig:VoronoiHilbert} displays the Voronoi diagram of $n=16$ trinomials with respect to the Hilbert distance.
The Hilbert Voronoi diagram is piecewise linear and amounts to a polytopal-norm Voronoi diagram~\cite{boissonnat1998voronoi} with respect to the variation norm.

\begin{figure}%
\centering%
\includegraphics[width=.35\textwidth]{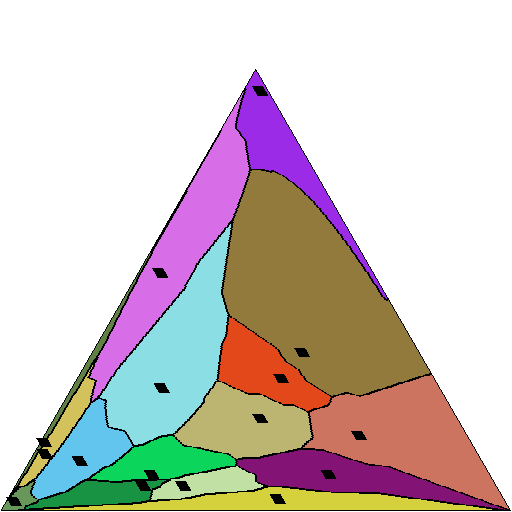}%

\caption{Hilbert Voronoi diagram of $n=16$ trinomial sites displayed in the equilateral simplex embedded in $\bbR^2$.}%
\label{fig:VoronoiHilbert}%
\end{figure}

\begin{figure}%
\centering%
\includegraphics[width=.7\textwidth]{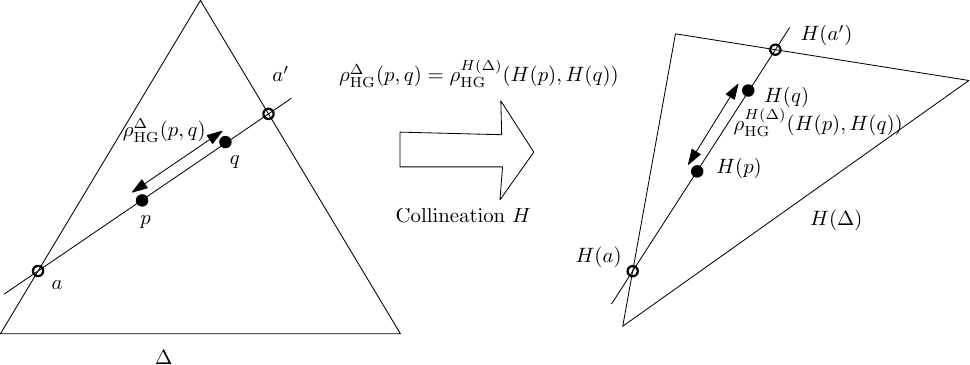}%

\caption{The Hilbert distance is invariant by construction to collineations.}%
\label{fig:invhd}%
\end{figure}

By construction, Hilbert distance is {\em invariant to collineations} $H$ (also called projectivities which are equivalent to homographies in real projective spaces). That is, let $\Delta$ denote a simplex and $H(\Delta)$ the deformed simplex by a collineation $H$.
Then $\rhohg^\Delta(p,q)=\rhohg^{H(\Delta)}(H(p),H(q))$, as depicted in Figure~\ref{fig:invhd}.

\begin{figure}
\centering
\begin{subfigure}{.98\textwidth}
\includegraphics[width=\textwidth]{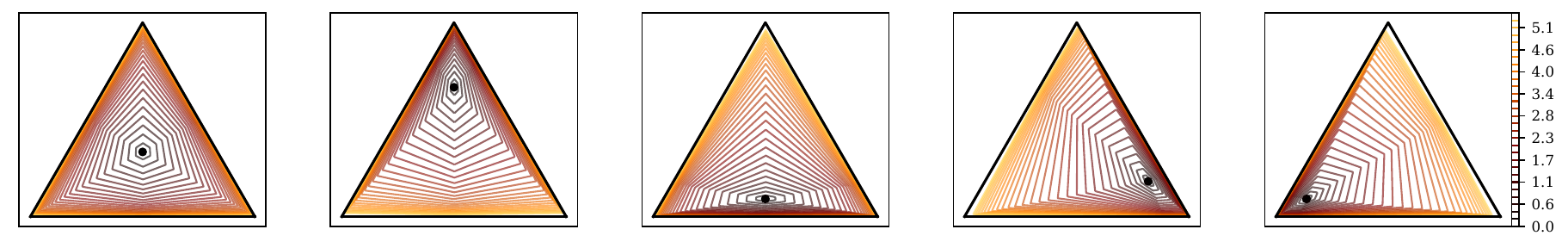}
\caption{$\rhohg(p,c)$}
\end{subfigure}
\begin{subfigure}{.98\textwidth}
\includegraphics[width=\textwidth]{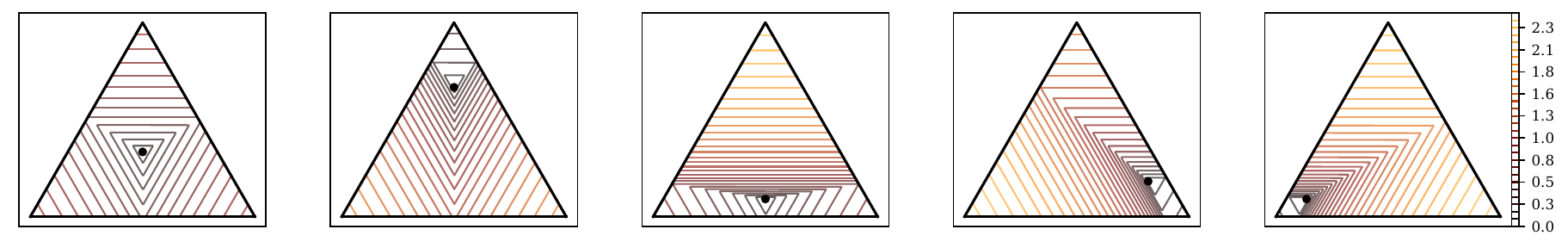}
\caption{$\rhofd(p,c)$}
\end{subfigure}
\begin{subfigure}{.98\textwidth}
\includegraphics[width=\textwidth]{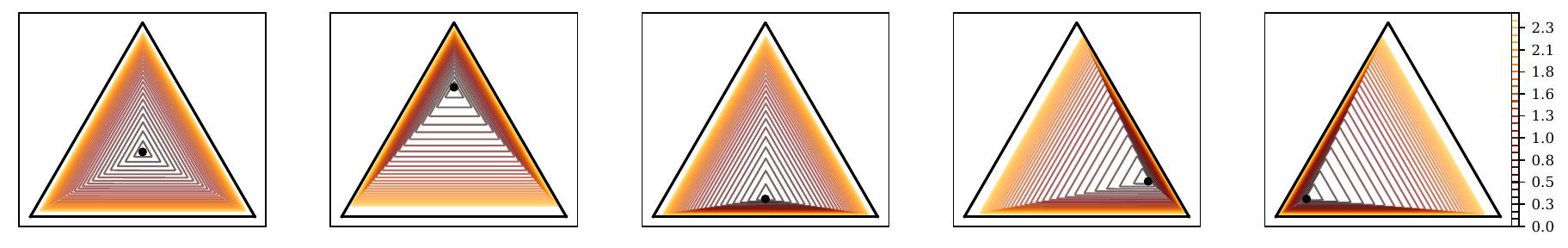}
\caption{$\rhofd(c,p)$}
\end{subfigure}

\caption{
    Balls centered at $c\in\Delta^2$ with its radius increasing at constant speed.
    Balls in the Hilbert simplex geometry $\Delta^2$
    have polygonal Euclidean shapes of constant combinatorial complexity.
At infinitesimal scale, the balls have polygonal shapes, showing that the Hilbert geometry is not Riemannian.}%
\label{fig:shape}%
\end{figure}

\begin{figure}
\centering
\includegraphics[width=.25\textwidth]{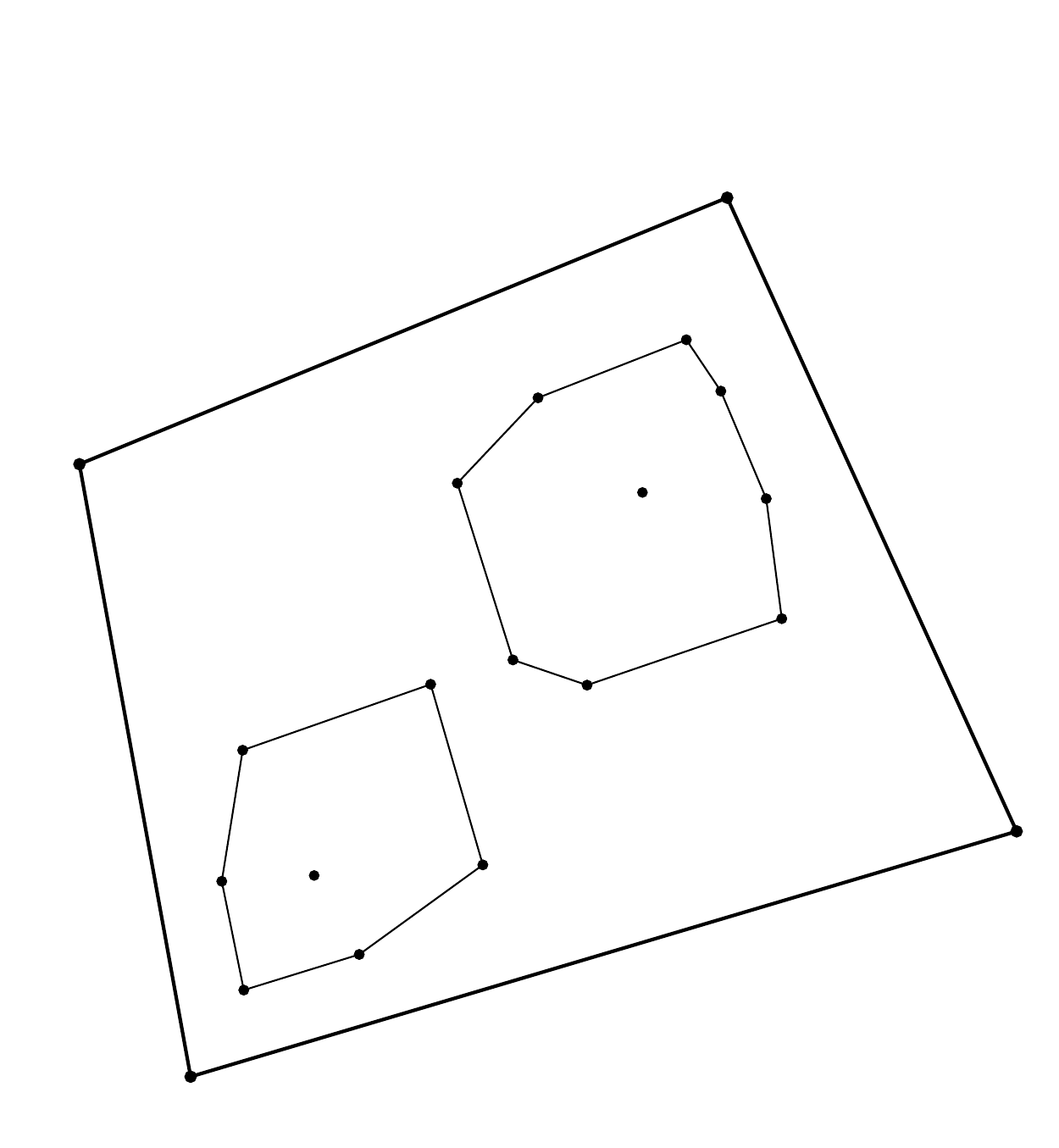}\hskip 3cm
\includegraphics[width=.25\textwidth]{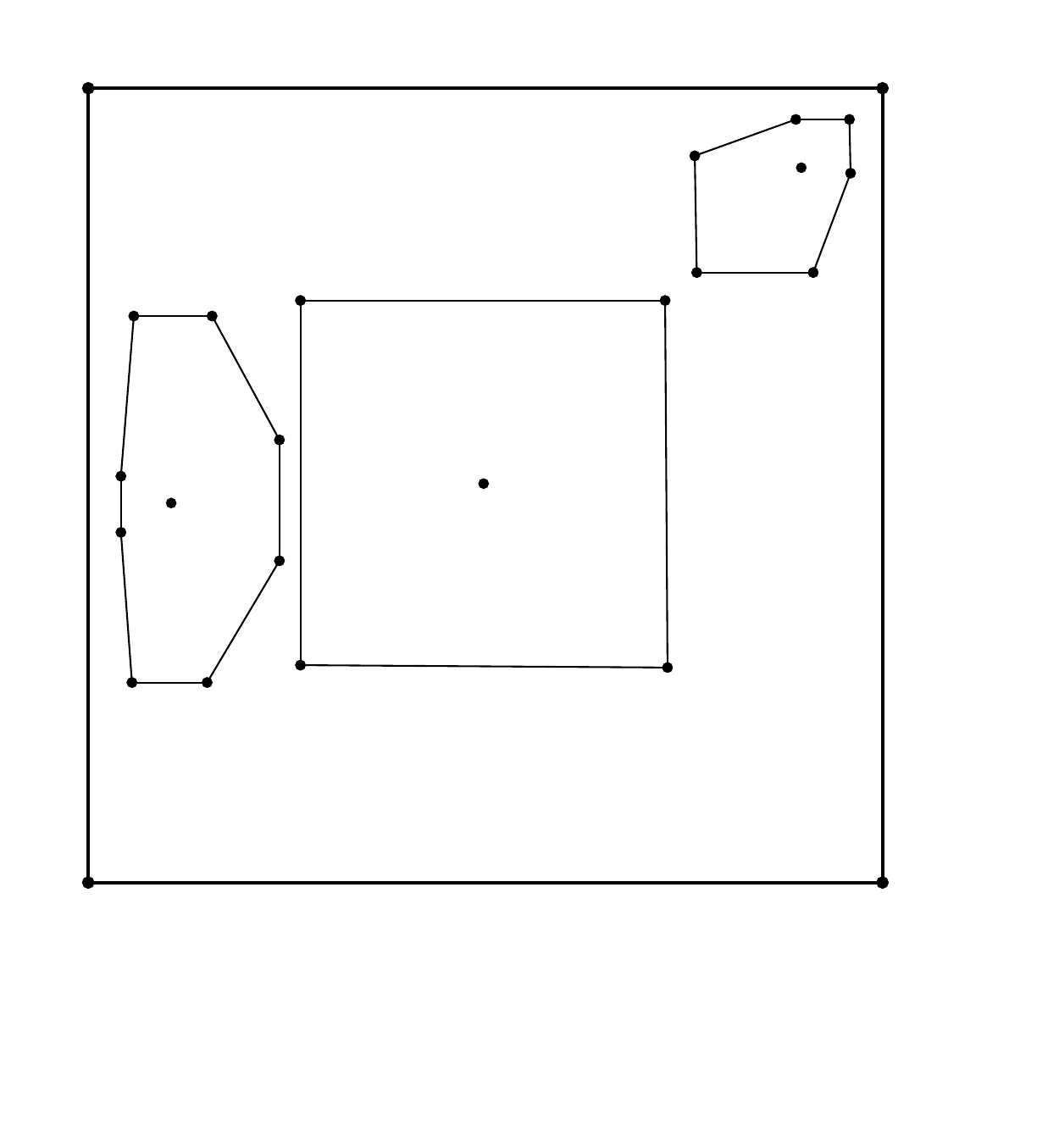}
\caption{Hilbert balls in quadrangle domains have combinatorial complexity depending on the center location.\label{fig:quad}}
\end{figure}

The shape of balls in polytope-domain HG is Euclidean polytopes\footnote{To contrast with this result, let us mention that infinitesimal  small balls in Riemannian geometry have Euclidean ellipsoidal shapes (visualized as Tissot's indicatrix in cartography).}~\cite{BH-2014},
as depicted in Figure~\ref{fig:shape}.
Furthermore, the Euclidean shape of the balls does not change with the radius.
Hilbert balls have hexagons shapes in 2D~\cite{Hilbert2DSimplex-1975,Pambuccian-2008,HG-SoCG-2017}, rhombic dodecahedra shapes in 3D, and are polytopes~\cite{BH-2014} with $d(d+1)$ facets in dimension $d$.
When the polytope domain is not a simplex, the combinatorial complexity of balls depends on the center location~\cite{HG-SoCG-2017}, see Figure~\ref{fig:quad}.
The HG of the probability simplex yields a non-Riemannian geometry, because, at an infinitesimal radius, the balls are polytopes and not ellipsoids
(corresponding to squared Mahalanobis distance balls used to visualize metric tensors \cite{VizTensor-2009}).
The isometries in Hilbert polyhedral geometries are studied in~\cite{HilbertIsometry-2011}.
In Appendix~\ref{sec:HGFG}, we recall that any Hilbert geometry induces a Finslerian structure
that becomes Riemannian iff the boundary is an ellipsoid (yielding the hyperbolic Cayley-Klein geometries~\cite{Richter-2011}).
Notice that in Hilbert simplex/polytope geometry, the geodesics are not unique (see Figure~2 of~\cite{HilbertHarpe-1991} and Figure~\ref{fig:notgeodesic}).
Observe that the Funk balls $\mathrm{ball}_\FD(c,r)=\{p\in\Delta^d\ :\ \rho_\FD(p,c)\leq r\}$ and 
reverse Funk balls $\mathrm{ball}_\FD^*(c,r)=\{p\in\Delta^d\ :\ \rho_\FD(c,p)\leq r\}$  have different shapes in  
Figure~\ref{fig:shape}. This is because for Funk balls $\mathrm{ball}_\FD(c,r)$, we have $\max_i\log \frac{p_i}{c_i}<\infty$ even when $p\in\partial\Delta^d$ but for reverse Funk balls $\mathrm{ball}_\FD^*(c,r)$, we have $\max_i\log \frac{c_i}{p_i}=+\infty$ when $p\in \partial\Delta^d$.

\subsection{$L_1$-norm geometry}

The Total Variation (TV) metric distance between two multinomials $p$ and $q$ is defined by:
$$
\mathrm{TV}(p,q) = \frac{1}{2} \sum_{i=0}^d |\lambda_p^i-\lambda_q^i|.
$$
It is a statistical $f$-divergence obtained for the generator $f(u)=\frac{1}{2}|u-1|$.
The $L_1$-norm induced distance $\rho_{L1}$ (L1) is defined by:
$$
\rho_{L1}(p,q) = \|\lambda_p-\lambda_q\|_1  =  \sum_{i=0}^d |\lambda_p^i-\lambda_q^i| = 2 \TV(p,q).
$$
Therefore the distance $\rho_{L1}$ satisfies information monotonicity (for coarse-grained histograms $p'$ and $q'$ of $\Delta^{D'}$ with $D'<D$):
$$
0\leq \rho_{L1}(p',q')\leq  \rho_{L1}(p,q).
$$

For trinomials, the $\rho_{L1}$ distance is given by:
$$
\rho_{L1}(p,q) = |\lambda_p^0-\lambda_q^0| + |\lambda_p^1-\lambda_q^1| + |\lambda_q^0-\lambda_p^0+\lambda_q^1-\lambda_p^1|.
$$


The $L_1$ distance function is a polytopal distance function described by the dual polytope $\mathcal{Z}$ of the $d$-dimensional cube called the standard (or regular) $d$-cross-polytope~\cite{L1projection-2016},  the orthoplex~\cite{Orthoplex-2016} or the $d$-cocube~\cite{L1Voronoi-1998}:
The cross-polytope $\mathcal{Z}$ can be obtained as the convex hull of the $2d$ unit standard base vectors $\pm e_i$ for $i\in\{0,\ldots, d-1\}$.
The cross-polytope is one of the three regular polytopes in dimension $d\geq 4$ (with the hypercubes and simplices):
It has $2d$ vertices and $2^d$ facets.
Therefore an $L_1$ ball on the hyperplane $H_{\Delta^d}$ supporting the probability simplex is the intersection of a $(d+1)$-cross-polytope with $d$-dimensional hyperplane $H_{\Delta^d}$.
Thus the ``multinomial ball'' $\Ball_{L_1}(p,r)$ of center $p$ and radius $r$ is defined by
 $\Ball_{L_1}(p,r) = (\lambda_p\oplus r\mathcal{Z})\cap H_{\Delta^d}$.
In 2D, the shape of $L_1$ trinomial balls is that of a
 regular octahedron  (twelve edges and eight faces) cut by the 2D plane $H_{\Delta^2}$:
Trinomial balls have hexagonal shapes as illustrated in Figure~\ref{fig:results} (for $\rho_{L1}$).
In 3D,   trinomial balls are Archimedean solid cuboctahedra, and in arbitrary dimension, the shapes are polytopes with $d(d+1)$ vertices~\cite{QG-2017}.
Let us note in passing, that in 3D, the $L_1$ multinomial cuboctahedron ball has the dual shape of the Hilbert rhombic dodecahedron ball.


\begin{table*}[t]
\centering
\caption{Comparing the  geometric modelings of the probability simplex $\Delta^d$.}\label{tab:geo}

{\small
\begin{tabular}{llll}
\toprule[1.5pt]
 & {\bf{Riemannian Geometry}} & {\bf{Information Rie. Geo.}} & {\bf{Non-Rie. Hilbert Geo.}}\\\hline
Structure & $(\Delta^d,g,\nabla^\LC=\nabla(g))$ & $$ $(\Delta^d,g,\nabla^{(\alpha)},\nabla^{(-\alpha)})$ & $(\Delta^d,\rho)$\\  
 & Levi-Civita $\nabla^\LC=\nabla^{(0)}$  & dual connections $\nabla^{(\pm\alpha)}$ so & connection of $\bbR^d$\\
 & &  that $\frac{\nabla^{(\alpha)} + \nabla^{(-\alpha)}}{2}=\nabla^{(0)}$  & \\
Distance & Rao distance (metric) & $\alpha$-divergence (non-metric) & Hilbert distance (metric)\\
 & & KL or reverse KL for $\alpha=\pm1$\\
Property & invariant to reparameterization & information monotonicity & isometric to a normed space \\
Calculation &
closed-form & closed-form & easy (Alg. 1)\\
Geodesic & shortest path & straight either in $\theta/\eta$ & straight\\
Smoothness & manifold  & manifold & non-manifold\\
Curvature & positive & dually flat & negative\\
\bottomrule[1.5pt]
\end{tabular}}
\end{table*}

Table~\ref{tab:geo} summarizes the characteristics of the three main  geometries: FHR, IG, and HG.
Let us conclude this introduction by mentioning the Cram\'er-Rao lower bound and its relationship with information geometry~\cite{CRLB-2013}:
Consider an unbiased estimator $\hat\theta=T(X)$ of a parameter $\theta$ estimated from
measurements distributed according to a smooth density $p(x;\,\theta)$ (i.e., $X\sim{p}(x;\,\theta)$).
The Cram\'er-Rao Lower Bound (CRLB) states that the variance of $T(X)$ is greater or equal to the inverse of the FIM $\calI(\theta)$:
$V_\theta[T(X)]\succ\calI^{-1}(\theta)$. For regular parametric families $\{p(x;\theta)\}_\theta$, the FIM is a positive-definite matrix and defines a metric tensor, called the Fisher metric in Riemannian geometry. The FIM is the cornerstone of information geometry~\cite{IG-2016} but requires the differentiability of the probability density function (pdf).

A better lower bound that does not require the pdf differentiability is the Hammersley-Chapman-Robbins Lower Bound~\cite{Hammersley-1950,ChapmanRobbins-1951} (HCRLB):
\begin{equation}
V_\theta[T(X)]\geq \sup_\Delta \frac{\Delta^2}{E_\theta\left[\left(\frac{p(x;\theta+\Delta)-p(x;\theta)}{p(x;\theta)}\right)^2\right]}.
\end{equation}
By introducing the $\chi^2$-divergence, $\chi^2(P:Q)=\int \left(\frac{\dP-\dQ}{\dQ}\right)^2 \dQ$, we rewrite the HCRLB
using the  $\chi^2$-divergence in the denominator as follows:
\begin{equation}
V_\theta[T(X)]\geq \sup_\Delta \frac{\Delta^2}{\chi^2(P(x;\theta+\Delta):P(x;\theta))}.
\end{equation}
Note that the FIM is not defined for non-differentiable pdfs, and therefore the Cram\'er-Rao
lower bound does not exist in that case.

A multinomial distribution $\lambda_p$ may be visualized in four different ways:
\begin{itemize}
\item using the natural coordinates $\theta^i_p=\log\frac{\lambda_p^i}{\lambda_p^0}$ for $i\in\{1,\ldots, d\}$ of information geometry ( the set of multinomials being interpreted as an exponential family),

\item using the dual moment parameters $\eta^i_p=\lambda_p^i$ for $i\in\{1,\ldots, d\}$,

\item  using the coordinates $\lambda_p^i$ for
$i\in\{0,\ldots, d\}$ describing the probability simplex embedded in $\bbR^{d+1}$, or

\item  $\sum_{i=0}^d \lambda_p^i v_i$ (probability simplex embedded as an equilateral simplex\footnote{An equilateral simplex is defined as a simplex with pairwise unit distance vertices.} in $\bbR^{d}$) where the $v_i$'s are vertices such that $\|v_i-v_j\|_2=1$ iff. $i\not=j$, and $0$ otherwise. In this last case, the coordinates $\lambda_p^i$ are called the barycentric coordinates.
For example, for trinomials, we can choose $v_1=(0,0)$, $v_2=(\frac{1}{2},\frac{\sqrt{3}}{2})$ and $v_3=(0,1)$.
\end{itemize}

\section{Computing Hilbert distance in $\Delta^d$}\label{sec:dist}

Let us start with the simplest case: The 1D probability simplex $\Delta^1$,
the space of Bernoulli distributions.
Any Bernoulli distribution can be represented by the activation
probability of the random bit $x$: $\lambda=p(x=1)\in\Delta^1$,
corresponding to a point in the interval $\Delta^1=(0,1)$. We
write the Bernoulli manifold as an exponential family as
$$
p(x) = \exp\left(x \theta - F(\theta) \right),\quad{}x\in\{0,\,1\},
$$
where $F(\theta)=\log(1+\exp(\theta))$. Therefore
$\lambda=\frac{\exp(\theta)}{1+\exp(\theta)}$ and $\theta=\log\frac{\lambda}{1-\lambda}$.

\subsection{1D probability simplex of Bernoulli distributions}

By definition, the Hilbert distance has the closed form:
$$
\rhohg(p,q)
= \left\vert\log\frac{\lambda_q(1-\lambda_p)}{\lambda_p(1-\lambda_q)}\right\vert
= \left\vert\log\frac{\lambda_p}{1-\lambda_p}-\log\frac{\lambda_q}{1-\lambda_q}\right\vert.
$$
Note that $\theta_p=\log\frac{\lambda_p}{1-\lambda_p}$ is the canonical parameter of the Bernoulli distribution.

The FIM of the Bernoulli manifold in the $\lambda$-coordinates is given by:
$g = \frac{1}{\lambda} + \frac{1}{1-\lambda} = \frac{1}{\lambda(1-\lambda)}$.
The FHR distance is obtained by integration as:
$$
\rhofhr(p,q) = 2 \arccos\left(\sqrt{\lambda_p\lambda_q}+\sqrt{(1-\lambda_p)(1-\lambda_q)}\right).
$$
Notice that $\rhofhr(p,q)$ has finite values on $\partial\Delta^1$.

The KL divergence of the $\pm1$-geometry is:
$$
\rho_{\mathrm{IG}}(p,q) = \lambda_p\log\frac{\lambda_p}{\lambda_q} + (1-\lambda_p)\log\frac{1-\lambda_p}{1-\lambda_q}.
$$
The KL divergence belongs to the family of $\alpha$-divergences~\cite{IG-2016}.

\subsection{Arbitrary dimension case using Birkhoff's cone metric}

Instead of considering the Hilbert simplex metric on the $(d-1)$-dimensional simplex $\Delta_d$, we consider the
equivalent Hilbert simplex {\em projective} metric $\rho_{BG}$ defined on the cone $\bbR_{+,*}^d$.
We call $\rho_{\BG}$ the {\em Birkhoff metric}~\cite{Birkhoff-1957,Nussbaum-1988} since it measures the distance between
any two cone rays, say $\tilde{p}$ and $\tilde{q}$ of $\bbR_{+,*}^d$.
The Birkhoff metric~\cite{Birkhoff-1957} is defined by:
\begin{equation}\label{eq:bgposcone}
\rho_{\BG}(\tilde{p},\tilde{q}) \eqdef \log\max_{i,j\in [d]} \frac{\tilde{p}_i\tilde{q}_j}{\tilde{p}_j\tilde{q}_i},
\end{equation}
and is scale-invariant:
\begin{equation}
\rho_{\BG}(\alpha\tilde{p},\beta\tilde{q})=\rho_{\BG}(\tilde{p},\tilde{q}),
\end{equation}
for any $\alpha,\beta>0$.
We have $\rho_{BG}(\tilde{p},\tilde{q})=0$ if and only if $\tilde{p}=\lambda\tilde{q}$ for $\lambda>0$.
Thus we define {\em equivalence classes} $\tilde{p}\sim \tilde{q}$ when there exists $\lambda>0$ such that $\tilde{p}=\lambda\tilde{q}$.
The Birkhoff metric satisfies the triangle inequality and is therefore a {\em projective metric}~\cite{BusemannKelly}.
For a ray $\tilde{r}$ in $\bbR_{+,*}^d$, let $r$ denote the intersection of $\tilde{r}$ with the standard simplex $\Delta_d$.
Inhomogeneous vector $r\in\Delta_d$ is obtained by dehomogeneizing the ray vector so that $\sum_{i=1}^d r_i=1$.
For a ray $\tilde{r}$, let $\underline{\tilde{r}}=r$ denote the dehomogeneization.
Then we have:
\begin{equation}
\rho_{\BG}(\tilde{p},\tilde{q}) = \rho_{\HG}(p,q).
\end{equation}
The Birkhoff metric can be calculated using an equivalent norm-induced distance on the logarithmic representation:
\begin{equation}\label{eq:BG}
\rho_{\BG}(\tilde{p},\tilde{q}) = \|\log(\tilde{p}) - \log(\tilde{q}) \|_{\mathrm{var}},
\end{equation}
where $\|x\|_{\mathrm{var}}\eqdef (\max_i x_i)-(\min_i x_i)$ is the {\em variation norm}, and $\log(x)=(\log x_1,\ldots,\log x_d)$.
Notice that computing $\rho_{\BG}(p,q)= \|\log({p}) - \log({q}) \|_{\mathrm{var}}$ yields a simple linear-time algorithm to compute the Hilbert simplex distance.
One does not need to apply Algorithm~\ref{alg:distance} in~\cite{gsi-nielsen-2019} to compute $\rho_{\HG}$
but only need to use this closed formula, which is much simpler to compute with the same
complexity $O(d)$.

Figure~\ref{fig:HilbertBirkhoff} illustrates the relationship between the Hilbert simplex metric and the Birkhoff projective cone metric.
The Birkhoff projective cone metric allows one to define Hilbert cross-ratio metric on open unbounded convex domain as well~\cite{BH-2014}.

\begin{figure}
\centering

\includegraphics[width=0.5\textwidth]{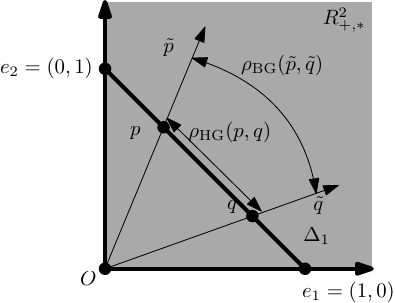}

\caption{Illustration of the equivalence between the Hilbert simplex metric $\rho_\HG$ and the Birkhoff  projective metric $\rho_\BG$ defined over the positive orthant.}\label{fig:HilbertBirkhoff}
\end{figure}

Once an arbitrary distance $\rho$ is chosen, we can define a ball centered
at $c$ and of radius $r$ as $B_\rho(c,r)=\{x\ :\ \rho(c,x)\leq r\}$.
Figure~\ref{fig:shape} displays the hexagonal shapes of the Hilbert balls for various center locations in $\Delta^2$.

\begin{theorem}[Balls in a simplicial Hilbert geometry~\cite{BH-2014}]
A ball in the Hilbert simplex geometry has a Euclidean polytope shape with $d(d+1)$ facets.
\end{theorem}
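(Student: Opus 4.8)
The plan is to extract the polytopal structure directly from the variation-norm formula~\eqref{eq:BG}. Fix a center $c\in\Delta^d$ and a radius $r>0$, and write the ball as $B=\{x\in\Delta^d : \rho_{\HG}(c,x)\le r\}$. By~\eqref{eq:BG} we have $\rho_{\HG}(c,x)=\|\log c-\log x\|_{\var}$, and since $\|z\|_{\var}=\max_k z_k-\min_k z_k$, the defining inequality $\rho_{\HG}(c,x)\le r$ is equivalent to requiring $z_i-z_j\le r$ for every \emph{ordered} pair of distinct indices $i\neq j$, where $z_k\eqdef\log c_k-\log x_k$. Among the $d+1$ coordinates of $\Delta^d$ there are exactly $(d+1)d=d(d+1)$ such ordered pairs, so this produces $d(d+1)$ candidate constraints.

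The key observation is that each such constraint is \emph{linear in $x$}, even though $x\mapsto\log x$ is not. Indeed, $z_i-z_j\le r$ unfolds to $c_ix_j\le e^{r}c_jx_i$, i.e. $c_ix_j-e^{r}c_jx_i\le0$; because $c$ and $r$ are fixed, this is a homogeneous linear inequality in $x$. Hence $B$ is the intersection, inside the affine hull $H_{\Delta^d}$, of $d(d+1)$ closed half-spaces, and being bounded it is a Euclidean polytope --- which already gives the polytope claim. I would also note that since $\rho_{\HG}(c,x)\to\infty$ as $x\to\partial\Delta^d$, the ball lies strictly inside the open simplex, so the facets of $\Delta^d$ itself never become active and contribute nothing to the count.

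It remains to show that none of these $d(d+1)$ half-spaces is redundant, so that the facet count is exactly $d(d+1)$ rather than smaller. For each ordered pair $(i,j)$ I would exhibit a point of $\partial B$ at which only that constraint is tight: in log-coordinates set $z_i=r$, $z_j=0$, and $z_k=r/2$ for the remaining indices, then pull back to $x_k\propto c_ke^{-z_k}$ (renormalized onto $\Delta^d$). At this point $z_i-z_j=r$ while every other difference is $0$ or $\pm r/2$, hence strictly below $r$; since the bijection $x\leftrightarrow z$ preserves which inequalities are active, the half-space $(i,j)$ meets $\partial B$ in a codimension-one face and is therefore facet-defining. Running this over all pairs certifies exactly $d(d+1)$ facets.

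The conceptual crux is the linearity observation of the second paragraph: fixing one endpoint of the Hilbert distance turns each $\log$-ratio constraint into a genuine affine constraint on $x$, which is precisely why the Hilbert ball is a Euclidean polytope. The quantitative crux --- and the feature special to the simplex --- is the non-redundancy step: for a general convex polytope domain the combinatorial complexity of Hilbert balls varies with the center (cf. Figure~\ref{fig:quad}), whereas for the simplex the variation-norm ball in $\mathbb{R}^{d+1}/\langle\mathbf{1}\rangle$ always presents the same $d(d+1)$ facets, forcing the count to be independent of both $c$ and $r$.
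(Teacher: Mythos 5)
Your proof is correct. Note that the paper itself offers no proof of this theorem: it is stated as a citation to~\cite{BH-2014}, and the closest in-house material is the isometry of Appendix~\ref{sec:isometry}, where the unit ball $B_V=\{v\in V^d \st |v^i-v^j|\leq 1,\ \forall i\neq j\}$ of the norm $\|\cdot\|_\NH$ is observed to have $2\binom{d+1}{2}=d(d+1)$ facets. Your route through Eq.~\ref{eq:BG} is essentially that isometry unwound in coordinates, but you carry out the step the paper leaves to the reference: the linearity observation that, with the center $c$ frozen, each constraint $\log\frac{c_ix_j}{c_jx_i}\leq r$ exponentiates to the linear inequality $c_ix_j-e^rc_jx_i\leq 0$, so the metric ball is a genuine Euclidean polytope inside $\Delta^d$ and not merely the nonlinear preimage of a polytope of $V^d$. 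Your non-redundancy construction ($z_i=r$, $z_j=0$, all remaining coordinates $r/2$) is also sound --- each of the $d(d+1)$ ordered pairs is the unique active constraint at the exhibited boundary point, hence facet-defining --- and it upgrades the claim from ``at most $d(d+1)$ facets'' to ``exactly $d(d+1)$ facets, for every center and radius'', which the bare citation leaves implicit; your closing contrast with non-simplicial polytopal domains (Figure~\ref{fig:quad}) is the right explanation of why this uniformity is special to the simplex.

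One step should be tightened. From ``$\rho_\HG(c,x)\leq r$ iff the $d(d+1)$ inequalities hold, for $x\in\Delta^d$'' you conclude that the ball $B$ \emph{is} the intersection $P$ of those half-spaces with the hyperplane $H_{\Delta^d}$; but a priori $P$ could contain points of $H_{\Delta^d}$ lying outside the simplex, and your blow-up remark controls $B$, not $P$. The repair is one line: if $x\in H_{\Delta^d}$ has a coordinate $x_i\leq 0$, pick $m$ with $x_m>0$ (possible since the coordinates sum to $1$); the constraint of the ordered pair $(i,m)$ reads $c_ix_m\leq e^r c_mx_i$, whose left-hand side is positive and right-hand side is non-positive, so $x\notin P$. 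Hence $P$ never leaves the open simplex, $B=P$ exactly, and your facet count stands.
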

Note that when the domain is not simplicial, the Hilbert balls can have varying combinatorial complexity depending on the center location.
In 2D, the Hilbert ball can have $s\sim{}2s$ edges inclusively, where $s$ is the number of edges of the boundary of the Hilbert domain $\partial\calC$.

Since a Riemannian geometry is locally defined by a metric tensor,
at infinitesimal scales, Riemannian balls have Mahalanobis smooth ellipsoidal shapes:
$B_\rho(c,r)=\{x\ :\ (x-c)^\top g(c) (x-c)\leq r^2\}$.
This property allows one to visualize Riemannian metric tensors~\cite{VizTensor-2009}. Thus we conclude that:
\begin{lemma}[\cite{BH-2014}]
Hilbert simplex geometry is a non-manifold metric length space.
\end{lemma}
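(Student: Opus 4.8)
The plan is to split the lemma into its two assertions---that the Hilbert simplex geometry is (i) a metric length space and (ii) not a smooth manifold---and to settle each using material already assembled above. For (i) the cleanest route avoids manipulating cross-ratios directly and instead exploits the Birkhoff representation. By the identities $\rho_\HG(p,q)=\rho_\BG(\tilde p,\tilde q)$ and $\rho_\BG(\tilde p,\tilde q)=\|\log(\tilde p)-\log(\tilde q)\|_\var$ from~\eqref{eq:BG}, the map $p\mapsto\log(p)$ is an isometry of $(\Delta^d,\rho_\HG)$ into $(\bbR^{d+1},\|\cdot\|_\var)$. First I would observe that $\|\cdot\|_\var$ is only a \emph{seminorm} on $\bbR^{d+1}$, vanishing exactly on the constant vectors $\bbR\mathbf{1}$ (which is precisely the scale invariance $\rho_\BG(\alpha\tilde p,\beta\tilde q)=\rho_\BG(\tilde p,\tilde q)$); it therefore descends to a genuine norm on the quotient $\bbR^{d+1}/\bbR\mathbf{1}$. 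Since every class admits a unique representative $x$ with $\sum_i e^{x_i}=1$, the map $p\mapsto[\log(p)]$ is a \emph{bijective} isometry onto the full finite-dimensional normed space $(\bbR^{d+1}/\bbR\mathbf{1},\|\cdot\|_\var)$.

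From this (i) is immediate. A finite-dimensional normed vector space is a geodesic length space: for any $x,y$ the straight segment $t\mapsto(1-t)x+ty$ has length exactly $\|x-y\|$, while every rectifiable path from $x$ to $y$ has length at least $\|x-y\|$ by applying the triangle inequality to polygonal approximations, so $\rho(x,y)=\inf_\gamma\mathrm{length}(\gamma)$. Transporting this back through the isometry shows $(\Delta^d,\rho_\HG)$ is a metric length space, and simultaneously that all the metric axioms (symmetry, triangle inequality, and $\rho_\HG(p,q)=0\iff p=q$) are inherited from the norm axioms.

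For (ii) I would first fix the reading of ``non-manifold'' as ``not the metric structure of a smooth Riemannian (nor smooth Finsler) manifold,'' i.e.\ not generated by a smooth metric tensor. The argument is then a shape obstruction at infinitesimal scale. By the Theorem on balls in simplicial Hilbert geometry, every ball $B_{\rho_\HG}(c,r)$ is a Euclidean polytope with $d(d+1)$ facets whose shape is independent of the radius $r$, so the infinitesimal balls obtained as $r\to0$ remain polytopes (hexagons when $d=2$). Were $\rho_\HG$ induced by a metric tensor $g$, infinitesimal balls would instead approach the Mahalanobis ellipsoids $\{x:(x-c)^\top g(c)(x-c)\le r^2\}$, which are smooth and strictly convex; a polytope with flat facets and vertices is neither, so no such $g$ can exist. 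The same polytopal indicatrix rules out a smooth Finsler tensor, whose unit tangent ball must be $C^\infty$ and strongly convex---consistent with the Appendix fact that a Hilbert geometry is Riemannian exactly when $\partial\calC$ is an ellipsoid, whereas $\partial\Delta^d$ is a polytope.

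The main obstacle is to make the non-manifold claim genuinely coordinate-free rather than pictorial: I must argue that the polytopal infinitesimal ball is an intrinsic invariant, not an artifact of the chosen chart. I would handle this by noting that the local metric determines the unit tangent ball intrinsically as the Gromov--Hausdorff limit of the metric balls $B_{\rho_\HG}(c,r)$ rescaled by $1/r$; since this limit is a polytope at every $c$, and a Riemannian (resp.\ smooth Finsler) structure would force a smooth ellipsoidal (resp.\ strongly convex) limit, the polytopal indicatrix simultaneously excludes both, completing the proof.
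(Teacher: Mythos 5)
Your proposal is correct, and its second half coincides with the paper's own argument: the paper derives the ``non-manifold'' claim exactly as you do, by contrasting the polytopal shape of Hilbert balls (Euclidean polytopes with $d(d+1)$ facets, radius-independent shape) with the infinitesimally ellipsoidal Mahalanobis balls $\{x : (x-c)^\top g(c)(x-c)\leq r^2\}$ that any Riemannian metric tensor would force; your Gromov--Hausdorff rescaling remark just makes that contrast chart-independent, which is a welcome tightening of what is otherwise a pictorial step. Where you genuinely go beyond the paper is the first half: the paper does not prove the metric length-space property at all, but delegates it to the citation~\cite{BH-2014}, whereas you derive it from scratch by observing that $\|\cdot\|_{\var}$ is a seminorm on $\bbR^{d+1}$ with kernel $\bbR\mathbf{1}$, that it descends to a genuine norm on the quotient $\bbR^{d+1}/\bbR\mathbf{1}$, and that $p\mapsto[\log p]$ is a surjective isometry of $(\Delta^d,\rho_\HG)$ onto this finite-dimensional normed space, whose straight segments realize the distance. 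This is essentially a reconstruction of the isometry $(\Delta^d,\rho_\HG)\cong(V^d,\|\cdot\|_\NH)$ that the paper recalls in Appendix~\ref{sec:isometry} (your quotient norm is the paper's polytope norm, since $\|v\|_{\var}=\max_{i,j}|v^i-v^j|$ reproduces the unit ball $B_V$), but deployed where the paper only cites; it buys a self-contained verification of the metric axioms (positive definiteness follows since $\log p-\log q$ constant forces $p=q$ on the simplex) and of the geodesic property, at the cost of some redundancy with the appendix. No gaps.
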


As a remark, let us notice that slicing a simplex with a hyperplane does not always produce a lower-dimensional simplex.
For example, slicing a tetrahedron by a plane yields either a triangle or a quadrilateral.
Thus the restriction of a $d$-dimensional ball $B$ in a Hilbert simplex geometry $\Delta^d$ to a hyperplane $H$ is a $(d-1)$-dimensional ball $B'=B\cap H$  of varying combinatorial complexity, corresponding to a ball in the induced Hilbert sub-geometry in the convex sub-domain $H\cap\Delta^d$.

\subsection{Hilbert simplex distance satisfies the information monotonicity \label{sec:hsgim}}

Let us prove that the Hilbert simplex metric satisfies the property of information monotonicity.
Consider a bounded linear positive operator  $M:\bbR_+^d \rightarrow \bbR_+^{d'}$ with $d'<d$ and $M_{ij}>0$ which encodes the coarse-binning scheme of the standard simplex $\Delta_d$ into the standard simplex $\Delta_{d'}$.
A $d$-dimensional positive measure $\tilde{r}$ is mapped into a $d'$-dimensional positive measure $\tilde{p}'$
such that $\tilde{p}'=M\tilde{p}$.
Matrix $M$ is a $d'\times d$ (fat) positive matrix that is column-stochastic (i.e., elements of each column sum up to one).

Birhkoff~\cite{Birkhoff-1957} and Samelson~\cite{Samelson-1957} independently proved (1957) that:
\begin{equation}
\rho_{\BG}(M\tilde{p},M\tilde{q}) \leq \kappa(M)\rho_{\BG}(\tilde{p},\tilde{q}),
\end{equation}
where $\kappa(M)$ is the {\em contraction ratio} of the binning scheme.
Furthermore, it can be shown that:
\begin{equation}
\kappa(M) \eqdef \frac{\sqrt{a(M)}-1}{\sqrt{a(M)}+1}<1,
\end{equation}
with
\begin{equation}
a(M)\eqdef \max_{i,j,k,l} \frac{M_{i,k}M_{j,l}}{M_{j,k}M_{i,l}}<\infty.
\end{equation}

Thus coarse-binning the probability simplex $\Delta_d$ to $\Delta_{d'}$ makes a {\em strict contraction} of the distance between distinct elements.

\begin{theorem}\label{thm:hsgim}
The Hilbert simplex metric is a non-separable distance satisfying the property of information monotonicity.
\end{theorem}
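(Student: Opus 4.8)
The plan is to derive information monotonicity directly from the Birkhoff--Samelson contraction inequality quoted above, using the isometric identification $\rho_{\HG}(p,q)=\rho_{\BG}(\tp,\tq)$ between the Hilbert simplex metric and the Birkhoff projective cone metric, and to obtain non-separability by inspecting the variation-norm formula~\eqref{eq:BG}.

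First I would pin down the coarse-graining map at the level of cone representatives. Merging the $d$ bins of $\Delta_d$ into the $d'<d$ bins of $\Delta_{d'}$ is exactly the linear action $\tp\mapsto M\tp$, where $M$ is the positive, column-stochastic $d'\times d$ matrix of the binning scheme. Column-stochasticity ensures $M\tp$ is again a positive probability vector, so its dehomogeneization $p'=\underline{M\tp}\in\Delta_{d'}$ is the coarse-grained multinomial, and similarly $q'=\underline{M\tq}$. Since $\rho_{\BG}$ is scale-invariant, the Birkhoff metric does not see the choice of representative; hence $\rho_{\HG}(p',q')=\rho_{\BG}(M\tp,M\tq)$ and $\rho_{\HG}(p,q)=\rho_{\BG}(\tp,\tq)$.

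Next I would feed these into the contraction bound. Positivity of $M$ gives $a(M)<\infty$, hence $\kappa(M)<1$, and therefore
\begin{equation*}
0\le\rho_{\BG}(M\tp,M\tq)\le\kappa(M)\,\rho_{\BG}(\tp,\tq)\le\rho_{\BG}(\tp,\tq).
\end{equation*}
Reading this through the simplex--cone identification yields $0\le\rho_{\HG}(p',q')\le\rho_{\HG}(p,q)$, which is precisely the information monotonicity property; the inequality is in fact strict whenever $p\not\sim q$, recovering the announced \emph{strict} contraction between distinct multinomials.

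Finally, for non-separability I would argue from~\eqref{eq:BG}. Writing $y_i=\log\tp_i-\log\tq_i$, the distance equals $\max_i y_i-\min_i y_i$, a functional that couples all coordinates through a single maximum and minimum. To rule out any representation $\sum_i h(\lambda_p^i,\lambda_q^i)$ I would use the fact that every additively separable function has vanishing mixed second difference, $F(y+s e_i+t e_j)-F(y+s e_i)-F(y+t e_j)+F(y)=0$, whereas for $F(y)=\max_i y_i-\min_i y_i$ a one-line evaluation (e.g.\ at $y=0$, $i\ne j$, $s=t=1$, giving value $-1$) shows this difference is nonzero; translating the additive shifts $s e_i$ back to multiplicative rescalings of the cone representative $\tp_i$ (legitimate by scale invariance) transfers the obstruction to the $(\lambda_p,\lambda_q)$ coordinates. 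Because the heavy analytic lifting is already packaged into the quoted Birkhoff--Samelson theorem, I expect the routine checks (positivity preserved, $\kappa(M)<1$) to be painless, so the only genuinely delicate points are the bookkeeping that the cone-level contraction descends correctly to the simplex level, resolved cleanly by scale invariance of $\rho_{\BG}$, and the clean formalization of the negative statement of non-separability.
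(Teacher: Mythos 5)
Your proposal is correct and follows essentially the same route as the paper: the coarse-graining is encoded by a positive column-stochastic matrix $M$, the Birkhoff--Samelson contraction theorem yields $\rho_{\BG}(M\tp,M\tq)\le\kappa(M)\,\rho_{\BG}(\tp,\tq)$ with $\kappa(M)<1$, and the scale-invariance of $\rho_{\BG}$ descends this to $0\le\rho_{\HG}(p',q')\le\rho_{\HG}(p,q)$, exactly as in the paper's argument. On non-separability you in fact give more than the paper (which merely asserts it in a remark); the only caveat is that your mixed-second-difference obstruction lives in log-cone coordinates, and perturbing a single cone coordinate $\tp_i$ changes \emph{all} normalized simplex coordinates after dehomogeneization, so ruling out a representation $\sum_i h(\lambda_p^i,\lambda_q^i)$ rigorously requires handling the simplex constraint explicitly (e.g., a region-by-region mixed-partial computation of $\max_i\log(\lambda_p^i/\lambda_q^i)-\min_i\log(\lambda_p^i/\lambda_q^i)$), a refinement the paper itself does not attempt either.
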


$\rho_{\IG}$ is invariant by permutations (i.e., a separable divergence) but not $\rho_{\HG}$  (i.e., a non-separable distance).
Notice that this proof\footnote{Birkhoff's proof~\cite{Birkhoff-1957} is more general and works for any projective distance defined over a cone (so-called Hilbert projective distances) by defining the notion of the projective diameter of a positive linear operator.} is based on a contraction theorem of a bounded linear positive operator,
and that therefore the information monotonicity property may be rewritten equivalently as a {\em contraction} property.

In information geometry, it is known that the class of separable divergences satisfying the information monotonicity are $f$-divergences
 when $d>2$, see~\cite{IG-2016} (see~\cite{Jiao-2014} for the special case $d=2$ corresponding to binary alphabets).
It is an open problem to fully characterize the class of {\em non-separable distances} that fulfills the property of information monotonicity.
For example, the Aitchison distance~\cite{Aitchison-1982,Aitchison-2006} between $p$ and $q$ is often used in Compositional Data Analysis~\cite{pawlowsky2011compositional} (CoDA).
The  Aitchison distance is also a non-separable distance in the probability simplex defined as follows:
\begin{equation}\label{eq:AD}
\rho_{\mathrm{Aitchison}}(p,q)=\sqrt{\sum_{i=1}^d \left( \log \frac{p^i}{G(p)}-\log \frac{q^i}{G(q)} \right)^2},
\end{equation}
where $G(p)$ denotes the geometric mean of the coordinates of $p\in\Delta_d$:
$$
G(p)=\left(\prod_{i=1}^d p^i\right)^{\frac{1}{d}}=\exp(\frac{1}{d}\sum_{i=1}^d \log p^i).
$$
The Aitchison distance satisfies the monotonicity property~\cite{CODA-2021}.
The Fisher-Rao distance or the Cauchy-Schwarz divergence are non-separable distances which do not satisfy the monotonicity property.

Information geometry does not consider any {\em ground metric} on the support (i.e., there is no neighborhood structure on the sample space~\cite{AmariOT-2017}).
This is to constrast with  Optimal Transport~\cite{Villani-2008} (OT) that necessarily considers a ground metric.

\subsection{Visualizing distance profiles}
Figure~\ref{vis:distance} displays the distance profile from any point in the probability simplex
to a fixed reference point (trinomial) based on the following common distance measures~\cite{IG-2014}:
Euclidean (metric) distance, Cauchy-Schwarz (CS) divergence,
Hellinger (metric) distance, Fisher-Rao (metric) distance,
KL divergence, and Hilbert simplicial (metric) distance.
The Euclidean and Cauchy-Schwarz divergence are clipped to $\Delta^2$.
The Cauchy-Schwarz distance is projective so that
$\rho_\CS(\lambda p,\lambda' q)=\rho_\CS(p,q)$ for any $\lambda,\lambda'>0$~\cite{holder}.

\begin{figure}
\centering
\begin{subfigure}{\textwidth}
\includegraphics[width=\textwidth]{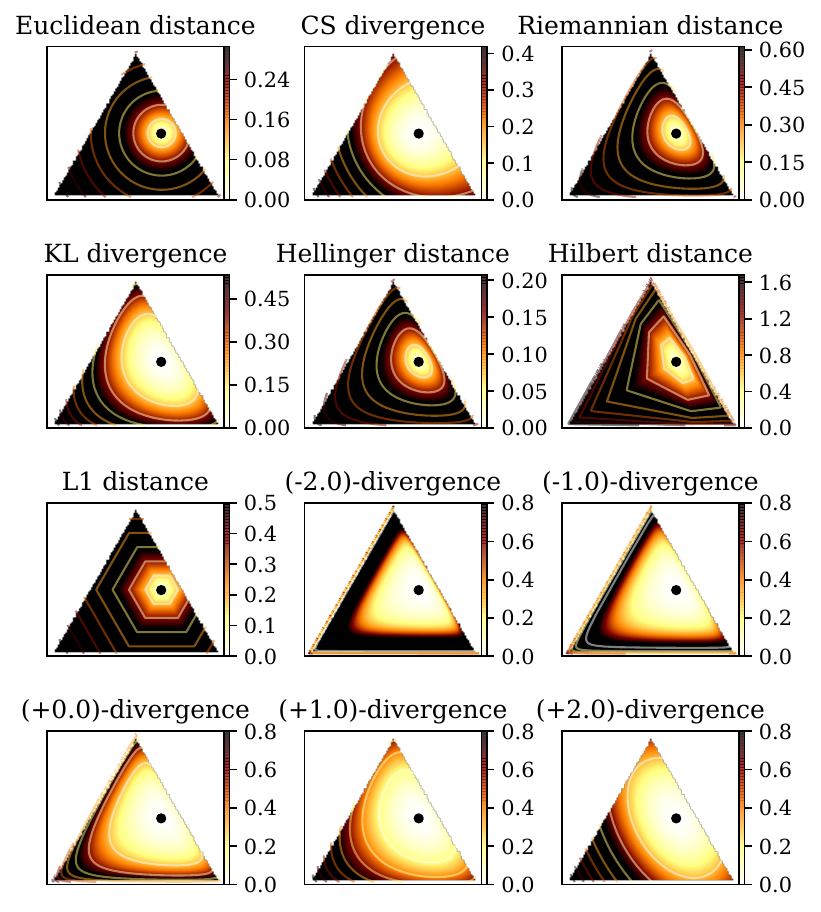}
\caption{Reference point (3/7,3/7,1/7)}
\end{subfigure}
\end{figure}

\begin{figure}\ContinuedFloat
\centering
\begin{subfigure}{\textwidth}
\includegraphics[width=\textwidth]{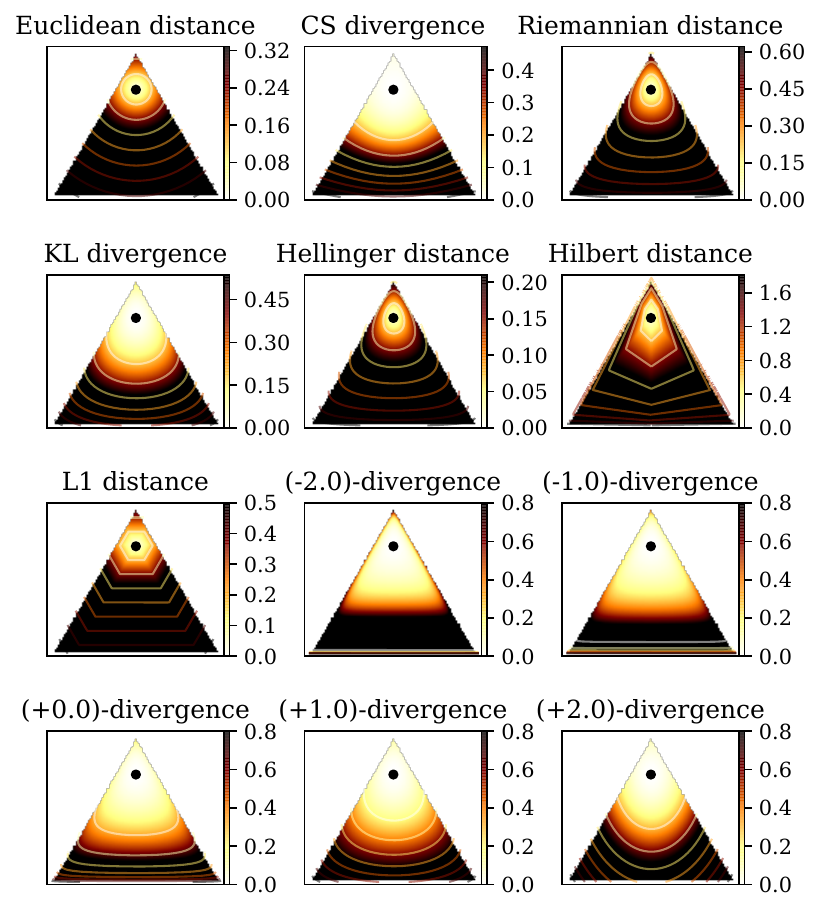}
\caption{Reference point (5/7,1/7,1/7)}
\end{subfigure}
\caption{A comparison of different distance measures on $\Delta^2$. The distance is measured from
$\forall{p}\in\Delta^2$ to a fixed reference point (the black dot). Lighter color means a shorter
distance. Darker color means longer distance. The contours show equal distance curves with a precision step of $0.2$.\label{vis:distance}}
\end{figure}

\section{Center-based clustering}\label{sec:clustering}

We concentrate on comparing the efficiency of Hilbert simplex geometry for clustering multinomials.
We shall compare the experimental results of $k$-means++ and $k$-center multinomial clustering for
the three distances: Rao and Hilbert metric distances, and KL divergence.
We describe how to adapt those clustering algorithms to the Hilbert distance.

\subsection{$k$-means++ clustering}

The celebrated $k$-means clustering~\cite{nn-kmeans-2014} minimizes the sum of within-cluster variances,
where each cluster has a center representative element.
When dealing with $k=1$ cluster, the center (also called centroid or cluster prototype)
is the center of mass defined as the minimizer of
$$
E_D(\Lambda,c) = \frac{1}{n} \sum_{i=1}^n D(p_i:c),
$$
where $D(\cdot:\cdot)$ is a dissimilarity measure.
For an arbitrary $D$, the centroid $c$ may not be available in closed form.
Nevertheless, using a generalization of the $k$-means++ initialization~\cite{kmeanspp-2007}
(picking randomly seeds), one can bypass the centroid computation,
and yet guarantee probabilistically a good clustering.

Let $C=\{c_1,\ldots, c_k\}$ denote the set of $k$ cluster centers.
Then the generalized $k$-means energy to be minimized is defined by:
$$
E_D(\Lambda,C) = \frac{1}{n}\sum_{i=1}^n \min_{j\in\{1,\ldots,k\}} D(p_i:c_j).
$$
By defining the distance $D(p,C)=\min_{j\in\{1,\ldots,k\}} D(p:c_j)$ of a point to a set,
we can rewrite the objective function as $E_D(\Lambda,C) = \frac{1}{n}\sum_{i=1}^n D(p_i,C)$.
Let $E_D^*(\Lambda,k)=\min_{C\ :\ |C|=k} E_D(\Lambda,C) $ denote the global minimum of
$E_D(\Lambda,C)$ wrt some given $\Lambda$ and $k$.

The $k$-means++ seeding proceeds for an arbitrary divergence $D$ as follows:
Pick uniformly at random at first seed $c_1$, and then iteratively choose the
$(k-1)$ remaining seeds according to the following probability distribution:
$$
\mathrm{Pr}(c_j=p_i) =
\frac{D(p_i,\{c_1,\ldots,c_{j-1}\})}
{\sum_{i=1}^n D(p_i,\{c_1,\ldots,c_{j-1}\})}
\quad(2\le{j}\le{k}).
$$
Since its inception (2007), this $k$-means++ seeding has been extensively studied~\cite{Bachem-2016}.
We state the general theorem established by~\cite{tJ-2013}:

\begin{theorem}[Generalized $k$-means++ performance, \cite{tJ-2013}]\label{thm:kmeansplus}
Let $\kappa_1$ and $\kappa_2$ be two constants such that $\kappa_1$ defines the
quasi-triangular inequality property:
$$
D(x:z) \leq \kappa_1 \left(D(x:y)+D(y:z)\right),\quad\forall{}x,y,z\in\Delta^d,
$$
and $\kappa_2$ handles the symmetry inequality:
$$
D(x:y)\leq \kappa_2 D(y:x),\quad\forall x,y\in\Delta^d.
$$
Then the generalized $k$-means++ seeding guarantees with high probability a configuration $C$ of cluster centers such that:
\begin{equation}\label{eq:kmperf}
E_D(\Lambda,C)\leq 2\kappa_1^2(1+\kappa_2)(2+\log k) E_D^*(\Lambda,k).
\end{equation}
\end{theorem}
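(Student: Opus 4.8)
The plan is to adapt the inductive analysis of \cite{kmeanspp-2007} to a general divergence $D$ obeying only the two relaxed axioms, so that the numerical constants of the squared-Euclidean argument get replaced by expressions in $\kappa_1$ and $\kappa_2$. The whole argument rests on two per-cluster sampling lemmas, followed by a potential argument over the optimal clustering induced by the global minimizer attaining $E_D^*(\Lambda,k)$.

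First I would establish the \emph{uniform-seeding lemma}: if the very first center $c_1$ is drawn uniformly at random from the points of one optimal cluster $A$ with optimal representative $c_A$, then the expected cost $E_{c_1}[\sum_{p\in A} D(p:c_1)]$ is at most $\kappa_1(1+\kappa_2)$ times the optimal cost $\sum_{p\in A} D(p:c_A)$ of $A$. The derivation routes each point through $c_A$ via the quasi-triangular inequality, $D(p:c_1)\le \kappa_1(D(p:c_A)+D(c_A:c_1))$, sums over $p\in A$, and then uses the symmetry axiom to replace $D(c_A:c_1)$ by $\kappa_2 D(c_1:c_A)$ before averaging over the uniform choice of $c_1$; the two resulting terms collapse to the factor $\kappa_1(1+\kappa_2)$.

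Second, and this is the main obstacle, I would prove the \emph{$D^2$-seeding lemma}: given an already-chosen set $C$ of centers and an optimal cluster $A$ that is still uncovered, if a new center $a_0$ is sampled from $A$ with probability proportional to its current distance $D(a_0,C)$ to $C$, then the expected cost of $A$ after adding $a_0$ is at most $2\kappa_1^2(1+\kappa_2)$ times the optimal cost of $A$. The difficulty is that the $D^2$ weighting couples the sampled point $a_0$ to the rest of $A$ through distances to $C$ that must be re-expressed in terms of intra-cluster distances: one applies the quasi-triangular inequality once to bound $D(a_0,C)$ by $\kappa_1(D(a_0:p)+D(p,C))$ for an arbitrary $p\in A$ (so that the sampling ratio $D(a_0,C)/\sum_a D(a,C)$ can be controlled on average), and a second time to bound the new contribution $D(p:a_0)$ through the optimal center, which is where the square $\kappa_1^2$ appears. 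Because the sampling order and the triangle-routing order of the arguments of $D$ disagree, the asymmetry must be absorbed by $\kappa_2$ at each such swap; keeping the bookkeeping of which argument is the ``left'' and which is the ``right'' one consistent throughout is the genuinely delicate part of the proof, since $D$ is not assumed symmetric.

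Finally I would run the potential argument. One partitions the run of the seeding into optimal clusters that are already \emph{covered} (contain a previously chosen seed) and those still \emph{uncovered}. Using the two lemmas to bound the expected cost added whenever a seed lands in an uncovered cluster, and bounding by the current potential the probability of instead wasting a seed on a covered cluster, an induction on the number of remaining uncovered clusters produces a harmonic sum $\sum_j 1/j \le 1+\ln k$. Combining this logarithmic factor with the worst of the two per-cluster constants, namely $2\kappa_1^2(1+\kappa_2)$ from the $D^2$ step, and collecting the additive constant contributed by the initial uniformly chosen seed, yields the claimed bound $E_D(\Lambda,C)\le 2\kappa_1^2(1+\kappa_2)(2+\log k)\,E_D^*(\Lambda,k)$. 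I expect the symmetry constant $\kappa_2$ to enter only linearly, through the factor $(1+\kappa_2)$, while $\kappa_1$ enters quadratically, precisely because the quasi-triangular inequality is invoked twice in the $D^2$ step but the symmetry axiom only once per argument swap.
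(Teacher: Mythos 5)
The paper does not prove this theorem: it is imported verbatim from the cited reference \cite{tJ-2013}, so there is no in-paper proof to compare against. Your reconstruction follows the standard generalized Arthur--Vassilvitskii route, which is exactly the argument of the cited source, and the constants check out: the uniform-seeding lemma gives $\kappa_1(1+\kappa_2)$, the $D$-weighted seeding lemma gives $2\kappa_1^2(1+\kappa_2)$ (one $\kappa_1$ to split $D(a_0,C)\le\kappa_1(D(a_0{:}p)+D(p,C))$, a second $\kappa_1$ to route the leftover pairwise terms through the optimal cluster center, with $\kappa_2$ absorbing each argument swap), and the covered/uncovered induction contributes the $(2+\log k)$ factor. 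With $\kappa_1=2$, $\kappa_2=1$ this recovers the $16(2+\log k)$ bound the paper derives in its corollaries, confirming your bookkeeping is consistent with how the theorem is used downstream.
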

The ratio $\frac{E_D(\Lambda,C)}{E_D^*(\Lambda,k)}$ is called the {\em competitive factor}.
The seminal result of ordinary $k$-means++ was shown~\cite{kmeanspp-2007} to be $8(2+\log{k})$-competitive.
When evaluating $\kappa_1$, one has to note that squared metric distances are not metric because they do not satisfy the triangular inequality.
For example, the squared Euclidean distance is not a metric but it satisfies the $2$-quasi-triangular inequality with $\kappa_1=2$.

We state the following general performance theorem:
\begin{theorem}[$k$-means++ performance in a metric space]\label{theo:kmms}
In any metric space $(\calX,d)$,
the $k$-means++ wrt the squared metric distance $d^2$ is $16(2+\log k)$-competitive.
\end{theorem}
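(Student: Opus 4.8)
The plan is to apply the generalized $k$-means++ performance bound, Theorem~\ref{thm:kmeansplus}, to the dissimilarity $D=d^2$, the squared metric distance on $\calX$. The only work needed is to identify the two structural constants for this $D$: the symmetry constant $\kappa_2$ and the quasi-triangular constant $\kappa_1$. Once these are in hand, I substitute them into the competitive factor $2\kappa_1^2(1+\kappa_2)(2+\log k)$ and read off the result.

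For the symmetry constant, since $d$ is a metric and hence symmetric, we have $d^2(x,y)=d^2(y,x)$ for all $x,y\in\calX$; thus the inequality $D(x:y)\le\kappa_2 D(y:x)$ holds with $\kappa_2=1$. For the quasi-triangular constant, I start from the triangle inequality $d(x,z)\le d(x,y)+d(y,z)$ and square it to obtain $d^2(x,z)\le d^2(x,y)+2\,d(x,y)\,d(y,z)+d^2(y,z)$. Bounding the cross term by the arithmetic-geometric mean inequality $2ab\le a^2+b^2$ gives $d^2(x,z)\le 2\left(d^2(x,y)+d^2(y,z)\right)$, so the quasi-triangular inequality holds with $\kappa_1=2$.

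Substituting $\kappa_1=2$ and $\kappa_2=1$ into the competitive factor yields $2\cdot 2^2\cdot(1+1)\cdot(2+\log k)=16(2+\log k)$, exactly the claimed bound. There is essentially no obstacle here: the single nontrivial observation is that squaring a metric breaks the triangle inequality but restores it up to the multiplicative factor $2$ --- the same $2$-quasi-triangular phenomenon already noted for the squared Euclidean distance in the discussion preceding the statement. The argument is otherwise a direct specialization of Theorem~\ref{thm:kmeansplus}.
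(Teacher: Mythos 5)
Your proof is correct and follows essentially the same route as the paper: establish $\kappa_2=1$ from symmetry, derive $\kappa_1=2$ by squaring the triangle inequality and bounding the cross term via the arithmetic-geometric mean inequality, then substitute into the competitive factor of Theorem~\ref{thm:kmeansplus}. The only (cosmetic) difference is that you state the final substitution $2\cdot 2^2\cdot(1+1)\cdot(2+\log k)=16(2+\log k)$ explicitly, which the paper leaves as ``straightforward.''
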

\begin{proof}
Since a metric distance is symmetric, it follows that $\kappa_2=1$.
Consider the quasi-triangular inequality property for the squared non-metric dissimilarity $d^2$:
\begin{eqnarray*}
d(p,q) \leq d(p,q)+d(q,r),\\
d^2(p,q) \leq (d(p,q)+d(q,r))^2,\\
d^2(p,q) \leq d^2(p,q)+d^2(q,r)+2d(p,q)d(q,r).
\end{eqnarray*}

Let us apply the inequality of arithmetic and geometric means\footnote{For positive values $a$ and $b$,
the arithmetic-geometric mean inequality states that $\sqrt{ab}\leq\frac{a+b}{2}$.}:
\begin{equation*}
\sqrt{d^2(p,q)d^2(q,r)} \leq \frac{d^2(p,q)+d^2(q,r)}{2}.
\end{equation*}

Thus we have
\begin{equation*}
d^2(p,q) \leq d^2(p,q)+d^2(q,r)+2d(p,q)d(q,r)\leq 2(d^2(p,q)+d^2(q,r)).
\end{equation*}
That is, the squared metric distance satisfies the $2$-approximate triangle inequality,
and $\kappa_1=2$. The result is straightforward from Theorem~\ref{thm:kmeansplus}.
\end{proof}

\begin{theorem}[$k$-means++ performance in a normed space]\label{theo:kmns}
In any normed space $(\calX,\Vert\cdot\Vert)$,
the $k$-means++ with $D(x:y)=\Vert{x-y}\Vert^2$ is $16(2+\log k)$-competitive.
\end{theorem}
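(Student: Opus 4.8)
The plan is to reduce this statement to the metric-space result just established in Theorem~\ref{theo:kmms}. The key observation is that any norm induces a bona fide metric, so the normed case is a specialization rather than a genuinely new situation; essentially all of the work has already been done.

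First I would define $d(x,y) = \|x-y\|$ on $\calX$ and verify that $(\calX, d)$ is a metric space. Symmetry follows from the norm identity $\|{-}v\| = \|v\|$ (homogeneity with scalar $-1$); the triangle inequality $d(x,z) \leq d(x,y) + d(y,z)$ is exactly the subadditivity $\|x-z\| = \|(x-y)+(y-z)\| \leq \|x-y\| + \|y-z\|$; and $d(x,y) = 0 \iff x = y$ is the positive-definiteness of the norm. I would then note that the clustering dissimilarity coincides with the squared induced metric, namely $D(x:y) = \|x-y\|^2 = d(x,y)^2$, so that running $k$-means++ with $D$ in the normed space is literally the same procedure as running it with $d^2$ in the metric space $(\calX,d)$.

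Finally I would invoke Theorem~\ref{theo:kmms} verbatim: since $d$ is a metric, its square $d^2$ satisfies the $2$-quasi-triangular inequality, so $\kappa_1 = 2$, and symmetry of $d$ gives $\kappa_2 = 1$. Substituting into the competitive bound of Theorem~\ref{thm:kmeansplus} then produces the factor $2\kappa_1^2(1+\kappa_2)(2+\log k) = 2\cdot 4\cdot 2\cdot(2+\log k) = 16(2+\log k)$, as claimed.

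I do not anticipate any real obstacle: the analytic content is entirely contained in Theorem~\ref{theo:kmms}, and the statement here is a clean corollary. The only point demanding a moment's care is checking that the norm axioms genuinely supply \emph{all three} metric axioms --- in particular positive-definiteness, which is precisely where the hypothesis of a \emph{norm} (rather than a mere seminorm) is used to guarantee $d(x,y)=0 \Rightarrow x=y$.
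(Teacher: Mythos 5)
Your proposal is correct and matches the paper's intent: the paper's own proof simply notes that a norm is symmetric and satisfies the triangle inequality, then states that the argument is ``very similar to the proof of Theorem~\ref{theo:kmms} and is omitted,'' which is exactly the reduction you carry out explicitly (norm $\Rightarrow$ induced metric $\Rightarrow$ apply Theorem~\ref{theo:kmms} with $\kappa_1=2$, $\kappa_2=1$). Your version is in fact the cleanest formalization of what the paper leaves implicit, including the correct observation that positive-definiteness of the norm is what guarantees the induced $d$ is a genuine metric.
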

\begin{proof}
In any normed space $(\calX,\Vert\cdot\Vert)$,
we have both $\Vert{x-y}\Vert=\Vert{y-x}\Vert$
and the triangle inequality:
$$
\Vert{}x-z\Vert\le\Vert{x-y}\Vert+\Vert{y-z}\Vert.
$$
The proof is very similar to the proof of Theorem~\ref{theo:kmms} and is omitted.




\end{proof}

Since any inner product space $(\calX,\inner{\cdot}{\cdot})$ has an induced norm
$\Vert{x}\Vert=\sqrt{\inner{x}{x}}$, we have the following corollary.
\begin{corollary}
In any inner product space $(\calX,\inner{\cdot}{\cdot})$,
the $k$-means++ with $D(x:y)=\inner{x-y}{x-y}$ is $16(2+\log k)$-competitive.
\end{corollary}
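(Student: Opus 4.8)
The plan is to reduce the statement entirely to Theorem~\ref{theo:kmns}, the normed-space performance bound, by exhibiting the inner product as inducing a norm for which the prescribed dissimilarity is exactly the squared distance. First I would set $\|x\|\eqdef\sqrt{\inner{x}{x}}$, the norm canonically induced by the inner product. With this choice the dissimilarity rewrites as $D(x:y)=\inner{x-y}{x-y}=\|x-y\|^2$, which is precisely the squared-norm dissimilarity appearing in the hypothesis of Theorem~\ref{theo:kmns}. Once this identification is made, the competitive factor $16(2+\log k)$ is inherited verbatim, so the entire content of the corollary is a matter of recognizing that an inner product space is, in particular, a normed space.

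The only substantive point to verify is that $\|\cdot\|$ genuinely qualifies as a norm, so that Theorem~\ref{theo:kmns} is applicable. Positive definiteness and absolute homogeneity are immediate from the bilinearity and positive-definiteness of $\inner{\cdot}{\cdot}$. The triangle inequality $\|x+y\|\le\|x\|+\|y\|$ follows by expanding $\|x+y\|^2=\|x\|^2+2\inner{x}{y}+\|y\|^2$ and bounding the cross term via the Cauchy--Schwarz inequality $|\inner{x}{y}|\le\|x\|\,\|y\|$, which yields $\|x+y\|^2\le(\|x\|+\|y\|)^2$. This is the single place where a genuinely nontrivial inequality enters, and I expect it to be the main obstacle, although it is entirely standard.

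Having established that $(\calX,\|\cdot\|)$ is a normed space and that $D$ is its squared-norm dissimilarity, I would simply invoke Theorem~\ref{theo:kmns} to conclude that $k$-means++ is $16(2+\log k)$-competitive, completing the proof. In effect the corollary requires no new argument beyond the observation that inner product spaces sit inside the broader class of normed spaces already handled, so the proof is a direct specialization rather than an independent derivation.
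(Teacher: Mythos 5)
Your proposal is correct and follows exactly the paper's route: the paper derives this corollary in one line by observing that the inner product induces the norm $\Vert x\Vert=\sqrt{\inner{x}{x}}$, so that $D(x:y)=\Vert x-y\Vert^2$ and Theorem~\ref{theo:kmns} applies verbatim. Your additional verification that the induced $\Vert\cdot\Vert$ satisfies the norm axioms (via Cauchy--Schwarz) is standard and is simply taken for granted in the paper.
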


We need to report a bound for the squared Hilbert symmetric distance ($\kappa_2=1$).
In~\cite{BH-2014} (Theorem 3.3), it was shown that Hilbert geometry of a bounded convex domain $\calC$ is isometric to a normed vector space iff
$\calC$ is an open simplex:  $(\Delta^d,\rho_\HG)\simeq (V^d,\Vert\cdot\Vert_\NH)$, where $\Vert\cdot\Vert_\NH$ is the corresponding norm.
Therefore $\kappa_1=2$. We write ``$\NH$'' for short for this equivalent normed Hilbert geometry.
Appendix~\ref{sec:isometry} recalls the construction due to~\cite{HilbertHarpe-1991}
and shows the squared Hilbert distance fails the triangle inequality
and it is not a distance induced by an inner product.

As an empirical study, we randomly generate $n=10^5$ tuples $(x, y, z)$
based on the uniform distribution in $\Delta^d$. For each tuple $(x,y,z)$,
we evaluate the ratio
$$
\kappa_1 = \frac{D(x:z)}{D(x:y)+D(y:z)}.
$$
Figure~\ref{fig:kappa} shows the statistics for four different choices of
$D$: (1) $D(x:y)=\rhofhr^2(x,y)$;
(2) $D(x:y)=\frac{1}{2}\KL(x:y)+\frac{1}{2}\KL(y:x)$;
(3) $D(x:y)=\rhohg^2(x,y)$;
(4) $D(x:y)= \rhol1^2(x,y)$.
We find experimentally that $\kappa_1$ is upper bounded by 2 for
$\rhofhr^2$, $\rhohg^2$ and $\rhol1^2$,
while the average $\kappa_1$ value is smaller than 0.5.
For all the compared distances, $\kappa_2=1$.
Therefore $\rhofhr$
and $\rhohg$ have better $k$-means++ performance guarantee as compared
to $\rho_{\mathrm{IG}}$.

We get by applying Theorem~\ref{theo:kmns}:
\begin{corollary}[$k$-means++ in  Hilbert simplex geometry]
The $k$-means++ seeding in a  Hilbert simplex geometry in fixed dimension is $16 (2+\log k)$-competitive.
\end{corollary}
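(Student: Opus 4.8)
The plan is to reduce the statement to Theorem~\ref{theo:kmns} ($k$-means++ performance in a normed space) by passing through the isometry between a simplicial Hilbert geometry and a finite-dimensional normed space. First I would invoke the result of~\cite{BH-2014} (Theorem 3.3) recalled just above: since the domain $\calC=\Delta^d$ is an open simplex, the metric space $(\Delta^d,\rho_\HG)$ is isometric to a normed vector space $(V^d,\Vert\cdot\Vert_\NH)$. Write the isometry as $\phi:\Delta^d\to V^d$, so that $\rho_\HG(p,q)=\Vert\phi(p)-\phi(q)\Vert_\NH$ for all $p,q\in\Delta^d$. Because the dimension is fixed, $V^d$ is a genuine finite-dimensional normed space and Theorem~\ref{theo:kmns} applies verbatim.

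The second step is to observe that the full $k$-means++ pipeline — the seeding probabilities $\mathrm{Pr}(c_j=p_i)$, the energy $E_D(\Lambda,C)$, and the optimum $E_D^*(\Lambda,k)$ — depends on the data only through pairwise values of the dissimilarity $D=\rho_\HG^2$. Under $\phi$ each such value is transported exactly: $\rho_\HG^2(p_i,c_j)=\Vert\phi(p_i)-\phi(c_j)\Vert_\NH^2$. Hence running $k$-means++ with $D=\rho_\HG^2$ on $\Lambda\subset\Delta^d$ is probabilistically identical (same seeding law, same realized clusterings, same energies) to running it with $D=\Vert\cdot\Vert_\NH^2$ on the image $\phi(\Lambda)\subset V^d$. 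In particular the competitive factor $E_D(\Lambda,C)/E_D^*(\Lambda,k)$ is preserved under $\phi$.

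The third step is simply to apply Theorem~\ref{theo:kmns} to the squared norm $\Vert\cdot\Vert_\NH^2$ on $(V^d,\Vert\cdot\Vert_\NH)$, which gives the $16(2+\log k)$ competitive factor, and transport the guarantee back through $\phi^{-1}$. Unwinding the constants, this is precisely Theorem~\ref{thm:kmeansplus} instantiated with $\kappa_2=1$ (the Hilbert metric is symmetric) and $\kappa_1=2$ (the squared Hilbert distance obeys the $2$-approximate triangle inequality, as shown in the proof of Theorem~\ref{theo:kmms} for any metric): $2\kappa_1^2(1+\kappa_2)(2+\log k)=16(2+\log k)$.

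I do not anticipate a substantive obstacle; the argument is a transfer through an isometry. The only point deserving care is the fixed-dimension hypothesis, which is what ensures that the target normed space $(V^d,\Vert\cdot\Vert_\NH)$ is finite-dimensional and that Theorem~\ref{theo:kmns} yields a \emph{dimension-independent} ratio. Dimension enters the construction only through the shape of the Hilbert ball and hence through the norm $\Vert\cdot\Vert_\NH$ itself, not through the approximation factor, so the competitive bound is uniform in $d$.
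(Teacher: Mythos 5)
Your proof is correct and takes essentially the same route as the paper: invoke the isometry $(\Delta^d,\rho_\HG)\simeq(V^d,\Vert\cdot\Vert_\NH)$ of~\cite{BH-2014} (Theorem 3.3), then apply Theorem~\ref{theo:kmns} (equivalently, Theorem~\ref{thm:kmeansplus} with $\kappa_1=2$, $\kappa_2=1$) to get the factor $2\kappa_1^2(1+\kappa_2)(2+\log k)=16(2+\log k)$. Your explicit transfer argument --- that the seeding law, energy, and optimum depend only on pairwise dissimilarities, which the isometry preserves --- merely spells out what the paper leaves implicit.
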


Figure~\ref{fig:kmresults} displays the clustering results of $k$-means++ in Hilbert simplex geometry as
compared to the other geometries for $k\in\{3,5\}$.

\begin{figure}[tbh]
\centering
\includegraphics[width=\textwidth]{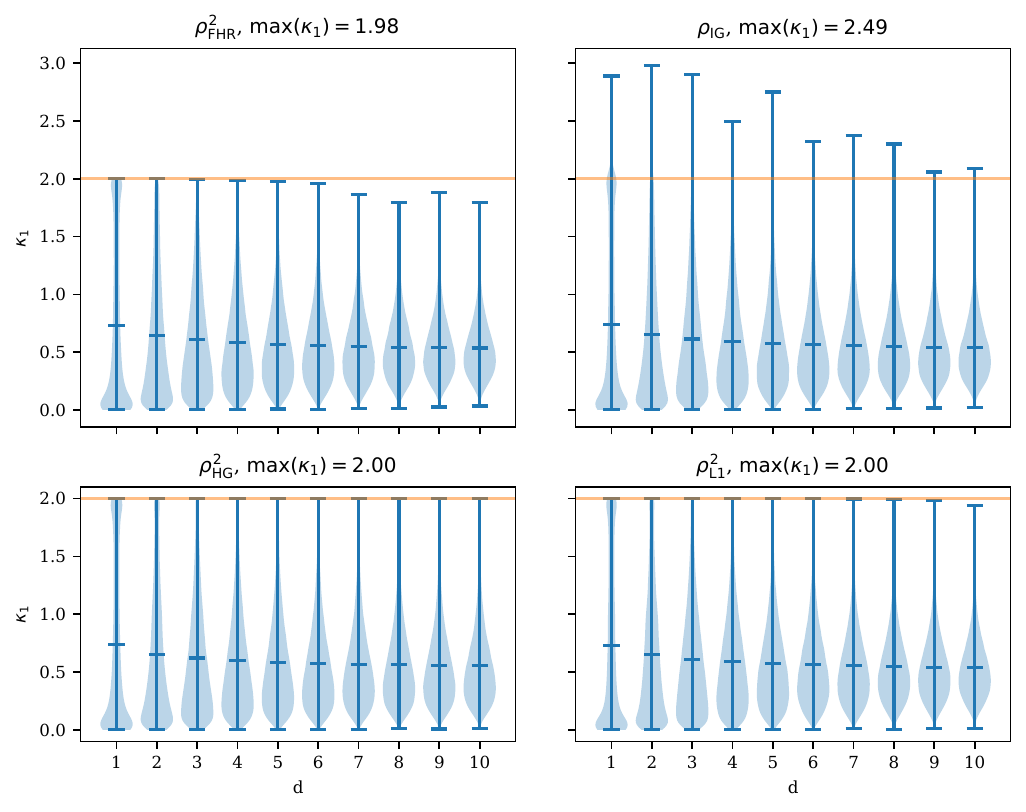}

\caption{The maximum, mean, minimum, and estimated density of $\kappa_1$
    on $10^5$ randomly generated tuples $(x,y,z)$ in
    $\Delta^d_\epsilon=\{ p=(p_0,p_1,...,p_d) : p_i\ge\epsilon \}$ (for a small prescribed $\epsilon=10^{-10}$)
    for $d=1,\dots,10$.}\label{fig:kappa}
\end{figure}

Figure~\ref{fig:compdistance} reports average/deviation and extremal minimum and maximal distances for randomly generated tuples inside the probability simplex $\Delta^d_\epsilon=\{ p=(p_0,p_1,...,p_d) : p_i\ge\epsilon \}$.

\begin{figure}[tbh]
\centering
\includegraphics[width=\textwidth]{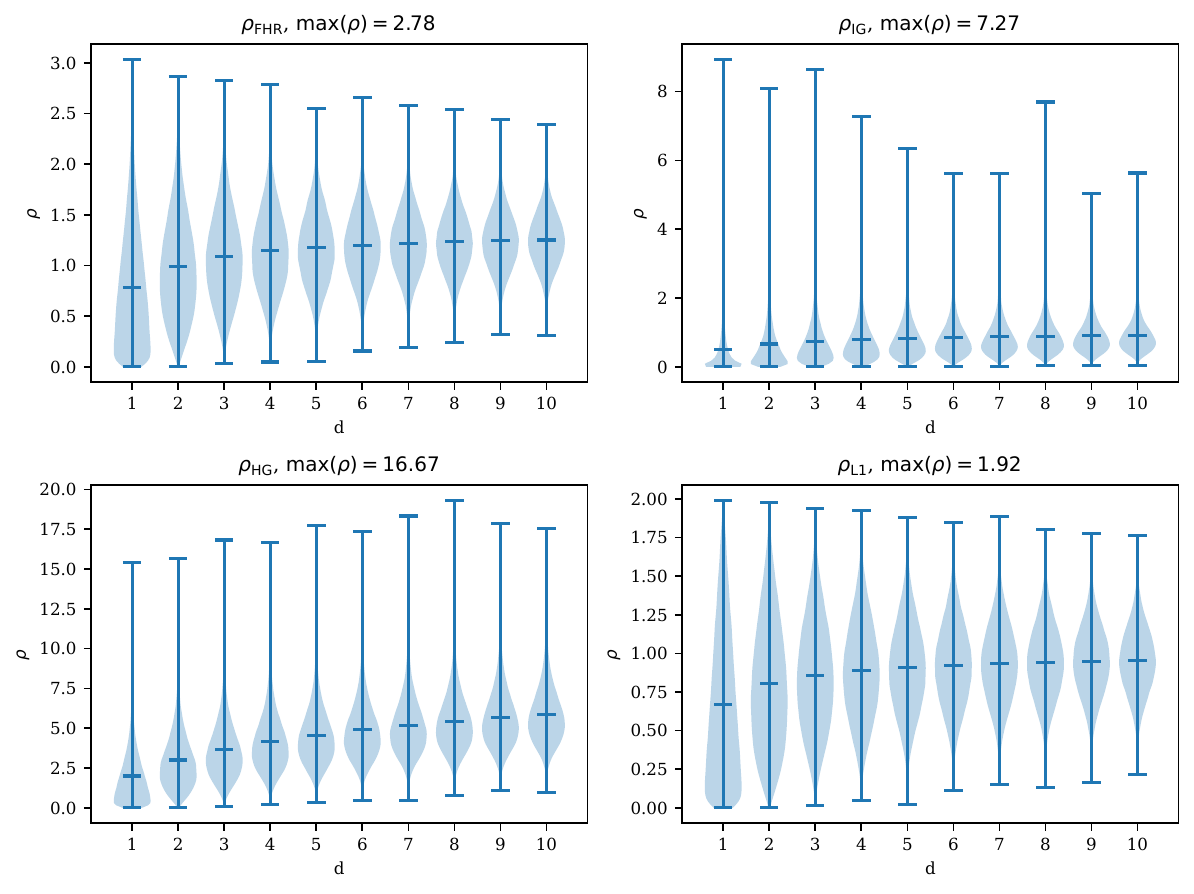}

\caption{Comparisons of distance statistics: Maximum, mean, minimum, and
    estimated density of distances on $10^5$ randomly generated tuples $(x,y,z)$
    in $\Delta^d_\epsilon=\{ p=(p_0,p_1,...,p_d) : p_i\ge\epsilon \}$
    (for a small prescribed $\epsilon=10^{-10}$)
    for $d=1,\dots,10$.}\label{fig:compdistance}
\end{figure}

\begin{figure}
\centering
\begin{subfigure}{\textwidth}
\includegraphics[width=\textwidth]{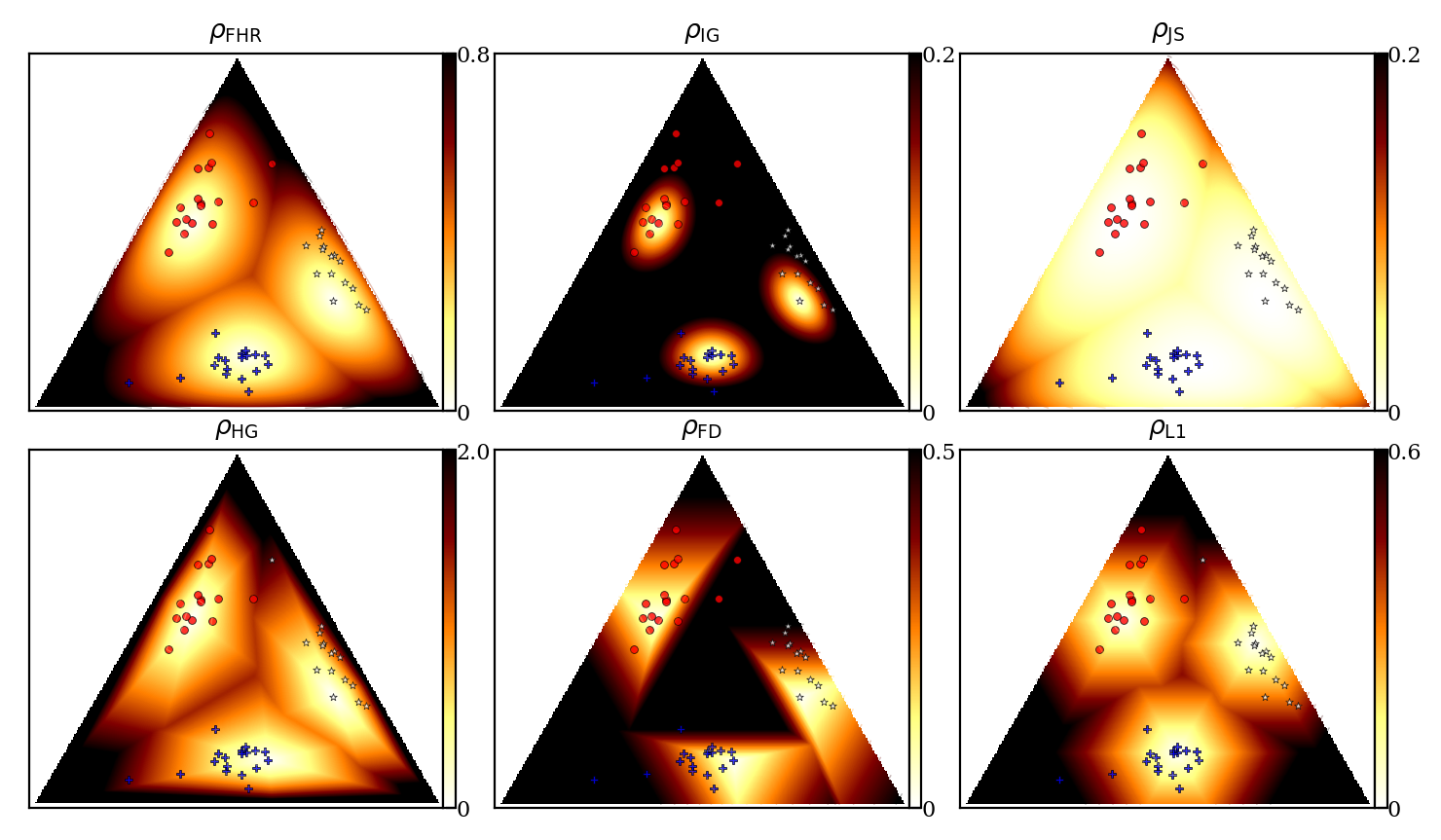}
\caption{$k=3$ clusters}
\end{subfigure}
\begin{subfigure}{\textwidth}
\includegraphics[width=\textwidth]{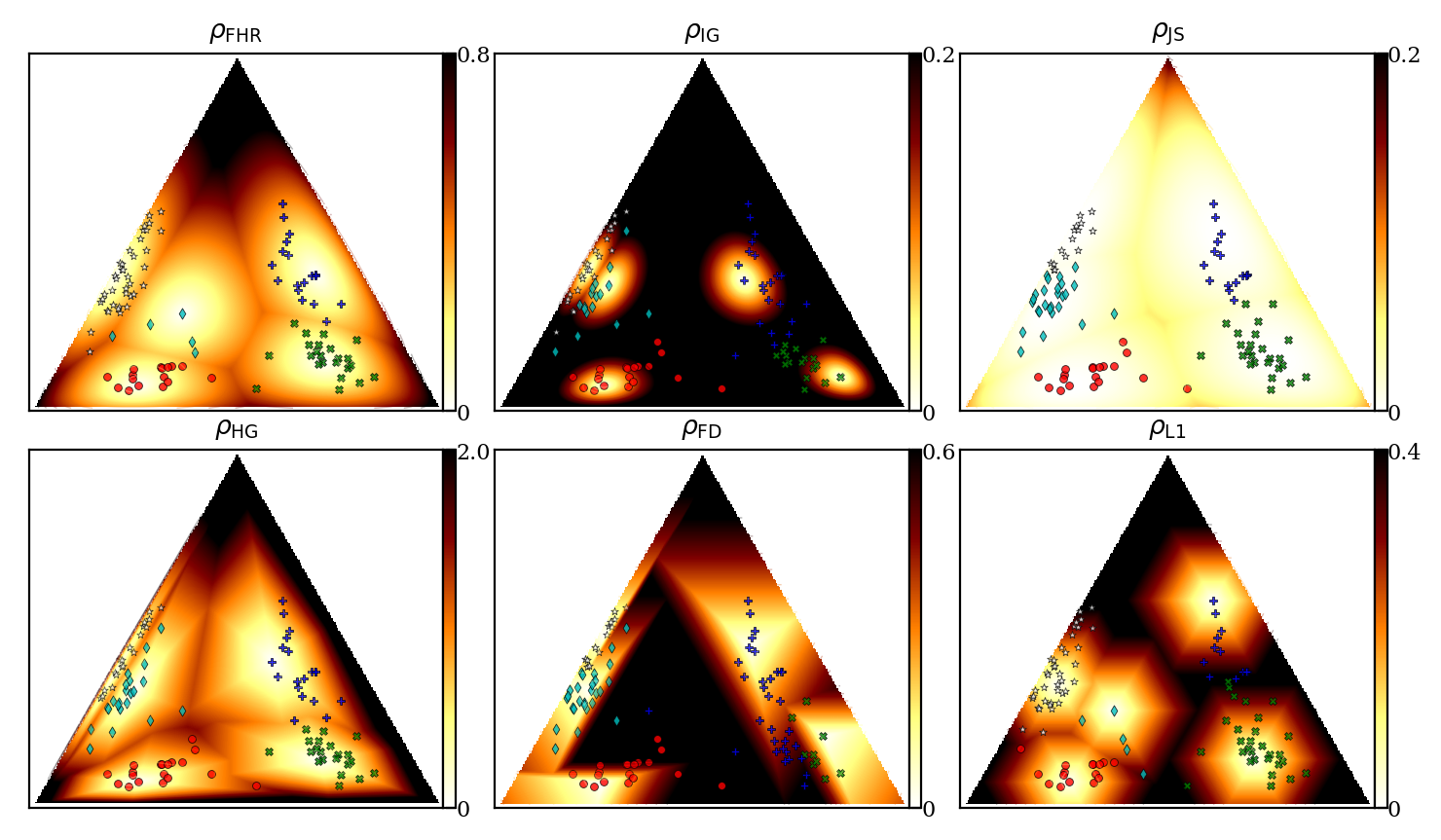}
\caption{$k=5$ clusters}
\end{subfigure}
\caption{$k$-Means++ clustering results on a toy dataset in the space of trinomials $\Delta^2$.
The color density maps indicate the distance from any point to its nearest cluster center.}\label{fig:kmresults}
\end{figure}

The KL divergence can be interpreted as a separable Bregman divergence~\cite{BregmanKmeans-2010}.
The Bregman $k$-means++ performance has been studied in~\cite{BregmanKmeans-2010,smoothedBregmankMeans-2013},
and a competitive factor of $O(\frac{1}{\mu})$ is reported using the notion of Bregman $\mu$-similarity (that is suited for data-sets on a compact domain).

In~\cite{sphericalkm-2015}, spherical $k$-means++ is studied wrt the distance $d_S(x,y)=1-\inner{x}{y}$ for any pair of points $x,y$ on the unit sphere.
Since $\inner{x}{y}=\|x\|_2 \|y\|_2 \cos(\theta_{x,y})=\cos(\theta_{x,y})$, we have  $d_S(x,y)=1-\cos(\theta_{x,y})$, where $\theta_{x,y}$
denotes the angle between a pair of unit vectors $x$ and $y$. This distance is called the cosine distance since it amounts to one minus the cosine similarity.
Notice that the cosine distance is related to the squared Euclidean distance via the identity: $d_S(x,y)=\frac{1}{2}\|x-y\|^2$.
The cosine distance is different from the spherical distance that relies on the arccos function.

Since divergences may be asymmetric, one can further consider mixed divergence $M(p:q:r)=\lambda D(p:q)+(1-\lambda)D(q:r)$
for $\lambda\in[0,1]$, and extend the $k$-means++ seeding procedure and analysis~\cite{MixedClustering-2014}.

For a given data set, we can compute $\kappa_1$ or $\kappa_2$ by inspecting triples and pairs of points
and get a data-dependent competitive factor improving the bounds mentioned above.

\subsection{$k$-Center clustering}

Let $\Lambda$ be a finite point set.
The cost function for a $k$-center clustering with centers $C$ ($|C|=k$) is:
$$
f_D(\Lambda, C) = \max_{p_i\in\Lambda} \min_{c_j\in{}C} D(p_i\,:\,c_j).
$$
The farthest first traversal heuristic~\cite{kcenter-1985} has a guaranteed approximation factor of $2$ for any metric distance (see Algorithm~\ref{alg:kcenter}).

\begin{algorithm}[t]
\KwData{A set of points $p_1,\cdots,p_n\in\Delta^d$.  A distance measure $\rho$ on $\Delta^d$.
The maximum number $k$ of clusters. The maximum number $T$ of iterations.}
\KwResult{A clustering scheme assigning each $p_i$ a label $l_i\in\{1,\ldots,k\}$}
\Begin{
Randomly pick $k$ cluster centers $c_1,\ldots,c_k$ using
the kmeans\texttt{++} heuristic\;
\For{$t=1,\cdots,T$}{
\For{$i=1,\cdots,n$}{
  $l_i\leftarrow\argmin_{l=1}^k \rho(p_i, c_l)$\;
}
\For{$l=1,\cdots,k$}{
  $c_l\leftarrow \argmin_c\max_{i:l_i=l} \rho(p_i,c)$\;
}
}
Output $\{l_i\}_{i=1}^n$\;
}
\caption{$k$-Center clustering}\label{alg:cluster}
\end{algorithm}

\begin{algorithm}
\KwData{A set $\Lambda$; a number $k$ of clusters; a metric distance $\rho$.}
\KwResult{A $2$-approximation of the $k$-center clustering}
\Begin{
$c_1\leftarrow\mathrm{ARandomPointOf}(\Lambda)$\;
$C\leftarrow\{c_1\}$\;
\For{$i=2,\cdots,{k}$}{
$c_i\leftarrow\arg\max_{p\in\Lambda} \rho(p,C)$\;
$C\leftarrow C\cup \{c_i\}$\;
}}
Output $C$\;
\caption{A $2$-approximation of the $k$-center clustering for any metric distance $\rho$.\label{alg:kcenter}}
\end{algorithm}

In order to use the $k$-center clustering algorithm described in Algorithm~\ref{alg:cluster},
we need to be able to compute the $1$-center (or minimax center) for the Hilbert simplex geometry,
that is the Minimum Enclosing Ball (MEB, also called the Smallest Enclosing Ball, SEB).

We may consider the SEB equivalently either in $\Delta^d$ or in the normed space $V^d$.
In both spaces, the shapes of the balls are convex.
Let $\Lambda=\{p_1,\ldots,p_n\}$ denote the point set in $\Delta^d$,
and $\calV=\{v_1,\ldots,v_n\}$ the equivalent point set in the normed vector space
(following the mapping explained in Appendix~\ref{sec:isometry}).
Then the SEBs $B_\HG(\Lambda)$ in $\Delta^d$ and $B_\NH(\calV)$ in $V^d$ have respectively radii $r^*_\HG$ and $r_\NH^*$ defined by:

\begin{eqnarray*}
r^*_\HG &=& \min_{c\in\Delta^d}  \max_{i\in\{1,\ldots, n\}} \rho_\HG(p_i,c),\\
r^*_\NH &=& \min_{v\in V^d}      \max_{i\in\{1,\ldots, n\}} \|v_i-v\|_\NH.
\end{eqnarray*}


The SEB in the normed vector space $(V^d,\|\cdot\|_\NH)$ amounts to find the minimum covering norm polytope  of a finite point set.
This problem has been well-studied in computational geometry~\cite{Saha-2011,Brandenberg-2013,EnclosingPolytope-2004}.
By considering the equivalent Hilbert norm polytope with $d(d+1)$ facets, we state the result of~\cite{Saha-2011}:

\begin{theorem}[SEB in Hilbert polytope normed space,~\cite{Saha-2011}]
A $(1+\epsilon)$-approximation of the SEB in $V^d$ can be computed in $O(d^3\frac{n}{\epsilon})$ time.
\end{theorem}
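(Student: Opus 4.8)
The plan is to reduce the statement to a minimum enclosing polytope computation and then invoke the conditional-gradient machinery of~\cite{Saha-2011}. \textbf{Step 1 (reduction through the isometry).} Recall from~\cite{BH-2014} that $(\Delta^d,\rho_\HG)$ is isometric to the normed space $(V^d,\|\cdot\|_\NH)$, and that the unit ball $B_\NH$ of this norm is a centrally symmetric Euclidean polytope with $m=d(d+1)$ facets. Under the isometry the point set $\Lambda\subset\Delta^d$ maps to $\calV=\{v_1,\ldots,v_n\}\subset V^d$, and computing the SEB $B_\NH(\calV)$ amounts to finding the smallest scaled translate $c+rB_\NH$ containing every $v_i$. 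Since the map is an isometry, any $(1+\epsilon)$-approximate enclosing ball in $V^d$ pulls back to a $(1+\epsilon)$-approximate SEB in $\Delta^d$, so it suffices to work entirely in the normed space.

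\textbf{Step 2 (saddle-point formulation).} I would use the facet representation $\|x\|_\NH=\max_{j\in[m]}\inner{a_j}{x}$, so that
\[
r^*=\min_{c\in V^d}\ \max_{i\in[n],\,j\in[m]}\bigl(\inner{a_j}{v_i}-\inner{a_j}{c}\bigr).
\]
Introducing coupling weights over the $nm$ index pairs turns this into a convex-concave saddle-point problem: a minimization over the center $c$ played against a maximization over a probability simplex, the latter being exactly the form on which Frank-Wolfe / core-set methods converge and produce sparse (core-set) optimal solutions.

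\textbf{Step 3 (apply~\cite{Saha-2011} and count).} The algorithm maintains a candidate center and, at each step, calls a linear-minimization oracle returning the currently farthest point together with its active facet; a standard curvature/core-set argument bounds the number of iterations by $O(1/\epsilon)$. Each oracle call evaluates the gauge $\|v_i-c\|_\NH=\max_j\inner{a_j}{v_i-c}$ at all $n$ points, and for a generic $m$-facet polytope each gauge evaluation costs $O(md)$; thus an iteration costs $O(nmd)$ while the center update is only $O(d)$. Substituting $m=d(d+1)=O(d^2)$ yields $O(nd^3)$ per iteration and hence $O(d^3\,n/\epsilon)$ overall, as claimed.

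\textbf{Main obstacle.} The delicate part is the iteration-complexity bound: one must show that the conditional-gradient scheme attains a $(1+\epsilon)$-relative guarantee in $O(1/\epsilon)$ steps for the \emph{polytopal} gauge rather than the Euclidean norm, which requires controlling the curvature constant of the objective and verifying that the enclosing-polytope duality certifying near-optimality behaves correctly at the nonsmooth facet boundaries. A secondary point is the bookkeeping of the isometry: I would have to confirm that passing between $\Delta^d$ and $V^d$ and extracting the $d(d+1)$ facet normals $a_j$ can be done within the stated budget and without degrading the approximation factor.
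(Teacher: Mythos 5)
Your proposal takes essentially the same route as the paper: the paper also reduces the problem via the isometry of $(\Delta^d,\rho_\HG)$ to the normed space $(V^d,\|\cdot\|_\NH)$ whose unit ball is a polytope with $d(d+1)$ facets, and then directly invokes the minimum-enclosing-polytope algorithm of~\cite{Saha-2011} to get the $O(d^3\frac{n}{\epsilon})$ bound (with $m=d(d+1)=O(d^2)$ facets, cost $O(nmd)=O(nd^3)$ per iteration, and $O(1/\epsilon)$ iterations). The saddle-point/Frank--Wolfe machinery you sketch in Steps 2--3, including the $O(1/\epsilon)$ iteration bound you flag as the main obstacle, is precisely the content of the cited work, so your outline is a faithful unpacking of the same argument rather than a different proof.
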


We shall now report two algorithms for computing the SEBs: One exact algorithm in $V^d$ that does not scale well in high dimensions,
and one approximation in $\Delta^d$ that works well for large dimensions.

\subsubsection{Exact smallest enclosing ball in a Hilbert simplex geometry\label{sec:SEBHSG}}

Given a finite point set $\{p_1,\ldots, p_n\}\in\Delta^d$, the SEB in Hilbert simplex geometry
is centered at
\begin{equation*}
c^*=\argmin_{c\in \Delta^d} \max_{i\in\{1,\ldots,n\}} \rho_\HG(c,x_i),
\end{equation*}
with radius
\begin{equation*}
r^*=\min_{c\in \Delta^d} \max_{i\in\{1,\ldots,n\}} \rho_\HG(c,x_i).
\end{equation*}

An equivalent problem is to find the SEB in the isometric normed vector space $V^d$
via the mapping reported in Appendix~\ref{sec:isometry}.
Each simplex point $p_i$ corresponds to a point $v_i$ in the $V^d$.

%
%

Figure~\ref{fig:SEB} displays some examples of the exact smallest enclosing balls in the Hilbert simplex geometry and in the corresponding normed vector space.

\begin{figure}
\centering
\begin{subfigure}[m]{.48\textwidth}
\includegraphics[width=\textwidth]{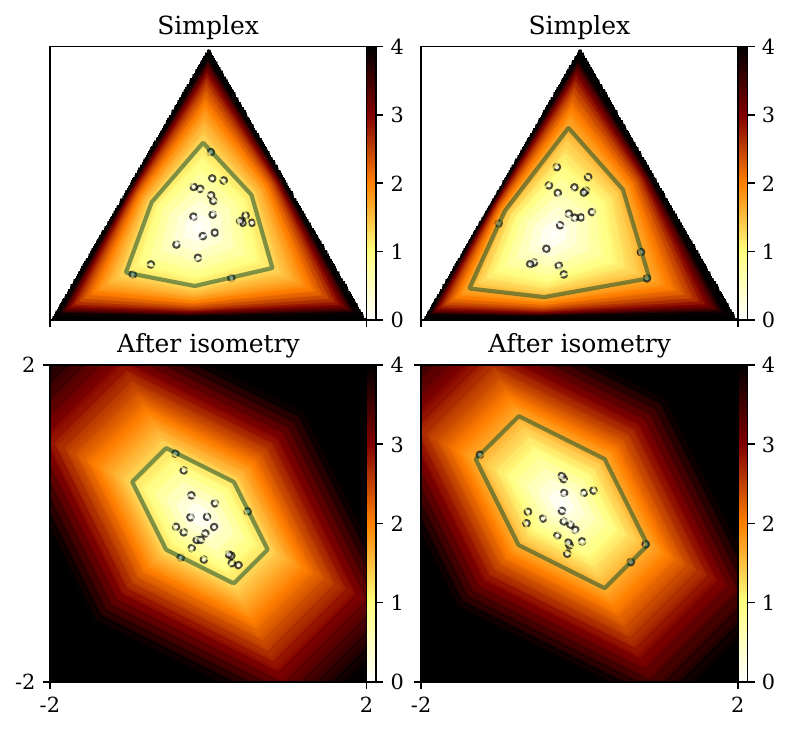}%
\caption{$3$ points on the border}
\end{subfigure}
\begin{subfigure}[m]{.48\textwidth}
\includegraphics[width=\textwidth]{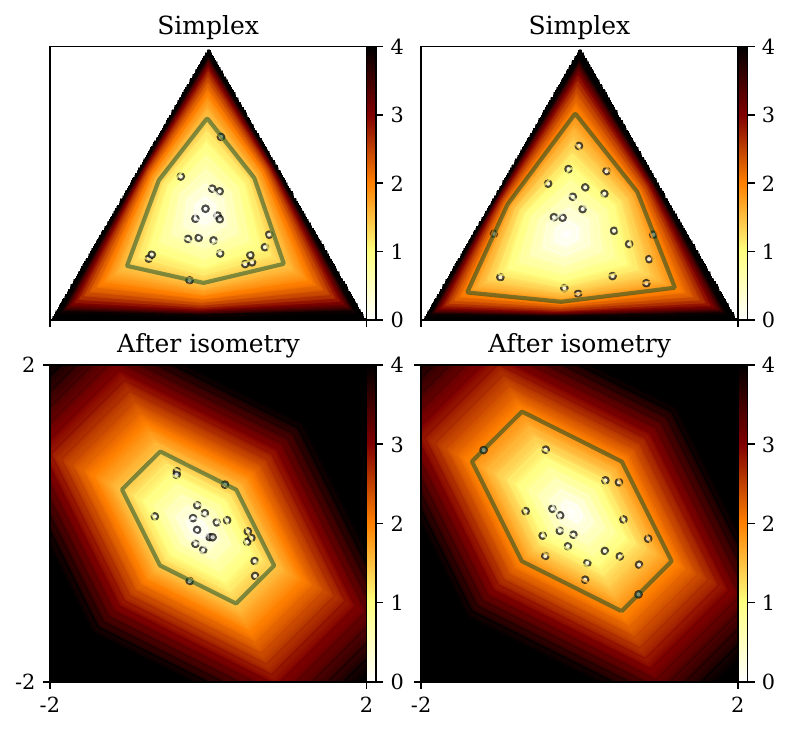}%
\caption{$2$ points on the border}
\end{subfigure}
\caption{Computing the SEB in Hilbert simplex geometry
amounts to compute the SEB in the corresponding normed
vector space.\label{fig:SEB}}
\end{figure}

To compute the SEB, one may also consider the generic LP-type randomized algorithm~\cite{SEBB-2008}.
We notice that an enclosing ball for a point set in general has
a number $k$ of points on the border of the ball, with $2\leq k\leq \frac{d(d+1)}{2}$.
Let $D=\frac{d(d+1)}{2}$ denote the varying size of the combinatorial basis,
then we can apply the LP-type framework (we check the axioms of locality and monotonicity,~\cite{LPtype-1992})
to solve efficiently the SEBs.

\begin{theorem}[Smallest Enclosing Hilbert Ball is LP-type,~\cite{Welzl-1991,LPtype-1992}]
The smallest enclosing Hilbert ball amounts to find the smallest enclosing ball in a vector space
with respect to a polytope norm that can be solved using an LP-type randomized algorithm.
\end{theorem}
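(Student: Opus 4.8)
The plan is to reduce the smallest enclosing Hilbert ball (SEB) to a smallest enclosing ball in a normed vector space, then recognize the latter as an abstract LP-type problem by verifying its two defining axioms and bounding the combinatorial dimension, so that the general randomized solver of~\cite{Welzl-1991,LPtype-1992} applies verbatim. First I would invoke the isometry $(\Delta^d,\rho_\HG)\simeq(V^d,\|\cdot\|_\NH)$ of~\cite{BH-2014} recalled in Appendix~\ref{sec:isometry}, under which each $p_i$ maps to a point $v_i$ and the Hilbert SEB of $\{p_1,\ldots,p_n\}$ maps to the smallest $\|\cdot\|_\NH$-ball enclosing $\{v_1,\ldots,v_n\}$, the unit ball of $\|\cdot\|_\NH$ being the Hilbert norm polytope with $d(d+1)$ facets; this settles the first assertion. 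For the LP-type structure I take the constraints $H=\{v_1,\ldots,v_n\}$ and let $w(F)$ be the radius of the smallest $\|\cdot\|_\NH$-enclosing ball of $F\subseteq H$, ordered as reals with $w(\emptyset)=-\infty$. Monotonicity, $w(F)\le w(G)$ for $F\subseteq G$, is immediate, since any ball enclosing $G$ also encloses $F$.

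The crux is the locality axiom: whenever $F\subseteq G$ with $w(F)=w(G)$ and $w(G)<w(G\cup\{h\})$, one must derive $w(F)<w(F\cup\{h\})$. This step is delicate precisely because a polytope norm is not strictly convex, so the optimal enclosing ball need not have a unique center and a radius-$r$ ball enclosing $F\cup\{h\}$ need not enclose $G$. I would restore locality by a symbolic infinitesimal perturbation of $\|\cdot\|_\NH$ to a strictly convex gauge (equivalently, by lexicographically refining $w$ with the center coordinates): strict convexity forces a unique optimal ball, whence locality follows from the classical Euclidean argument, and a finiteness argument shows the combinatorially determined radius returned by the solver coincides with the true SEB radius for all sufficiently small perturbations. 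The combinatorial dimension is then the maximal basis size $D=\tfrac{d(d+1)}{2}$, matching the stated bound $2\le k\le\tfrac{d(d+1)}{2}$ on the number of boundary points pinning the optimal ball.

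Finally I would supply the two primitives the randomized LP-type solver needs: a violation test---deciding whether $\|v^*-h\|_\NH>r$, a single evaluation of the polytope norm---and a basis update---computing the SEB of at most $D+1$ points, a small linear program since each constraint $\|v-v_i\|_\NH\le r$ unfolds into the $d(d+1)$ facet inequalities of $r\cdot\Ball_\NH$. With both axioms verified, $D$ fixed, and the primitives available, the algorithm of~\cite{Welzl-1991,LPtype-1992} computes the exact SEB in expected time linear in $n$. The main obstacle is the locality verification under the non-strictly-convex polytope norm; the reduction, monotonicity, and primitive operations are routine.
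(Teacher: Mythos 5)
Your proposal follows the same skeleton as the paper's treatment: map the Hilbert SEB to the smallest enclosing $\|\cdot\|_\NH$-ball problem in $V^d$ via the isometry of~\cite{HilbertHarpe-1991,BH-2014}, take the points as constraints and the enclosing radius as the objective $w$, bound the combinatorial dimension by $D=\frac{d(d+1)}{2}$, and invoke the randomized solver of~\cite{Welzl-1991,LPtype-1992}. The difference lies in rigor at the one nontrivial step: the paper simply asserts that ``we check the axioms of locality and monotonicity,'' whereas you correctly identify that locality genuinely fails for the bare radius objective under a polytope norm --- since $\|\cdot\|_\NH$ is not strictly convex, the optimal center is not unique, and a radius-$w(F)$ ball enclosing $F\cup\{h\}$ need not enclose $G$ --- and you repair it by lexicographic refinement of $w$ (equivalently, symbolic perturbation to a strictly convex gauge). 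This repair is sound: taking $w(F)$ to be the pair (radius, lexicographically least optimal center), monotonicity persists because when radii agree the optimal-center polytope of $G$ is contained in that of $F$, and locality follows because equality of the refined values forces a common witness center, which would then enclose $G\cup\{h\}$ within the same radius and contradict $w(G)<w(G\cup\{h\})$. So your proof is a strict strengthening of the paper's argument, supplying exactly the verification it leaves implicit. One further simplification worth noting: since each constraint $\|v-v_i\|_\NH\le r$ unfolds into $d(d+1)$ linear facet inequalities, the entire problem is literally a linear program in the $d+1$ unknowns $(v,r)$, so one may cite LP itself (the canonical LP-type problem, with the same lexicographic tie-breaking) rather than the abstract framework; the paper implicitly exploits this in its subsequent Enclosing Hilbert Ball Decision Problem theorem, which reduces the decision version to a single LP feasibility test.
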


The Enclosing Ball Decision Problem~(EBDP, \cite{BallML-2009}) asks for a given value $r$, whether $r\geq r^*$ or not.
The decision problem amounts to find whether a set $\{rB_V+v_i\}$ of translates can be stabbed by a point~\cite{BallML-2009}:
That is, whether $\cap_{i=1}^n (rB_V+v_i)$ is empty or not. Since these translates are polytopes with $d(d+1)$ facets,
this can be solved in linear time using {\em Linear Programming}.

\begin{theorem}[Enclosing Hilbert Ball Decision Problem]
The decision problem to test whether $r\geq r^*$ or not can be solved by Linear Programming.
\end{theorem}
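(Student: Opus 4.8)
The plan is to pass to the isometric normed vector space $(V^d,\|\cdot\|_\NH)$ and recast the decision ``$r\geq r^*$'' as a linear feasibility test. First I would unfold the definition of the minimax radius: since $r^*=\min_{v\in V^d}\max_i\|v_i-v\|_\NH$, the inequality $r\geq r^*$ holds if and only if there exists a center $v$ with $\max_i\|v_i-v\|_\NH\leq r$, i.e., a single point $v$ lying within norm-distance $r$ of every $v_i$ simultaneously. This is the \emph{stabbing} reformulation: such a $v$ exists exactly when $v_i-v\in r B_V$ for all $i$, where $B_V$ is the unit ball of $\|\cdot\|_\NH$. Because a norm ball is centrally symmetric ($\|-x\|_\NH=\|x\|_\NH$), one has $v_i-v\in r B_V\iff v\in v_i+r B_V$, so the set of admissible centers is precisely $\bigcap_{i=1}^n(v_i+r B_V)$, and $r\geq r^*$ is equivalent to this intersection being non-empty.

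Next I would turn each translate into an explicit system of linear inequalities. By the theorem on balls in a simplicial Hilbert geometry recalled above, $B_V$ is a Euclidean polytope with $m=d(d+1)$ facets; writing its supporting half-spaces as $\inner{a_k}{x}\leq 1$ for $k=1,\ldots,m$ (equivalently $\|x\|_\NH=\max_k\inner{a_k}{x}$, the gauge of $B_V$), the scaled translate becomes $v_i+r B_V=\{\,v:\inner{a_k}{v}\leq r+\inner{a_k}{v_i},\ k=1,\ldots,m\,\}$. Stacking these over all $i$ yields a finite system of exactly $n\,d(d+1)$ linear constraints in the $d$ unknown coordinates of $v$.

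Finally, I would invoke linear-programming feasibility: deciding whether $\bigcap_{i=1}^n(v_i+r B_V)$ is non-empty is precisely deciding whether this system admits a solution $v$, which is an LP (with an arbitrary objective, or the usual Phase-I feasibility program). Since the number of variables is the fixed dimension $d$ and the number of constraints is $O(n)$, fixed-dimensional linear programming solves it, in fact in time linear in $n$ by a Megiddo-type argument; this settles the decision $r\geq r^*$.

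The step that requires the most care is the geometric reformulation of the minimax condition into the translate-intersection: one must work in $V^d$ rather than in $\Delta^d$, since only there is the distance translation-invariant, so that every metric ball is a genuine translate of the single fixed polytope $r B_V$. In $\Delta^d$ the Hilbert balls are polytopes but their shapes vary with the center, which would obstruct a uniform half-space description. Once the problem is phrased in $V^d$, the facet count $d(d+1)$ supplied by the balls theorem makes the constraint system explicit and the reduction to Linear Programming immediate.
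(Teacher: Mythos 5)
Your proposal is correct and follows essentially the same route as the paper: pass to the isometric normed space $(V^d,\|\cdot\|_\NH)$, reformulate $r\geq r^*$ as the non-emptiness of the intersection $\bigcap_{i=1}^n (v_i + r B_V)$ of translates of the fixed polytope ball with $d(d+1)$ facets, and decide this by linear-programming feasibility. The paper states this reduction more tersely (citing the stabbing formulation and the facet count), while you additionally spell out the central-symmetry step, the explicit half-space system, and the fixed-dimension linear-time solvability, all of which are consistent with the paper's claim.
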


This yields a simple scheme to approximate the optimal value $r^*$:
Let $r_0=\max_{i\in\{2,\ldots,n\}} \|v_i-v_1\|_\NH$. Then $r^*\in[\frac{r_0}{2},r_0]=[a_0,b_0]$.
At stage $i$, perform a dichotomic search on $[a_i,b_i]$ by answering the decision problem for
$r_{i+1}=\frac{a_i+b_i}{2}$, and update the radius range accordingly~\cite{BallML-2009}.

However, the LP-type randomized algorithm or the decision problem-based algorithm do not scale well in high dimensions.
Next, we introduce a simple approximation algorithm that relies on the fact that the line segment $[pq]$ is a geodesic in Hilbert simplex geometry.
(Geodesics are not unique. See Figure~2 of~\cite{HilbertHarpe-1991} and Figure~\ref{fig:notgeodesic}.)

Notice that in Figure~\ref{fig:SEB}, we ``rendered'' the probability simplex using a 2D isosceles right triangle $T_i$,
while in Figure~\ref{fig:results}, we used a 2D  equilateral triangle $T_e$ (embedded in 3D).
Since Hilbert geometries are invariant to collineations (including the affine transformations), the Hilbert geometry induced by $T_i$ is
isometric to the Hilbert geometry induced by $T_e$.

\subsubsection{Geodesic bisection approximation heuristic\label{sec:BCHSG}}
In Riemannian geometry, the $1$-center can be arbitrarily finely approximated by
a simple geodesic bisection algorithm~\cite{bc-2003,miniball-2013}.
This algorithm can be extended to HG straightforwardly as detailed in Algorithm~\ref{alg:center}.

\begin{algorithm}
\KwData{A set of points $p_1,\cdots,p_n\in\Delta^d$. The maximum number $T$ of iterations.}
\KwResult{$c\approx\argmin_c\max_i\rhohg(p_i,c)$}
\Begin{
$c_0\leftarrow\mathrm{ARandomPointOf}(\{p_1,\cdots,p_n\})$\;
\For{$t=1,\cdots,T$}{
$p\leftarrow\argmax_{p_i}\rhohg(p_i,c_{t-1})$\;
$c_t\leftarrow c_{t-1} \#_{1/(t+1)}^\rho p$\;
}
Output $c_{T}$\;
}
\caption{Geodesic walk for approximating the Hilbert minimax center, generalizing~\cite{bc-2003}}\label{alg:center}
\end{algorithm}

\begin{figure}[tbh]
\centering
\includegraphics[width=\textwidth]{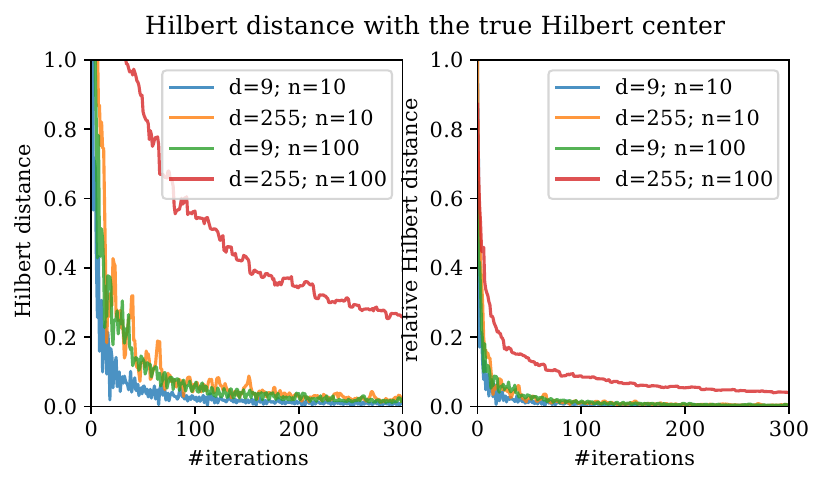}
\caption{Convergence rate of Alg.~(\ref{alg:center})
measured by the Hilbert distance between the current minimax center and the true center (left)
or their Hilbert distance divided by the Hilbert radius of the dataset (right).
The plot is based on $100$ random points in $\Delta^{9}$/$\Delta^{255}$.}\label{fig:convergence}
\end{figure}

\begin{figure}
\centering
\begin{subfigure}[m]{.7\textwidth}
\includegraphics[width=\textwidth]{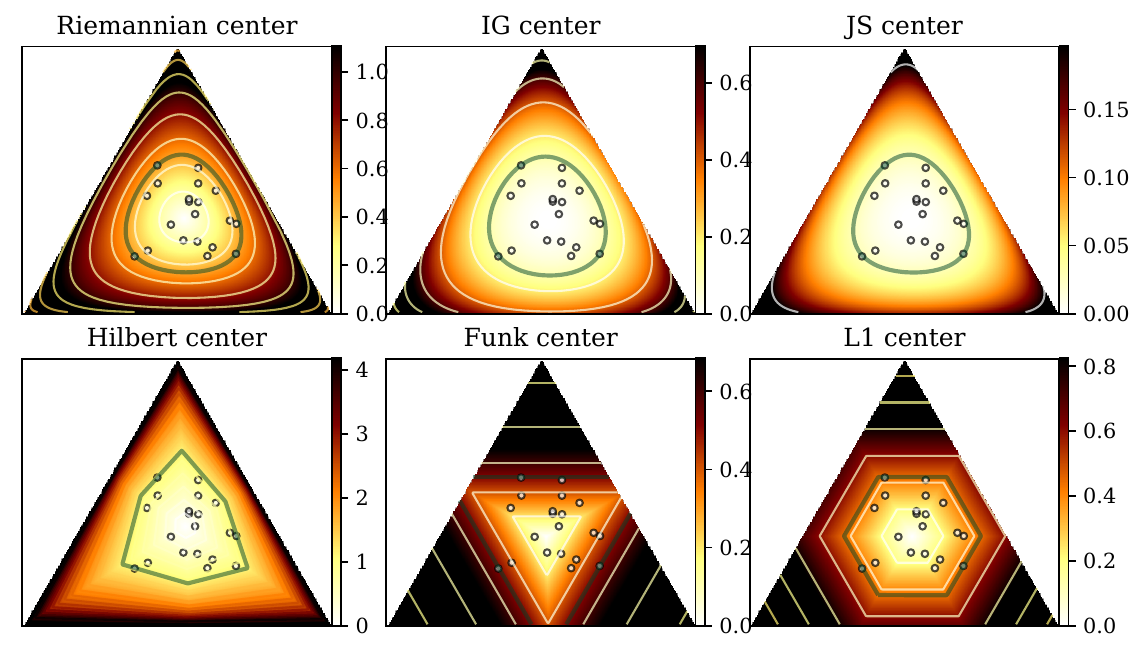}%
\caption{Point Cloud 1}
\end{subfigure}
\begin{subfigure}[m]{.7\textwidth}
\includegraphics[width=\textwidth]{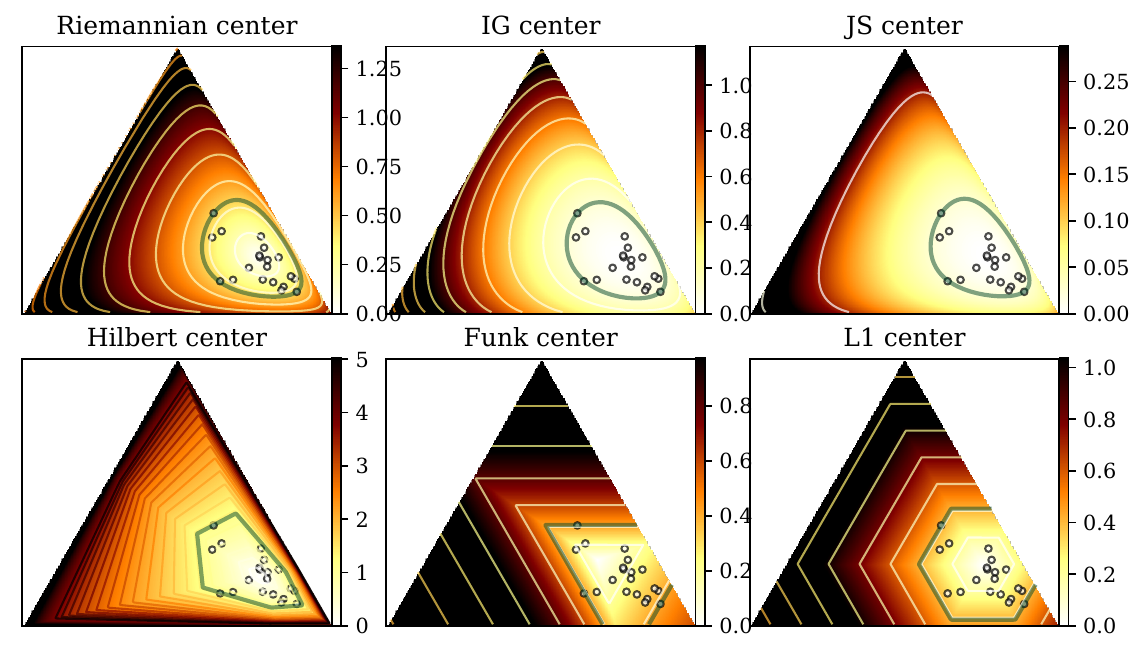}%
\caption{Point Cloud 2}
\end{subfigure}
\end{figure}

\begin{figure}\ContinuedFloat
\centering
\begin{subfigure}[m]{.7\textwidth}
\includegraphics[width=\textwidth]{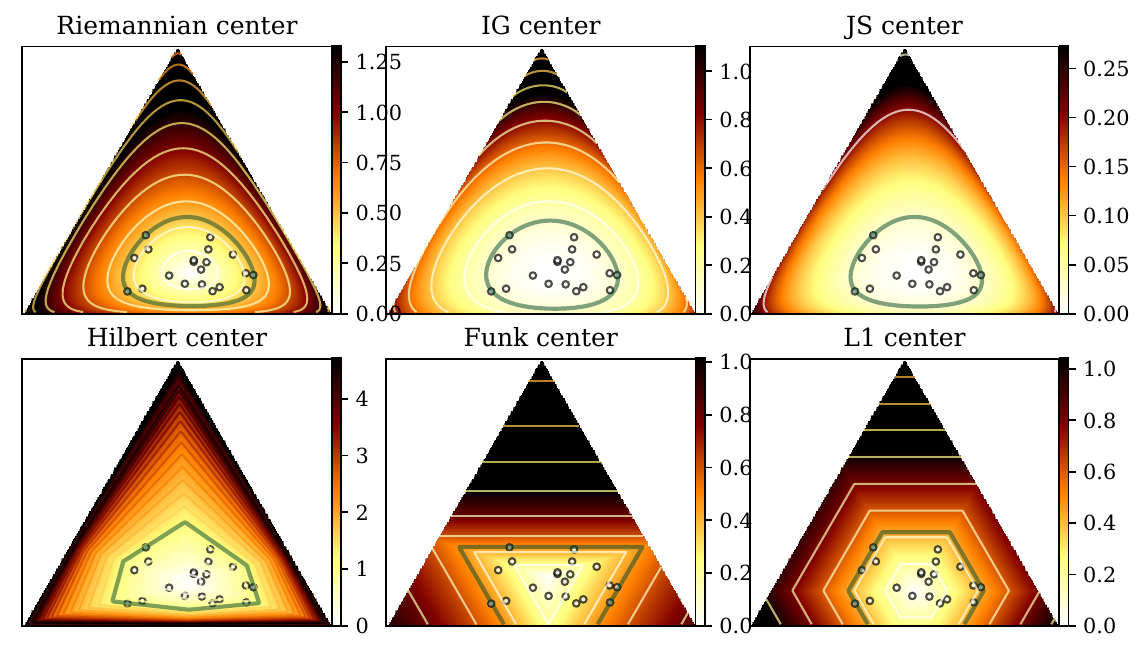}%
\caption{Point Cloud 3}
\end{subfigure}
\caption{The Riemannian/IG/Hilbert/$L_1$  minimax centers of three point clouds
in $\Delta^2$ based on Alg.~(\ref{alg:center}).
The color maps show the distance from $\forall{p}\in\Delta^2$ to the corresponding center.}
\label{fig:center}
\end{figure}

The algorithm first picks up a point $c_0$ at random from $\Lambda$ as the initial center,
then computes the farthest point $p$ (with respect to the distance $\rho$),
and then walk on the geodesic from $c_0$ to $p$ by a certain amount to define $c_1$, etc.
For an arbitrary distance $\rho$, we define the operator $\#^\rho_\alpha$ as follows:
$$
p \#^\rho_\alpha q = v=\gamma(p,q,\alpha), \quad \rho(p:v)=\alpha \rho(p:q),
$$
where $\gamma(p,q,\alpha)$ is the geodesic passing through $p$ and $q$, and parameterized by $\alpha$ ($0\le\alpha\le1$).
When the equations of the geodesics are explicitly known, we can either get a closed form solution for $\#^\rho_\alpha$
or perform a bisection search to find $v'$ such that $\rho(p:v')\approx\alpha\rho(p:q)$.
See~\cite{ApproximatingHyperbolicBall-2015} for an extension and analysis in hyperbolic geometry.
See Fig.~(\ref{fig:convergence}) to get an intuitive idea on the {\em experimental} convergence rate of Algorithm~\ref{alg:center}.
See Fig.~(\ref{fig:center}) for visualizations of centers wrt different geometries.

Furthermore, this iterative algorithm implies a core-set~\cite{coreset-2008} (namely, the set of farthest points visited during the geodesic walks)
that is useful for clustering large data-sets~\cite{coreset-2017}.
See~\cite{Brandenberg-2013} for core-set results on containment problems
wrt a convex homothetic object (the equivalent Hilbert polytope norm in our case).

A simple algorithm dubbed {\sc MinCon} \cite{EnclosingPolytope-2004}
can find an approximation of the Minimum Enclosing Polytope.
The algorithm induces a core-set of size $O(\frac{1}{\eps^2})$ although the theorem is challenged in~\cite{Brandenberg-2013}.

Thus by combining the $k$-center seeding \cite{kcenter-1985} with the Lloyd-like batched iterations, we get an efficient $k$-center clustering
algorithm for the FHR and Hilbert metric geometries.
When dealing with the Kullback-Leibler divergence, we use the fact that KL is a Bregman divergence, and
use the $1$-center algorithm~(\cite{BregmanMinimax-2005,BregmanBall-2006} for approximation in any dimension,
or \cite{SEBB-2008} which is exact but limited to small dimensions).

Since Hilbert simplex geometry is isomorphic to a normed vector space~\cite{BH-2014} with
a polytope norm with $d(d+1)$ facets, the Voronoi diagram in Hilbert geometry of $\Delta^d$ amounts to
compute a Voronoi diagram wrt a polytope norm~\cite{MinisumHypersphere-2011,VoronoiNorm-2012,voronoi-2015}.

\section{Experiments}\label{sec:exp}
We generate a dataset consisting of a set of clusters in a high dimensional statistical simplex $\Delta^d$.
Each cluster is generated independently as follows. We first pick a
random center $c=(\lambda_c^0,\ldots,\lambda_c^d)$ based on the uniform distribution on $\Delta^d$.
Then any random sample $p=(\lambda^0,\ldots,\lambda^d)$ associated with $c$ is independently generated by
$$
\lambda^i = \frac{\exp(\log\lambda_c^i + \sigma\epsilon^i)}{\sum_{i=0}^d \exp(\log\lambda_c^i + \sigma\epsilon^i) },
$$
where $\sigma>0$ is a noise level parameter, and each $\epsilon^i$ follows independently
a standard Gaussian distribution (generator 1) or the Student's $t$-distribution
with five degrees of freedom (generator 2).
Let $\sigma=0$, we get $\lambda^i=\lambda_c^i$. Therefore $p$ is randomly distributed around $c$.
We repeat generating random samples for each cluster center, and make sure
that different clusters have almost the same number of samples. Then we
perform clustering based on the configurations
$n\in\{50,100\}$, $d\in\{9,255\}$, $\sigma\in\{0.5,0.9\}$,
$\rho\in\{\rhofhr, \rho_{\mathrm{IG}}, \rhohg,
\rhoeuc, \rhol1\}$.
For simplicity, the number of clusters $k$ is set to the ground truth.
For each
configuration, we repeat the clustering experiment based on 300 different random
datasets. The performance is measured by the normalized mutual information (NMI),
which is a scalar indicator in the range $[0,1]$ (the larger the better).

The results of $k$-means++ and $k$-centers are shown in Table~\ref{tbl:plusplus}
and Table~\ref{tbl:results}, respectively. The large variance of NMI is because
that each experiment is performed on random datasets wrt different random seeds.
Generally, the performance deteriorates as we increase the number of clusters,
increase the noise level or decrease the dimensionality,
which has the same effect to reduce the inter-cluster gap.

The key comparison is the three columns $\rhofhr$, $\rhohg$
and $\rho_{\mathrm{IG}}$, as they are based on exactly the same algorithm
with the only difference being the underlying geometry.
We see clearly that in general, their clustering performance presents the order $\mathrm{HG}>\mathrm{FHR}>\mathrm{IG}$.
The performance of HG is superior to the other two geometries, especially when the noise level is large.
Intuitively, the Hilbert balls are more compact in size and
therefore can better capture the clustering structure (see Fig.~(\ref{fig:results})).


The column $\rhoeuc$ is based on the Euclidean enclosing
ball. It shows the worst scores because the intrinsic geometry of the
probability simplex is far from the Euclidean geometry.


\begin{table*}
\centering
\caption{$k$-means++ clustering accuracy in NMI on randomly generated datasets
based on different geometries. The table shows
the mean and standard deviation after 300 independent runs for each configuration.
$\rho$ is the distance measure. $n$ is the sample size.
$d$ is the dimensionality of $\Delta^d$. $\sigma$ is noise level.}\label{tbl:plusplus}
\begin{tabular}{c|c|c|c|ccccc}
\toprule[1.5pt]
$k$ & $n$ & $d$ & $\sigma$ & $\rhofhr$ & $\rho_{\mathrm{IG}}$ & $\rhohg$ & $\rhoeuc$ & $\rho_{L1}$\\\hline
\multirow{12}{*}{3}
&\multirow{6}{*}{50} &\multirow{3}{*}{$9$}
  & $0.5$ & $0.76\pm0.22$ & $0.76\pm0.24$ & $\bm{0.81\pm0.22}$ & $0.64\pm0.23$ & $0.70\pm0.22$ \\
&&& $0.9$ & $0.44\pm0.20$ & $0.44\pm0.20$ & $\bm{0.57\pm0.22}$ & $0.31\pm0.17$ & $0.38\pm0.18$ \\\cline{3-9}
&&\multirow{3}{*}{$255$}
  & $0.5$ & $0.80\pm0.24$ & $0.81\pm0.24$ & $\bm{0.88\pm0.21}$ & $0.74\pm0.25$ & $0.79\pm0.24$ \\
&&& $0.9$ & $0.65\pm0.27$ & $0.66\pm0.28$ & $\bm{0.72\pm0.27}$ & $0.46\pm0.24$ & $0.63\pm0.27$ \\\cline{2-9}
&\multirow{6}{*}{100}
&\multirow{3}{*}{$9$}
 &  $0.5$ & $0.76\pm0.22$ & $0.76\pm0.21$ & $\bm{0.82\pm0.22}$ & $0.60\pm0.21$ & $0.69\pm0.23$ \\
&&& $0.9$ & $0.42\pm0.19$ & $0.41\pm0.18$ & $\bm{0.54\pm0.22}$ & $0.27\pm0.14$ & $0.34\pm0.16$ \\\cline{3-9}
&& \multirow{3}{*}{$255$}
  & $0.5$ & $0.82\pm0.23$ & $0.82\pm0.24$ & $\bm{0.89\pm0.20}$ & $0.74\pm0.24$ & $0.80\pm0.25$ \\
&&& $0.9$ & $0.66\pm0.26$ & $0.66\pm0.28$ & $\bm{0.72\pm0.26}$ & $0.45\pm0.25$ & $0.64\pm0.27$ \\\cline{1-9}
\multirow{12}{*}{5}
& \multirow{6}{*}{50}
& \multirow{3}{*}{9}
  & $0.5$ & $0.75\pm0.14$ & $0.74\pm0.15$ & $\bm{0.81\pm0.13}$ & $0.61\pm0.13$ & $0.68\pm0.13$ \\
&&& $0.9$ & $0.44\pm0.13$ & $0.42\pm0.13$ & $\bm{0.55\pm0.15}$ & $0.31\pm0.11$ & $0.36\pm0.12$ \\\cline{3-9}
&& \multirow{3}{*}{$255$}
  & $0.5$ & $0.83\pm0.15$ & $0.83\pm0.15$ & $\bm{0.88\pm0.14}$ & $0.77\pm0.16$ & $0.82\pm0.15$ \\
&&& $0.9$ & $0.71\pm0.17$ & $0.70\pm0.19$ & $\bm{0.75\pm0.17}$ & $0.50\pm0.17$ & $0.68\pm0.18$ \\\cline{2-9}
&\multirow{6}{*}{100}
& \multirow{3}{*}{$9$}
  & $0.5$ & $0.74\pm0.13$ & $0.74\pm0.14$ & $\bm{0.80\pm0.14}$ & $0.60\pm0.13$ & $0.67\pm0.13$ \\
&&& $0.9$ & $0.42\pm0.11$ & $0.40\pm0.12$ & $\bm{0.55\pm0.15}$ & $0.29\pm0.09$ & $0.35\pm0.11$ \\\cline{3-9}
&& \multirow{3}{*}{$255$}
  & $0.5$ & $0.83\pm0.14$ & $0.83\pm0.15$ & $\bm{0.88\pm0.13}$ & $0.77\pm0.15$ & $0.81\pm0.15$ \\
&&& $0.9$ & $0.69\pm0.18$ & $0.69\pm0.18$ & $\bm{0.73\pm0.17}$ & $0.48\pm0.17$ & $0.67\pm0.18$ \\
\bottomrule[1.5pt]
\end{tabular}
\\(a) generator 1\\\vspace{1em}
\begin{tabular}{c|c|c|c|ccccc}
\toprule[1.5pt]
$k$ & $n$ & $d$ & $\sigma$ & $\rhofhr$ & $\rho_{\mathrm{IG}}$ & $\rhohg$ & $\rhoeuc$ & $\rho_{L1}$\\\hline
\multirow{12}{*}{3}
&\multirow{6}{*}{50} &\multirow{3}{*}{$9$}
  & $0.5$ & $0.62\pm0.22$ & $0.60\pm0.22$ & $\bm{0.71\pm0.23}$ & $0.45\pm0.20$ & $0.54\pm0.22$ \\
&&& $0.9$ & $0.29\pm0.17$ & $0.27\pm0.16$ & $\bm{0.39\pm0.19}$ & $0.17\pm0.13$ & $0.25\pm0.15$ \\\cline{3-9}
&&\multirow{3}{*}{$255$}
  & $0.5$ & $0.70\pm0.25$ & $0.69\pm0.26$ & $\bm{0.74\pm0.25}$ & $0.37\pm0.29$ & $0.70\pm0.26$ \\
&&& $0.9$ & $\bm{0.42\pm0.25}$ & $0.35\pm0.20$ & $0.40\pm0.19$ & $0.03\pm0.08$ & $\bm{0.44\pm0.26}$ \\\cline{2-9}
&\multirow{6}{*}{100}
&\multirow{3}{*}{$9$}
 &  $0.5$ & $0.63\pm0.22$ & $0.61\pm0.22$ & $\bm{0.71\pm0.22}$ & $0.46\pm0.19$ & $0.56\pm0.20$ \\
&&& $0.9$ & $0.29\pm0.15$ & $0.26\pm0.14$ & $\bm{0.38\pm0.20}$ & $0.18\pm0.12$ & $0.24\pm0.14$ \\\cline{3-9}
&& \multirow{3}{*}{$255$}
  & $0.5$ & $0.71\pm0.26$ & $0.69\pm0.27$ & $\bm{0.75\pm0.25}$ & $0.31\pm0.28$ & $0.70\pm0.27$ \\
&&& $0.9$ & $0.41\pm0.26$ & $0.33\pm0.20$ & $0.38\pm0.18$ & $0.02\pm0.06$ & $\bm{0.43\pm0.26}$ \\\cline{1-9}
\multirow{12}{*}{5}
& \multirow{6}{*}{50}
& \multirow{3}{*}{9}
  & $0.5$ & $0.64\pm0.15$ & $0.61\pm0.14$ & $\bm{0.70\pm0.14}$ & $0.48\pm0.14$ & $0.57\pm0.15$ \\
&&& $0.9$ & $0.31\pm0.12$ & $0.29\pm0.12$ & $\bm{0.41\pm0.15}$ & $0.20\pm0.09$ & $0.26\pm0.10$ \\\cline{3-9}
&& \multirow{3}{*}{$255$}
  & $0.5$ & $0.74\pm0.17$ & $0.72\pm0.17$ & $\bm{0.77\pm0.16}$ & $0.41\pm0.20$ & $0.74\pm0.17$ \\
&&& $0.9$ & $0.44\pm0.17$ & $0.37\pm0.16$ & $0.44\pm0.15$ & $0.04\pm0.06$ & $\bm{0.47\pm0.17}$ \\\cline{2-9}
&\multirow{6}{*}{100}
& \multirow{3}{*}{$9$}
  & $0.5$ & $0.62\pm0.14$ & $0.61\pm0.14$ & $\bm{0.71\pm0.14}$ & $0.46\pm0.13$ & $0.54\pm0.14$ \\
&&& $0.9$ & $0.30\pm0.10$ & $0.27\pm0.11$ & $\bm{0.40\pm0.13}$ & $0.19\pm0.08$ & $0.25\pm0.09$ \\\cline{3-9}
&& \multirow{3}{*}{$255$}
  & $0.5$ & $0.73\pm0.18$ & $0.70\pm0.18$ & $\bm{0.75\pm0.16}$ & $0.37\pm0.20$ & $0.73\pm0.17$ \\
&&& $0.9$ & $0.43\pm0.16$ & $0.35\pm0.14$ & $0.41\pm0.12$ & $0.03\pm0.06$ & $\bm{0.46\pm0.18}$ \\
\bottomrule[1.5pt]
\end{tabular}
\\(b) generator 2
\end{table*}

\begin{table*}
\centering\caption{$k$-center clustering accuracy in NMI on randomly generated datasets
based on different geometries. The table shows
the mean and standard deviation after $300$ independent runs for each configuration.
$\rho$ is the distance measure. $n$ is the sample size.
$d$ is the dimensionality of the statistical simplex.
$\sigma$ is noise level.}\label{tbl:results}
\begin{tabular}{c|c|c|c|ccccc}
\toprule[1.5pt]
$k$ & $n$ & $d$ & $\sigma$ & $\rhofhr$ & $\rho_{\mathrm{IG}}$ & $\rhohg$ & $\rhoeuc$ & $\rho_{L1}$\\\hline
\multirow{12}{*}{3}
&\multirow{6}{*}{50} &\multirow{3}{*}{$9$}
  & $0.5$ & $0.87\pm0.19$ & $0.85\pm0.19$ & $\bm{0.92\pm0.16}$ & $0.72\pm0.22$ & $0.80\pm0.20$ \\
&&& $0.9$ & $0.54\pm0.21$ & $0.51\pm0.21$ & $\bm{0.70\pm0.23}$ & $0.36\pm0.17$ & $0.44\pm0.19$ \\\cline{3-9}
&&\multirow{3}{*}{$255$}
  & $0.5$ & $0.93\pm0.16$ & $0.92\pm0.18$ & $\bm{0.95\pm0.14}$ & $0.89\pm0.18$ & $0.90\pm0.19$ \\
&&& $0.9$ & $0.76\pm0.24$ & $0.72\pm0.26$ & $\bm{0.82\pm0.24}$ & $0.50\pm0.28$ & $0.76\pm0.25$ \\\cline{2-9}
&\multirow{6}{*}{100}
&\multirow{3}{*}{$9$}
 &  $0.5$ & $0.88\pm0.17$ & $0.86\pm0.18$ & $\bm{0.93\pm0.14}$ & $0.70\pm0.20$ & $0.80\pm0.20$ \\
&&& $0.9$ & $0.53\pm0.20$ & $0.49\pm0.19$ & $\bm{0.70\pm0.22}$ & $0.33\pm0.14$ & $0.41\pm0.18$ \\\cline{3-9}
&& \multirow{3}{*}{$255$}
  & $0.5$ & $0.93\pm0.16$ & $0.92\pm0.17$ & $\bm{0.95\pm0.13}$ & $0.88\pm0.19$ & $0.93\pm0.16$ \\
&&& $0.9$ & $0.81\pm0.22$ & $0.75\pm0.24$ & $\bm{0.83\pm0.22}$ & $0.47\pm0.28$ & $0.79\pm0.22$ \\\cline{1-9}
\multirow{12}{*}{5}
& \multirow{6}{*}{50}
& \multirow{3}{*}{9}
  & $0.5$ & $0.82\pm0.13$ & $0.81\pm0.13$ & $\bm{0.89\pm0.12}$ & $0.67\pm0.13$ & $0.75\pm0.13$ \\
&&& $0.9$ & $0.50\pm0.13$ & $0.47\pm0.13$ & $\bm{0.66\pm0.15}$ & $0.34\pm0.11$ & $0.40\pm0.12$ \\\cline{3-9}
&& \multirow{3}{*}{$255$}
  & $0.5$ & $\bm{0.92\pm0.11}$ & $\bm{0.91\pm0.12}$ & $\bm{0.93\pm0.11}$ & $0.87\pm0.13$ & $\bm{0.92\pm0.12}$ \\
&&& $0.9$ & $0.77\pm0.15$ & $0.71\pm0.17$ & $\bm{0.85\pm0.17}$ & $0.54\pm0.19$ & $0.74\pm0.16$ \\\cline{2-9}
&\multirow{6}{*}{100}
& \multirow{3}{*}{$9$}
  & $0.5$ & $0.83\pm0.12$ & $0.81\pm0.13$ & $\bm{0.89\pm0.11}$ & $0.67\pm0.11$ & $0.76\pm0.13$ \\
&&& $0.9$ & $0.48\pm0.12$ & $0.46\pm0.12$ & $\bm{0.66\pm0.15}$ & $0.33\pm0.09$ & $0.39\pm0.10$ \\\cline{3-9}
&& \multirow{3}{*}{$255$}
  & $0.5$ & $\bm{0.93\pm0.10}$ & $\bm{0.92\pm0.11}$ & $\bm{0.94\pm0.09}$ & $0.89\pm0.11$ & $0.92\pm0.11$ \\
&&& $0.9$ & $0.81\pm0.14$ & $0.74\pm0.15$ & $\bm{0.84\pm0.16}$ & $0.52\pm0.19$ & $0.79\pm0.14$ \\
\bottomrule[1.5pt]
\end{tabular}
\\(a) generator 1\\\vspace{1em}
\begin{tabular}{c|c|c|c|ccccc}
\toprule[1.5pt]
$k$ & $n$ & $d$ & $\sigma$ & $\rhofhr$ & $\rho_{\mathrm{IG}}$ & $\rhohg$ & $\rhoeuc$ & $\rho_{L1}$\\\hline
\multirow{12}{*}{3}
&\multirow{6}{*}{50} &\multirow{3}{*}{$9$}
  & $0.5$ & $0.68\pm0.22$ & $0.67\pm0.22$ & $\bm{0.80\pm0.20}$ & $0.48\pm0.22$ & $0.60\pm0.22$ \\
&&& $0.9$ & $0.32\pm0.18$ & $0.29\pm0.17$ & $\bm{0.45\pm0.21}$ & $0.20\pm0.14$ & $0.26\pm0.15$ \\\cline{3-9}
&&\multirow{3}{*}{$255$}
  & $0.5$ & $0.79\pm0.24$ & $0.75\pm0.24$ & $\bm{0.82\pm0.22}$ & $0.13\pm0.23$ & $\bm{0.81\pm0.24}$ \\
&&& $0.9$ & $0.35\pm0.27$ & $0.35\pm0.21$ & $\bm{0.42\pm0.19}$ & $0.00\pm0.02$ & $0.32\pm0.30$ \\\cline{2-9}
&\multirow{6}{*}{100}
&\multirow{3}{*}{$9$}
 &  $0.5$ & $0.66\pm0.22$ & $0.65\pm0.22$ & $\bm{0.79\pm0.21}$ & $0.45\pm0.19$ & $0.59\pm0.20$ \\
&&& $0.9$ & $0.30\pm0.16$ & $0.28\pm0.14$ & $\bm{0.42\pm0.19}$ & $0.20\pm0.12$ & $0.26\pm0.14$ \\\cline{3-9}
&& \multirow{3}{*}{$255$}
  & $0.5$ & $0.78\pm0.25$ & $0.76\pm0.24$ & $\bm{0.82\pm0.21}$ & $0.05\pm0.14$ & $0.77\pm0.27$ \\
&&& $0.9$ & $0.29\pm0.28$ & $0.29\pm0.20$ & $\bm{0.39\pm0.20}$ & $0.00\pm0.02$ & $0.22\pm0.25$ \\\cline{1-9}
\multirow{12}{*}{5}
& \multirow{6}{*}{50}
& \multirow{3}{*}{9}
  & $0.5$ & $0.69\pm0.14$ & $0.66\pm0.14$ & $\bm{0.77\pm0.13}$ & $0.50\pm0.13$ & $0.61\pm0.14$ \\
&&& $0.9$ & $0.34\pm0.12$ & $0.30\pm0.12$ & $\bm{0.46\pm0.15}$ & $0.22\pm0.09$ & $0.28\pm0.10$ \\\cline{3-9}
&& \multirow{3}{*}{$255$}
  & $0.5$ & $\bm{0.80\pm0.15}$ & $0.76\pm0.15$ & $\bm{0.82\pm0.14}$ & $0.24\pm0.23$ & $\bm{0.81\pm0.14}$ \\
&&& $0.9$ & $0.42\pm0.21$ & $0.38\pm0.16$ & $\bm{0.46\pm0.15}$ & $0.00\pm0.02$ & $0.39\pm0.22$ \\\cline{2-9}
&\multirow{6}{*}{100}
& \multirow{3}{*}{$9$}
  & $0.5$ & $0.66\pm0.13$ & $0.64\pm0.14$ & $\bm{0.77\pm0.14}$ & $0.47\pm0.13$ & $0.57\pm0.13$ \\
&&& $0.9$ & $0.31\pm0.11$ & $0.28\pm0.10$ & $\bm{0.44\pm0.13}$ & $0.21\pm0.08$ & $0.25\pm0.09$ \\\cline{3-9}
&& \multirow{3}{*}{$255$}
  & $0.5$ & $\bm{0.80\pm0.16}$ & $0.76\pm0.15$ & $\bm{0.82\pm0.13}$ & $0.12\pm0.17$ & $\bm{0.81\pm0.16}$ \\
&&& $0.9$ & $0.32\pm0.19$ & $0.30\pm0.15$ & $\bm{0.41\pm0.13}$ & $0.00\pm0.01$ & $0.26\pm0.18$ \\
\bottomrule[1.5pt]
\end{tabular}
\\(b) generator 2
\end{table*}

We also benchmark the $k$-means++ clustering on {\em positive} measures (not necessarily normalized).
The experimental results are reported in Table~\ref{tab:eKLBG}.
Divergences $\rho_{\mathrm{IG}^+}$, $\rho_{\mathrm{rIG}^+}$  and $\rho_{\mathrm{sIG}^+}$ are the extended Kullback-Leibler divergence, the extended reverse Kullback-Leibler divergence and the extended symmetrized Kullback-Leibler divergence, respectively:

\begin{eqnarray}
\rho_{\mathrm{IG}^+}(p,q) &=& \sum_{i=0}^d \lambda_p^i \log \frac{\lambda_p^i}{\lambda_q^i}+\lambda_q^i-\lambda_p^i,\\
\rho_{\mathrm{rIG}^+}(p,q) &=& \rho_{\mathrm{IG}}(q,p) = \sum_{i=0}^d \lambda_q^i \log \frac{\lambda_q^i}{\lambda_p^i}+\lambda_p^i-\lambda_q^i,\\
\rho_{\mathrm{sIG}^+}&=&\rho_{\mathrm{IG}}(p,q)+\rho_{\mathrm{IG}}(q,p)=(\lambda_p^i-\lambda_q^i) \log \frac{\lambda_p^i}{\lambda_q^i}.
\end{eqnarray}

Table~\ref{tab:eKLBG} shows experimentally better results for the Birkhoff  metric on the standard positive cone:
\begin{equation}
\rho_{\mathrm{BG}}(p,q) = \log\max_{i\in\{0,\ldots, d\},j\in\{0,\ldots, d\}} \frac{\lambda_p^i\lambda_q^j}{\lambda_p^j\lambda_q^i}.
\end{equation}

\begin{table}
\centering
\caption{%
$k$-means\texttt{++} clustering of $N=50$ random samples of $d=10$ dimensional {\em positive} measures.
$k$ is the ground truth number of clusters, $\sigma$ is the noise level.
The data is generated similarly to the previous experiments with each
random sample $p$ multiplied by a random scalar distributed based on a gamma distribution $\Gamma(10,0.1)$.
For each configuration we repeat $300$ runs and report the mean and standard
deviation of the NMI score.\label{tab:eKLBG}}
\begin{tabular}{cccccc}
\hline
$k$ & $\sigma$ & $\rho_{\mathrm{IG}^+}$ & $\rho_{\mathrm{rIG}^+}$ & $\rho_{\mathrm{sIG}^+}$ & $\rho_{\mathrm{BG}}$ \\
\hline
3 & 0.5 & $0.66\pm0.21$ & $0.67\pm0.20$ & $0.66\pm0.21$ & $\bm{0.86\pm0.18}$ \\
3 & 0.9 & $0.37\pm0.16$ & $0.38\pm0.19$ & $0.37\pm0.19$ & $\bm{0.62\pm0.22}$ \\
5 & 0.5 & $0.68\pm0.13$ & $0.68\pm0.12$ & $0.70\pm0.12$ & $\bm{0.85\pm0.11}$ \\
5 & 0.9 & $0.42\pm0.10$ & $0.43\pm0.11$ & $0.44\pm0.12$ & $\bm{0.63\pm0.13}$\\
\hline
\end{tabular}
\end{table}

\section{Hilbert geometry of the space of correlation matrices\label{sec:elliptope}}

In this section, we present the Hilbert geometry of the space of correlation matrices
$$
\mathcal{C}^d = \left\{ C_{d\times{d}}\,:\,C\succ0; C_{ii}=1, \forall{i} \right\}.
$$
If $C_1,C_2\in\mathcal{C}$, then $C_\lambda=(1-\lambda) C_1+\lambda C_2\in\mathcal{C}$ for $0\leq \lambda\leq 1$.
Therefore $\mathcal{C}$ is a convex set, known as an \emph{elliptope}~\cite{CorrelationElliptope-2018} embedded in the cone $\calP_+$ of positive semi-definite matrices.
See Fig.~(\ref{fig:elliptope}) for an intuitive 3D rendering of $\mathcal{C}_3$,
where the coordinate system $(x,y,z)$ is the off-diagonal entries of $C\in\mathcal{C}_3$.
The boundary of $\mathcal{C}^3$ is related to a Cayley symmetroid surface (or Cayley cubic surface) in algebraic geometry~\cite{Nie-2010}.
The elliptope is smooth except at the standard simplex vertices.

\begin{figure}
\centering
\includegraphics[width=.4\textwidth]{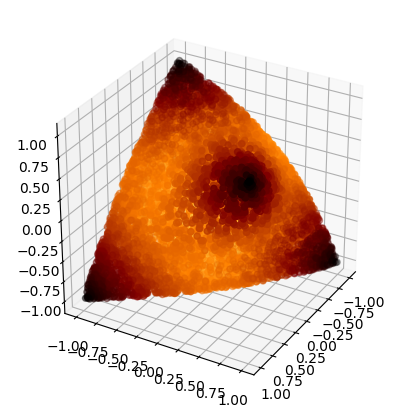}
\includegraphics[width=.4\textwidth]{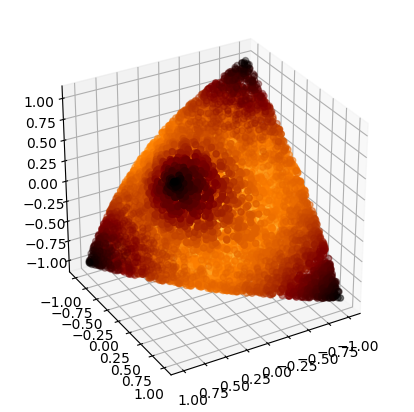}\vskip 0.5cm
\includegraphics[width=.6\textwidth]{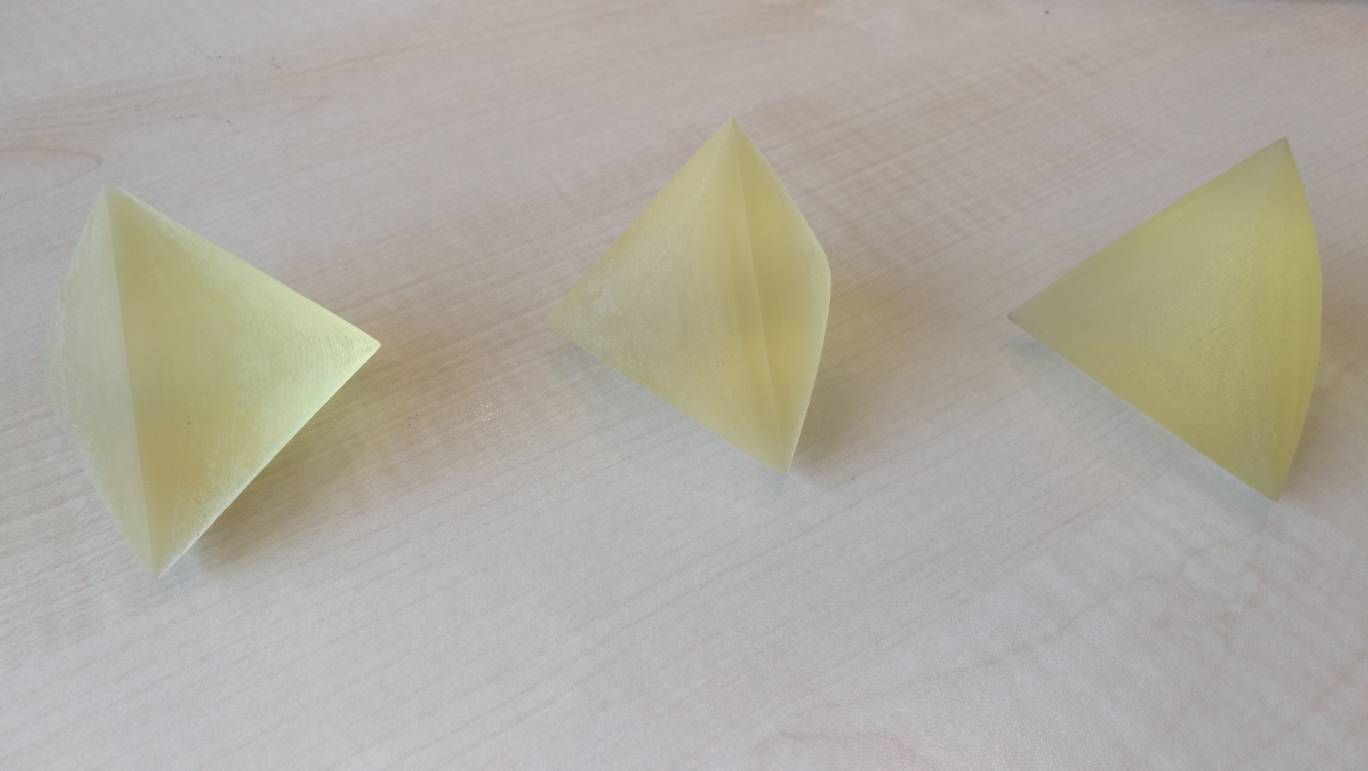}

\caption{The elliptope $\mathcal{C}_3$ (two different perspectives), and three 3D printed elliptopes.}\label{fig:elliptope}
\end{figure}

A straightforward algorithm to compute the Hilbert distance $\rhohg(C_1,C_2)$ consists in computing
the intersection of the line $(C_1,C_2)$ with
$\partial\mathcal{C}$, denoted as $C_1'$ and $C_2'$.
Then we have
$$
\rhohg(C_1,C_2)=
\left\vert
\log\frac{\Vert{}C_1-C_2'\Vert \Vert{}C_1'-C_2\Vert}{\Vert{}C_1-C_1'\Vert \Vert{}C_2-C_2'\Vert}
\right\vert,
$$
where $\|X\|=\sqrt{\inner{X}{X}}$ with the matrix inner product, the {\em trace inner product}: $\inner{X}{Y}=\tr(X^\top Y)$.

Thus a naive algorithm consists in applying a binary searching algorithm.
Note that a necessary condition for $C\in\mathcal{C}$ is that $C$ has a positive spectrum (all positive eigenvalues):
Indeed, if $C$ has at least one non-positive eigenvalue then $C\notin\mathcal{C}$.
Thus to determine whether a given matrix $C$ is inside the elliptope or not requires a spectral
decomposition of $C$ (cubic time~\cite{Eigenvalue-1999}). Therefore the computation of $C_1'$ and $C_2'$ is in general expensive.
In practice, we can approximate the largest eigenvalue using the iterative power method (and the smallest eigenvalue by applying the power method on the inverse matrix).

The problem with this bisection scheme is that we only find a correlation matrix that is slightly off the boundary of the elliptope,
but not perfectly on the boundary, as it is required for computing the log cross-ratio formula of Hilbert distances.
However, we get an interval range where  the true Hilbert distance lies as follows:
Let $\bar{P}^+$  and $\bar{P}^-$ denote the inside and outside points closed to border for $\lambda<0$, respectively.
Similarly, let $\bar{Q}^+$  and $\bar{Q}^-$ denote the inside and outside points closed to border for $\lambda>0$, respectively.
Then we use the monotonicity of Hilbert distances by nested containments,
$\rho_{\HG}^{\calC}(p,q) >  \rho_{\calC'}^{\calC'}(p,q)$ for $\calC\subset\calC'$, to get a guaranteed approximation interval of the Hilbert distance.

We describe two techniques that improve over this naive method:
(1) an approximation technique using an exact formula for polytopal Hilbert geometries, and
(2) an exact formula for the Birkhoff's projective metric using a whitening transformation for covariance matrices.

\begin{enumerate}
	\item Let us approximate the elliptope $\calC^d$ by a polytope $\calP$ as follows:
	First, consider a correlation matrix $C$ as a point in dimension $D=\frac{d(d-1)}{2}$ using half-vectorization of off-diagonal elements of the matrix~\cite{SymLowner-2017}.
 Then compute the convex hull of $s$ sample correlation matrix points $p_1,\ldots, p_s$ (e.g., using {\tt qhull}~\cite{Quickhull-1996}) to get a bounded polytope  $\calP$. Let $m$ denote the number of facets of $\mathrm{conv}(p_1,\ldots, p_s)$.
Figure~\ref{fig:elliptope} displays a photo of three 3D printed elliptopes using this discretization method.
Finally, apply the closed-form equation of Proposition~3.1 of~\cite{HGPolytope-2014} which holds for {\em any} polytopal Hilbert geometry:

\begin{equation}\label{bgpolytope}
\rho_{\HG}(c_1,c_2)= \max_{i,j\in [m]} \log \frac{L_i(c_1)L_j(c_2)}{L_j(c_1)L_i(c_2)},
\end{equation}
where the interior of the polytope is expressed as:
\begin{equation}
\calP = \{ x\in\bbR^D \st L_i(x)>0,\ i\in [m]\}.
\end{equation}
That is, the equations $L_i(x)$'s define the supporting hyperplanes of the polytope $\calP$.
Notice that the closed-form formula of Eq.~\ref{bgpolytope} generalizes the formula of Eq.~\ref{eq:bgposcone} for the positive orthant cone.

\item Exact method for the  Hilbert/Birkhoff projective metric of covariance matrices:

Let $\calP_+$ denote the pointed cone of positive semi-definite matrices.
Cone $\calP_+$ defines a partial order $\preceq$ (called L\"owner ordering~\cite{SymLowner-2017}):
$\Sigma_1 \preceq \Sigma_2$ iff. $\Sigma_2-\Sigma_1\in C$ (i.e, positive semi-definite, notationally written as $\Sigma_2-\Sigma_1\succeq 0$).
The extreme rays of the cone are matrices of rank $1$ which can be written as $\lambda v^\top v$ for $\lambda>0$ and $v\not=0\in\bbR^d$~\cite{ConePSD-2011}.

In the remainder, we consider positive definite matrices belonging to the interior $\calP_{++}$ of the cone $\calP_+$.
We say that matrix $\Sigma_2$ dominates $\Sigma_1$ iff. there exist $\alpha\in\bbR$ and $\beta\in\bbR$
such that $\alpha \Sigma_2 \preceq \Sigma_1 \preceq \beta \Sigma_2$.
We define the equivalence $\Sigma_1 \sim \Sigma_2$ iff. $\Sigma_2$ dominates $\Sigma_1$ {\em and} $\Sigma_1$ dominates $\Sigma_2$.

Let
$$
M(\Sigma_1,\Sigma_2) \eqdef \inf\{\beta\in\bbR \st \Sigma_1 \preceq \beta \Sigma_2\},
$$
and
$$
m(\Sigma_1,\Sigma_2) \eqdef \sup\{\alpha\in\bbR \st \alpha \Sigma_2 \preceq \Sigma_1\}.
$$
The Birkhoff's metric is defined as the following distance:
\begin{equation}
\rho_{\BG}(\Sigma_1,\Sigma_2) = \left\{
\begin{array}{ll}
\log \frac{M(\Sigma_1,\Sigma_2)}{m(\Sigma_1,\Sigma_2)} & \Sigma_1 \sim \Sigma_2,\\
0 & \mbox{$\Sigma_1=0$ and $\Sigma_2=0$},\\
\infty & \mbox{otherwise}
\end{array}
\right.
\end{equation}
The Birkhoff's metric $\rho_{\BG}$ is a projective distance:
$\rho_{\BG}\rho(\Sigma_1,\Sigma_2)=\rho_{\BG}(\alpha\Sigma_1,\beta\Sigma_2)$ for any $\alpha,\beta>0$.
This metric is also called the Hilbert's projective metric.
Since $\Sigma_1\succeq \Sigma_2 \Leftrightarrow \Sigma_2^{-1}\succeq \Sigma_1^{-1}$,
we have $m(\Sigma_1,\Sigma_2)=M(\Sigma_2,\Sigma_1)^{-1}$, and it comes that
$$
\rho_{\BG}(\Sigma_1,\Sigma_2) = \log M(\Sigma_1,\Sigma_2)M(\Sigma_2,\Sigma_1),
$$
for $\Sigma_1 \sim \Sigma_2$.

Consider the asymmetric distance $\rho_F$:
\begin{equation}
\rho_F(\Sigma_1,\Sigma_2)\eqdef \log M(\Sigma_1,\Sigma_2).
\end{equation}
This is the {Funk weak metric} that satisfies the triangle inequality but it is not a symmetric metric distance.
Then Birkhoff's projective metric is interpreted as the symmetrization of Funk weak metric:
$$
\rho_{\BG}(\Sigma_1,\Sigma_2)=\rho_{F}(\Sigma_1,\Sigma_2)+\rho_{F}(\Sigma_2,\Sigma_1).
$$


Now, let us consider the Birkhoff's projective metric between
a $d\times d$ positive-definite diagonal matrix $\Sigma_1=D=\diag(\lambda_1,\ldots,\lambda_d)$ and the
 identity matrix $\Sigma_2=I=\diag(1,\ldots,1)$.
We have:

$$
M(D,I) = \inf\{\beta\in\bbR \st  \beta I-D\succeq 0 \},
$$

That is, we have $M(D,I)=\min_\beta  \{\beta-\lambda_i\geq 0 \st \forall i\in [d]\}$.
That is, $\beta=\max_{i\in [d]} \lambda_i=\lambda_\max(D)=\max_i\lambda_i$, where $\lambda_\max(D)$ denote the maximal eigenvalue of matrix $D$.
Similarly, we have
$$
m(D,I) = \sup\{\alpha\in\bbR \st   D-\alpha I\succ 0\}.
$$
That is, $m(D,I)=\lambda_\min(D)$, where $\lambda_\min(D)$ denote the minimal eigenvalue of matrix $D$.
Observe that $\lambda_\min(D)=\lambda_\max(D^{-1})$, with $D^{-1}=\diag(\frac{1}{\lambda_1},\ldots,\frac{1}{\lambda_d})$.
Thus, we check that $M(D,I)=\lambda_\max(D)=M(I,D)^{-1}$.

For general symmetric positive-definite covariance matrix $\Sigma$ and $\Sigma'$, we first perform a joint diagonalization of these matrices using the whitening transformation (see~\cite{Fukunaga-2013} p. 31  and Fig 2-3)
so that we obtain $\Sigma=\Sigma_1^{-1}\Sigma_2$ and $\Sigma'=I$.
We have $\Sigma \sim \Sigma'$.
We get the closed-form formula for the Birkhoff projective covariance metric as:

\begin{equation}\label{eq:canhgf}
\rho_{\BG}(\Sigma_1,\Sigma_2) =   \log \frac{\lambda_\max(\Sigma_1^{-1}\Sigma_2)}{\lambda_\min(\Sigma_1^{-1}\Sigma_2)} = \|\log \lambda(\Sigma_1^{-1}\Sigma_2) \|_{\var},
\end{equation}
where  $\lambda(X)$, $\lambda_\min(X)$ and $\lambda_\max(X)$ denote the eigenvalues, and the smallest and largest (real) eigenvalues of matrix $X$, respectively, and $\|\cdot\|_{\var}$ denotes the variation norm.
Note that we have $\rho_{\BG}(\alpha_1\Sigma_1,\alpha_2\Sigma_2) = \rho_{\BG}(\Sigma_1,\Sigma_2)$ for any $\alpha_1,\alpha_2>0$ since $\lambda_\max(\alpha\Sigma)=\alpha\lambda_\max(\Sigma)$ and $\lambda_\min(\alpha\Sigma)=\alpha\lambda_\min(\Sigma)$.
Moreover, we have $\rho_{\BG}(\Sigma_1,\Sigma_2) =0$ iff. $\Sigma_1=\lambda\Sigma_2$ for any $\lambda>0$.
This Hilbert elliptope distance of Eq.~\ref{eq:canhgf} was also reported in~\cite{bonnabel2011contraction,georgiou2015positive} for symmetric positive-definite matrices.

Let us state the invariance properties~\cite{InvarianceCovariance-1991} of this matrix metric:

\begin{property}[Invariance]
We have $\rho_{\HG}(C_1,C_2)=\rho_{\HG}(C_1^{-1},C_2^{-1})$
and $\rho_{\HG}(AC_1B,AC_2B)=\rho_{\HG}(C_1,C_2)$ for any $A,B\in\mathrm{GL}(d)$.
\end{property}

\begin{proof}
Let $\lambda(X)=\{\lambda_1,\ldots,\lambda_d\}$ so that
$\lambda(X^{-1})=\{\frac{1}{\lambda_1},\ldots,\frac{1}{\lambda_d}\}$.
We have $\lambda_\max(X^{-1})=\frac{1}{\lambda_\min(X)}$ and $\lambda_\min(X^{-1})=\frac{1}{\lambda_\max(X)}$.
It follows that
$\frac{\lambda_\max(C_1C_2^{-1})}{\lambda_\min(C_1C_2^{-1})}=\frac{\lambda_\max(C_1^{-1}C_2)}{\lambda_\min(C_1^{-1}C_2)}$,
and therefore $\rho_{\HG}(C_1,C_2)=\rho_{\HG}(C_1^{-1},C_2^{-1})$.

Now, let us notice that $\rho_{\HG}(C_1,C_2)=\rho_{\HG}(GC_1,GC_2)$ for any invertible transformation $G\in\mathrm{GL}(d)$
since $(GC_1)^{-1}GC_2=C_1^{-1}(G^{-1}G_2)C_2=C_1^{-1}C_2$.
Similarly, since $\rho_{\HG}(C_1,C_2)=\rho_{\HG}(C_1^{-1},C_2^{-1})$, we have
 $\rho_{\HG}(C_1G,C_2G)=\rho_{\HG}(G^{-1}C_1^{-1},G^{-1}C_2^{-1})=\rho(C_1^{-1},C_2^{-1})=\rho_{\HG}(C_1,C_2)$ since $G^{-1}\in\mathrm{GL}(d)$.
Thus in general, for $A,B\in\mathrm{GL}(d)$, it holds that $\rho_{\HG}(AC_1B,AC_2B)=\rho_{\HG}(C_1,C_2)$.
\end{proof}

It follows that $\rho_{\HG}(C_1,C_2)=\rho_{\BG}(C_1,C_2)=\rho_{\BG}(I,C_1^{-1}C_2)=\rho_{\BG}(I,C_2C_1^{-1})$.
Notice that two diagonal matrices $D_1$ and $D_2$ are equal iff. $\lambda_\max(D_1^{-1}D_2)=\lambda_\min(D_1^{-1}D_2)=1$, and therefore we may not need to consider the in-between eigenvalues to define a discrepancy measure.

The inverse of a correlation matrix, its unique square root, the product of two correlation matrices, or the product of a correlation matrix with an orthogonal matrix are  in general not a correlation matrix but only a covariance matrix.
Thus $C_1^{-1}C_2$ is not a correlation matrix but a covariance matrix.




The Birkhoff's projective metric formula of Eq.~\ref{eq:canhgf} applies to correlation matrices and yields the proper Hilbert's elliptope metric:
\begin{equation}\label{eq:canhg2}
\rho_{\HG}(C_1,C_2) = \log \frac{\lambda_\max(C_1^{-1}C_2)}{\lambda_\min(C_1^{-1}C_2)} = \|\log \lambda(C_1^{-1}C_2)) \|_{\var},
\end{equation}
with $\rho_{\HG}(C_1,C_2)=0$ iff $C_1=C_2$.
Note that the elliptope is a subspace of dimension $\frac{d(d-1)}{2}$ of the positive semi-definite cone of dimension $\frac{d(d+1)}{2}$.

A Python snippet code is reported in~\ref{sec:python}.

\end{enumerate}

We refer to~\cite{ClusteringCorrelMat-1975,ClusteringCorrelMat-2005,wgcna-2008,ClusteringCorrelMat-2018} for methods and applications dealing with the clustering of correlation matrices.

We compare the Hilbert elliptope geometry~\cite{GeometryCorrelation-2008} with commonly used distance measures~\cite{DistanceCorrelation-2017}
including the $L_2$ distance $\rhoeuc$, $L_1$ distance $\rhol1$,
and the square root of the log-det divergence
$$
\rho_{\mathrm{LD}}(C_1,C_2) = \mathrm{tr}(C_1C_2^{-1}) - \log\vert{}C_1C_2^{-1}\vert - d.
$$
Due to the high computational complexity, we only investigate $k$-means++ clustering.
The investigated dataset consists of 100 matrices forming 3 clusters in $\mathcal{C}_3$ with
almost identical size.  Each cluster is independently generated according to
\begin{align}
&P\sim \mathcal{W}^{-1}(I_{3\times{3}}, \nu_1),\nonumber\\
&C_i\sim \mathcal{W}^{-1}(P,\nu_2),\nonumber
\end{align}
where $\mathcal{W}^{-1}(A,\nu)$ denotes the inverse Wishart distribution with
scale matrix $A$ and $\nu$ degrees of freedom, and $C_i$ is a point in the cluster
associated with $P$.
Table \ref{tbl:elliptope} shows the $k$-means++ clustering performance in terms of NMI.
Again Hilbert geometry is favorable as compared to alternatives, showing
that the good performance of Hilbert clustering is generalizable.
\begin{table}[t]
\centering
\caption{NMI (mean$\pm$std) of $k$-means++ clustering based on different distance measures in the elliptope
(500 independent runs)}
\label{tbl:elliptope}
\begin{tabular}{cc|cccc}
\toprule[1.5pt]
$\nu_1$ & $\nu_2$    &  $\rhohg$ & $\rhoeuc$ & $\rhol1$ & $\rho_{\mathrm{LD}}$ \\
\hline
$4$ & $10$ &  $\bm{0.62\pm0.22}$ & 0.57$\pm$0.21 & 0.56$\pm$0.22 & 0.58$\pm$0.22\\
$4$ & $30$ &  $\bm{0.85\pm0.18}$ & 0.80$\pm$0.20 & 0.81$\pm$0.19 & 0.82$\pm$0.20\\
$4$ & $50$ &  $\bm{0.89\pm0.17}$ & 0.87$\pm$0.17 & 0.86$\pm$0.18 & 0.88$\pm$0.18\\
\hline
$5$ & $10$ &  $\bm{0.50\pm0.21}$ & 0.49$\pm$0.21 & 0.48$\pm$0.20 & 0.47$\pm$0.21\\
$5$ & $30$ &  $\bm{0.77\pm0.20}$ & 0.75$\pm$0.21 & 0.75$\pm$0.21 & 0.75$\pm$0.21\\
$5$ & $50$ &  $\bm{0.84\pm0.19}$ & 0.82$\pm$0.19 & 0.82$\pm$0.20 & $\bm{0.84\pm0.18}$\\
\bottomrule[1.5pt]
\end{tabular}
\end{table}

We also consider the {\em Thompson metric distance}~\cite{Lemmens-2017} defined over the cone of positive-semidefinite matrices:
\begin{equation}
\rho_{\TG}(P,Q) \eqdef \max_i |\log \lambda_i(P^{-1}Q)|.
\end{equation}


Each psd. matrix $P_i$ belongs to a cluster $c$ which is generated based on the equation:
\begin{equation}
P_i = Q_c \mathrm{diag}(L_i) Q_c^\top + \sigma A_i A_i^\top,
\end{equation}
where
$Q_c$ is a random matrix of orthonormal columns,
$L_i$ follows a gamma distribution,
the entries of $(A_i)_{d\times{d}}$ follow iid. standard Gaussian distribution,
 and $\sigma$ is a noise level parameter.

\begin{table}
\centering
\caption{$k$-means\texttt{++} clustering performance in NMI
wrt different distances defined on the p.s.d. cone.
For each clustering experiment, $N=250$ random matrices are generated, each with size $2\times{2}$,
 forming $5$ clusters in the psd. cone.}\label{tab:tg}
\begin{tabular}{cccccc}
\hline
$L_i$ & $\sigma$ & $\rho_{\mathrm{IG}}$ & $\rho_{\mathrm{rIG}}$ & $\rho_{\mathrm{sIG}}$ & $\rho_{\mathrm{TG}}$ \\
\hline
$\Gamma(2,1)$ & 0.1 & $0.81\pm0.09$ & $\bm{0.82\pm0.10}$ & $0.81\pm0.10$ & $0.81\pm0.09$ \\
$\Gamma(2,1)$ & 0.3 & $0.51\pm0.11$ & $\bm{0.54\pm0.11}$ & $0.53\pm0.11$ & $0.52\pm0.11$ \\
$\Gamma(5,1)$ & 0.1 & $\bm{0.93\pm0.07}$ & $\bm{0.93\pm0.08}$ & ${0.92\pm0.08}$ & ${0.92\pm0.08}$ \\
$\Gamma(5,1)$ & 0.3 & $0.70\pm0.10$ & $\bm{0.72\pm0.12}$ & $0.71\pm0.10$ & $0.70\pm0.11$ \\
\hline
\end{tabular}
\end{table}

Table~\ref{tab:tg} displays the experimental results: Thompson metric does not improve over the asymmetric Kullback-Leibler divergence or its reverse divergence.


\section{Conclusion}\label{sec:con}

We introduced the Hilbert (projective) metric distance and its underlying non-Riemannian geometry
for modeling the space of multinomials or the open probability simplex.
We compared experimentally in simulated clustering tasks
this geometry with the traditional differential geometric modelings
(either the Fisher-Hotelling-Rao metric connection or the dually coupled non-metric affine connections of information geometry~\cite{IG-2016}).

The main feature of Hilbert geometry (HG) is that it is a metric non-manifold geometry, where geodesics are straight (Euclidean) line segments. This makes this geometry computationally attractive.
In simplex domains, the Hilbert balls have fixed combinatorial (Euclidean) polytope structures,
and HG is known to be isometric to a normed space~\cite{HilbertHarpe-1991,HilbertNormedSpace-2005}.
This latter isometry allows one to generalize easily the standard proofs of clustering ({\it e.g.}, $k$-means or $k$-center).
We demonstrated it for the $k$-means++ competitive performance analysis and for the convergence of the $1$-center heuristic~\cite{bc-2003} (smallest enclosing Hilbert ball allows one to implement efficiently the $k$-center clustering).
Our experimental  $k$-means++ or $k$-center comparisons of HG algorithms with the  manifold modeling approach yield superior performance.
This may be intuitively explained by the sharpness of Hilbert balls as compared to the FHR/IG ball profiles.
However, when considering the Hilbert simplex geometry, let us notice that we do not get a Busemann's geodesic space~\cite{Busemann-1955}
 (Busemann $G$-space) since the region:
\begin{equation}
R(p,q) \eqdef \left\{r \st \rho_\HG(p,q)=\rho_\HG(p,r)+\rho_\HG(q,r) \right\},
\end{equation}
defining the set of points $r$ such that
triples $(p,r,q)$ satisfy the triangle equality is {\em not} a line segment but rather the intersection of cones~\cite{HilbertHarpe-1991} as illustrated in Figure~\ref{fig:notgeodesic}.
We proved that the Hilbert simplex geometry satisfies the property of information monotonicity (Theorem~\ref{thm:hsgim}) albeit being non-separable, a key requirement in information geometry~\cite{IG-2016}.
The automorphism group (the group of motions) of the Hilbert simplex geometry is reported in~\cite{HilbertHarpe-1991}.

\begin{figure}
\centering
\includegraphics[width=.5\textwidth]{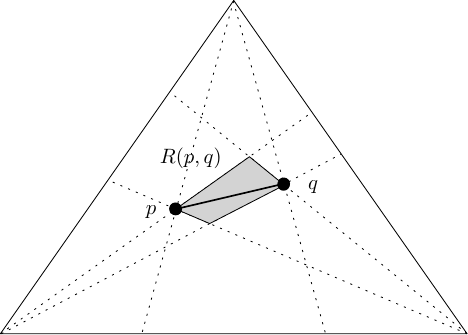}

\caption{The geodesics are {\em not} unique in a polygonal Hilbert geometry:
Region $R(p,q)$ denotes the set of points $r$ satisfying the triangle equality with respect to $p$ and $q$:
 $\rho_\HG(p,q)=\rho_\HG(p,r)+\rho_\HG(q,r)$.\label{fig:notgeodesic}}

\end{figure}

Chentsov~\cite{cencov-2000} defined statistical invariance on a probability manifold under Markov morphisms and proved that the Fisher Information Metric is the unique Riemannian metric (up to rescaling) for multinomials. However, this does not rule out that other distances (with underlying geometric structures) may be used to model statistical manifolds ({\it e.g.}, Finsler statistical manifolds~\cite{Cena-2002,FinslerIG-2016}, or the total variation distance --- the only metric $f$-divergence~\cite{L1metricfdiv-2007}). Defining statistical invariance related to geometry is the cornerstone problem of information geometry that can be tackled in many directions (see~\cite{statinvar-2017} and references therein for a short review).
The Hilbert cross-ratio metric is by construction invariant to the group of collineations.

In this paper, we introduced Hilbert geometries in machine learning by considering clustering tasks in the probability simplex and in the correlation elliptope. A canonical Hilbert metric distance can be defined on any bounded convex subset of the Euclidean space with
the key property that geodesics are straight Euclidean line segments thus making this geometry well-suited for fast and exact computations.
Thus we may consider clustering in other bounded convex subsets like the simplotopes~\cite{Simplotop-Doup-2012}.

Recently, hyperbolic geometry gained interests in the machine learning community as it enjoys the good property of isometrically embeddings tree or DAG structures with low distortions.
Ganea {\it et al.}~\cite{Hyperbolic-Ganea-2018} considered Poincar\'e embeddings.
Nickel and Kiela~\cite{LorentzEmbedding-2018} reported that learning hyperbolic embeddings in the Lorentz model is better than in the Poincar\'e ball model.
In~\cite{hMDS-2018}, a generalization of Multi-Dimensional Scaling (MDS) to hyperbolic space is given using the hyperboloid model of hyperbolic geometry.

Notice that Klein model of hyperbolic geometry (non-conformal) can be interpreted as a special case of  Hilbert geometry for the unit ball domain~\cite{HVD-2010,HVD-2014}. Gromov introduced the generic concept of $\delta$-hyperbolicity for any metric space~\cite{Gromov-1987}.
It is known that a necessary condition for Gromov-hyperbolic Hilbert geometry is to have a bounded convex $C^1$-domain without line segment~\cite{Karlsson-2002}.
(The characterization of hyperbolic Hilbert geometries have also been studied using the curvature viewpoint in~\cite{HyperbolicHilbert-2014}.)
See~\cite{Hyperbolic-Ganea-2018} for a generic low-distortion embedding theorem of a point set lying in a  Gromov-hyperbolic space into a weighted tree.

Let us summarize the pros and cons of the Hilbert polytopal geometries:
\begin{itemize}
	\item Advantage: Generic closed-form distance formula for polytope domains~\cite{Alexander-1978,Schneider-2006}, isometric to a normed vector space~\cite{HGPolytope-2014}.
	\item Inconvenient: Not a Busemann $G$-space (geodesics as shortest paths are not unique) with only polynomial volume growth of balls~\cite{Vernicos-2013}.
\end{itemize}

Figure~\ref{fig:HGchart} summarizes the various kinds of Hilbert geometries.

\begin{figure}%
\centering
\includegraphics[width=0.8\textwidth]{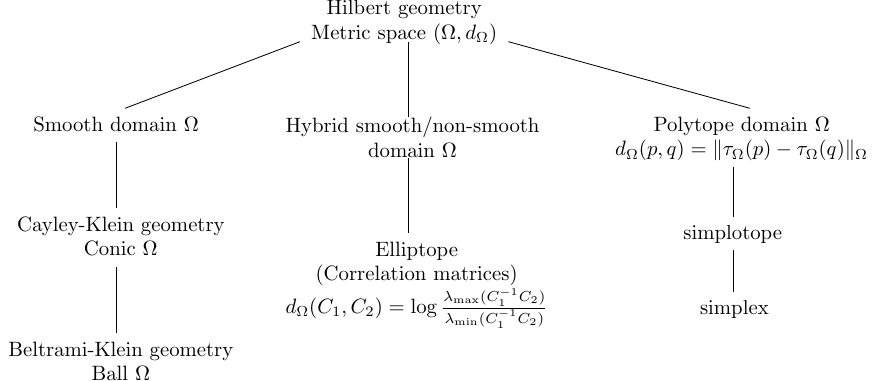}%
\caption{Various types of Hilbert geometries.}%
\label{fig:HGchart}%
\end{figure}

Potential future directions are to consider Hilbert structure embeddings, and using the Hilbert metric for regularization and sparsity in machine learning (due to its equivalence with a polytope normed distance).
\vskip 0.5cm

\vbox{
Our Python codes are freely available online for reproducible research:\\
\centerline{\url{https://franknielsen.github.io/HSG/}}
}

\bibliographystyle{plain}
\bibliography{HSG-1704.00454-v7}

\appendix

\section{Isometry of Hilbert simplex geometry to a normed vector space\label{sec:isometry}}

Consider the Hilbert simplex metric space $(\Delta^d,\rho_\HG)$ where $\Delta^d$ denotes
the $d$-dimensional open probability simplex and $\rho_\HG$ the Hilbert cross-ratio metric.
Let us recall the isometry (\cite{HilbertHarpe-1991}, 1991) of the open standard simplex to a normed vector space $(V^d,\|\cdot\|_\NH)$.
Let $V^d=\{v\in\bbR^{d+1} \st \sum_i v^i=0\}$ denote the $d$-dimensional vector space sitting in $\bbR^{d+1}$.
Map a point $p=(\lambda^0,\ldots,\lambda^{d})\in\Delta^d$ to a point $v(x)=(v^0,\ldots, v^{d})\in V^d$ as follows:
\begin{equation*}
v^i
= \frac{1}{d+1} \left(d\log \lambda^i -\sum_{j\neq{i}}\log \lambda^j \right)
= \log\lambda^i - \frac{1}{d+1}\sum_{j=0}^d \log\lambda^j.
\end{equation*}

We define the corresponding norm $\|\cdot\|_\NH$ in $V^d$ by considering the shape of its unit ball
$B_V=\{v\in V^d \st |v^i-v^j|\leq 1, \forall i\not =j\}$.
The unit ball $B_V$ is a symmetric convex set containing the origin in its interior, and thus yields a {\em polytope norm}
 $\|\cdot\|_\NH$ (Hilbert norm) with $2\binom{d+1}{2}=d(d+1)$ facets.
Reciprocally, let us notice that a norm induces a unit ball centered at the origin that is convex and symmetric around the origin.

The distance in the normed vector space between $v\in V^d$ and $v'\in V^d$ is defined by:
\begin{equation*}
\rho_V(v,v')= \|v-v'\|_\NH = \inf \left\{ \tau \st v'\in \tau(B_V\oplus \{v\}) \right\},
\end{equation*}
where $A\oplus B=\{a+b \st a\in A,b\in B\}$ is the Minkowski sum.

The reverse map from the normed space $V^d$ to the probability simplex $\Delta^d$ is given by:
\begin{equation*}
\lambda^i = \frac{\exp({v^i})}{\sum_j \exp(v^j)}.
\end{equation*}

Thus we have $(\Delta^d,\rho_\HG)\cong (V^d,\|\cdot\|_\NH)$.
In 1D, $(V^1,\|\cdot\|_\NH)$ is isometric to the Euclidean line.

Note that computing the distance in the normed vector space requires naively $O(d^2)$ time.

\begin{figure}
\centering
\includegraphics[width=.65\textwidth]{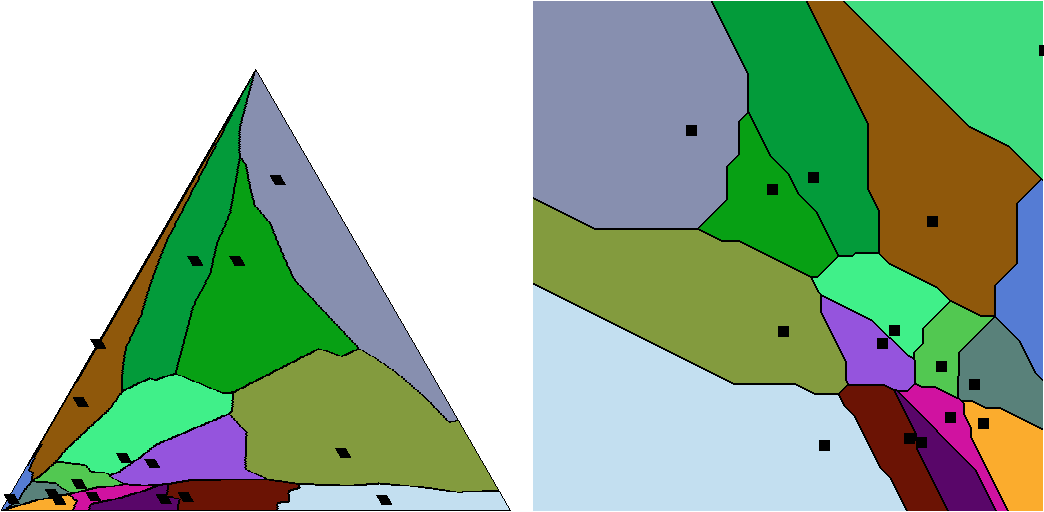}
\caption{Voronoi diagram of $n=16$ points in the Hilbert probability simplex and corresponding Voronoi diagram in the variation norm space.}\label{fig:VoronoiHilbertVarNorm}

\end{figure}

Figure~\ref{fig:VoronoiHilbertVarNorm} displays the Voronoi diagram of $n=16$ points in the Hilbert probability simplex,
and the equivalent Voronoi diagram in the normed space with respect to the variation norm-induced distance.

Let us notice that coordinate $v^i$ can be rewritten as
$$
v^i=\log \frac{\lambda^i}{G(\lambda)},
$$
where $G(\lambda)$ is the coordinate geometric means.

Recall that the Hilbert distance is a projective distance:
$$
\rho_{\HG}(p,q) = \log \frac{\max _{i\in\{1,\ldots, d\}} \frac{p_{i}}{q_{i}}}{\min _{j\in\{1,\ldots, d\}} \frac{p_{j}}{q_{j}}} =
\rho_{\HG}(\lambda p,\lambda' q),\forall \lambda>0,\lambda'>0.
$$

Thus we have:
$$
\rho_{\HG}(p,q) = \|\log p-\log q\|_{\mathrm{var}} = \|\log (\lambda p)-\log (\lambda' q)\|_{\mathrm{var}},\forall \lambda>0,\lambda'>0.
$$

Choose $\lambda=\frac{1}{G(p)}$ and $\lambda'=\frac{1}{G(q)}$ to get
$$
\rho_{\HG}(p,q) = \left\|\log \frac{p}{G(p)}-\log \frac{q}{G(q)}\right\|_{\mathrm{var}}.
$$

This highlights a nice connection with the Aitchison distance of Eq.~\ref{eq:AD}:

\begin{align}
\rhohg(p,q)
&= \left\Vert \log\frac{p}{G(p)}-\log\frac{q}{G(q)} \right\Vert_{\NH},\\
\rho_{\mathrm{Aitchison}}(p,q)
&= \left\Vert \log\frac{p}{G(p)}-\log\frac{q}{G(q)} \right\Vert_{2}.
\end{align}

Thus both the Aitchison distance and the Hilbert simplex distance are normed distances on the representation
$p\mapsto \log\frac{p}{G(p)}=\left(\log\frac{p^0}{G(p)},\ldots,\frac{p^d}{G(p)}\right)$.

Figure~\ref{fig:VoronoiAitchisonHilbertVarNorm} displays the Voronoi diagram of $n=16$ points in the probability simplex with respect to the Aitchison distance (Figure~\ref{fig:VoronoiAitchisonHilbertVarNorm}, left), and the Hilbert simplex distance (Figure~\ref{fig:VoronoiAitchisonHilbertVarNorm}, middle) and its equivalent variation norm space (Figure~\ref{fig:VoronoiAitchisonHilbertVarNorm}, right).

\begin{figure}
\centering
\includegraphics[width=.95\textwidth]{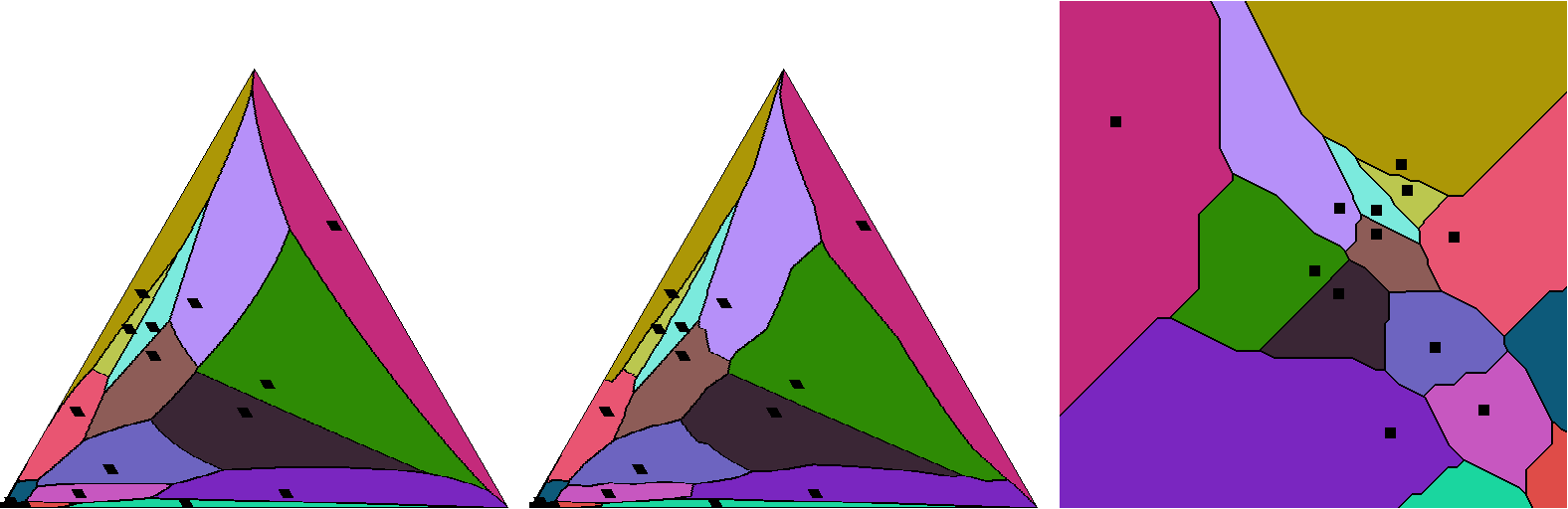}
\caption{Voronoi diagram in the probability simplex with respect to the Aitchison distance (left), Hilbert simplex distance (middle) and equivalent variation-norm induced distance on normalized logarithmic representations.}\label{fig:VoronoiAitchisonHilbertVarNorm}

\end{figure}

Unfortunately, the norm $\|\cdot\|_\NH$ does not satisfy the parallelogram law.\footnote{ Consider
$A = (1/3,1/3,1/3)$, $B = (1/6,1/2,1/3)$, $C = (1/6,2/3,1/6)$ and
$D = (1/3,1/2,1/6)$. Then  $2AB^2 +2BC^2 = 4.34$ but $AC^2 + BD^2 = 3.84362411135$.
}
Notice that a norm satisfying the parallelogram law can be associated with an inner product via the polarization identity.
Thus the isometry of the Hilbert geometry to a normed vector space is not equipped with an inner product.
However, all norms in a finite dimensional space are equivalent.
This implies that in finite dimension, $(\Delta^d,\rho_\HG)$ is {\em quasi-isometric} to the Euclidean space $\bbR^d$.
An example of Hilbert geometry in infinite dimension is reported in~\cite{HilbertHarpe-1991}.
Hilbert spaces are not CAT spaces except when $\calC$ is an ellipsoid~\cite{Vernicos-2004}.

\section{Hilbert geometry with Finslerian/Riemannian structures\label{sec:HGFG}}

In a Riemannian geometry, each tangent plane $T_pM$ of a $d$-dimensional manifold $M$ is equivalent to $\bbR^d$: $T_pM\simeq \bbR^d$.
The inner product at each tangent plane $T_pM$ can be visualized by an ellipsoid shape, a convex symmetric object centered at point $p$.
In a {\em Finslerian geometry},  a norm $\|\cdot\|_p$ is defined in each tangent plane $T_pM$,
and this norm is visualized as a symmetric convex object with non-empty interior.
Finslerian geometry thus generalizes Riemannian geometry by taking into account generic symmetric convex objects instead of ellipsoids for inducing norms at each tangent plane.
Any Hilbert geometry induced by a compact convex domain $\calC$ can be expressed by an equivalent Finslerian geometry by defining the norm in $T_p$ at $p$ as follows~\cite{Vernicos-2004}:

\begin{equation*}
\|v\|_p = F_\calC(p,v) =\frac{\|v\|}{2} \left( \frac{1}{pp^+} + \frac{1}{pp^-} \right),
\end{equation*}
where
$F_\calC$ is the {\em Finsler metric},
$\|\cdot\|$ is an {\em arbitrary norm} on $\bbR^d$,
and $p^+$ and $p^-$ are the intersection points of the line passing through $p$ with direction $v$:
$$
p^+=p+t^+v,\quad p^-=p+t^-v.
$$

A geodesic $\gamma$ in a Finslerian geometry satisfies:
\begin{equation*}
d_\calC(\gamma(t_1),\gamma(t_2)) = \int_{t_1}^{t_2} F_\calC(\gamma(t),\dot\gamma(t)) \dt .
\end{equation*}

In $T_pM$, a ball of center $c$ and radius $r$ is defined by:
\begin{equation*}
B(c,r)=\{ v \ : \ F_\calC(c,v) \leq r \}.
\end{equation*}

Thus any Hilbert geometry induces an equivalent Finslerian geometry, and since Finslerian geometries include Riemannian geometries, one may wonder which Hilbert geometries induce Riemannian structures?
The only Riemannian geometries induced by Hilbert geometries are the {\em hyperbolic Cayley-Klein geometries}~\cite{Richter-2011,LMNN-2016,CayleyClassification-2016} with the domain $\calC$ being an ellipsoid.
The Finslerian modeling of information geometry has been studied in~\cite{Cena-2002,FinslerIG-2016}.

There is not a canonical way of defining measures in a Hilbert geometry since Hilbert geometries are Finslerian but not necessary Riemannian geometries~\cite{Vernicos-2004}. The Busemann measure is defined according to the Lebesgue measure $\lambda$ of $\bbR^d$: Let $B_p$ denote the unit ball wrt. to the Finsler norm at point $p\in\calC$, and $B_e$ the Euclidean unit ball. Then the Busemann measure for a Borel set $\calB$ is defined by~\cite{Vernicos-2004}:
$$
\mu_\calC(\calB) = \int_\calB \frac{\lambda(B_e)}{\lambda(B_p)} \mathrm{d}\lambda(p).
$$

The existence and uniqueness of center points of a probability measure in Finsler geometry have been investigated in~\cite{FinslerCenter-2012}.

\section{Bounding Hilbert norm with other norms}
Let us show that $\|v\|_\NH\leq \beta_{d,c} \|v\|_c$, where $\|\cdot\|_c$
is any norm.
Let $v=\sum_{i=0}^{d} e_i x_i$, where $\{e_i\}$ is a basis of $\bbR^{d+1}$.
We have:
$$
\|v\|_c \leq \sum_{i=0}^{d} |x_i|  \|e_i\|_c \leq \|x\|_2
\underbrace{\sqrt{\sum_{i=0}^{d} \|e_i\|^2_c}}_{\beta_d},
$$
where the first inequality comes from the triangle inequality, and the
second inequality is from the Cauchy-Schwarz inequality.
Thus we have:
$$
\|v\|_\NH \leq \beta_d \|x\|_2,
$$
with $\beta_d=\sqrt{d+1}$ since $\|e_i\|_\NH\leq 1$.

Let $\alpha_{d,c}=\min_{\{v \st \|v\|_c = 1\}} \|v\|_\NH$.
Consider $u=\frac{v}{\|v\|_c}$. Then $\|u\|_c=1$ so that $\|v\|_\NH\geq
\alpha_{d,c} \|v\|_c$.
To find $\alpha_d$, we consider the unit $\ell_2$ ball in $V^d$, and
find the smallest $\lambda>0$ so that
$\lambda B_V$ fully contains the Euclidean ball (Figure~\ref{fig:boundnorm}).

\begin{figure}
\centering
\includegraphics[width=.5\textwidth]{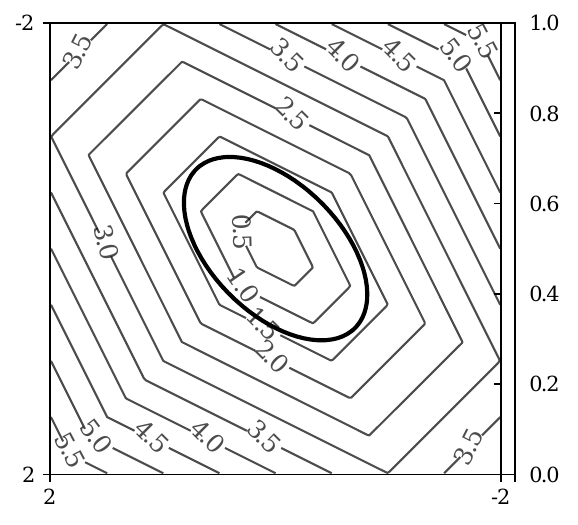}
\caption{Polytope balls $B_V$ and the Euclidean unit ball $B_E$.
From the figure the smallest polytope ball has a radius $\approx 1.5$.}\label{fig:boundnorm}
\end{figure}

Therefore, we have overall:

$$
\alpha_d \|x\|_2 \leq \|v\|_\NH \leq \sqrt{d+1} \|x\|_2
$$

In general, note that we may consider two arbitrary norms $\|\cdot\|_l$
and $\|\cdot\|_u$ so that:
$$
\alpha_{d,l} \|x\|_l \leq \|v\|_\NH \leq \beta_{d,u} \|x\|_u.
$$

\section{Funk directed metrics and Funk balls}

The Funk metric~\cite{FunkHilbert-2014}
with respect to a convex domain $\calC$ is defined by
$$
F_\calC(x,y) = \log\left( \frac{\|x-a\|}{\|y-a\|} \right),
$$
where $a$ is the intersection of
the domain boundary and the affine ray $R(x,y)$
starting from $x$ and passing through $y$
Correspondingly, the reverse Funk metric is
$$
F_\calC(y,x) = \log\left( \frac{\|y-b\|}{\|x-b\|} \right),
$$
where $b$ is the intersection of $R(y,x)$ with the boundary.
The Funk metric is \emph{not} a metric distance, but a {\em weak metric distance}~\cite{Funk-2009} (i.e., a metric without symmetry).

The Hilbert metric is simply the arithmetic symmetrization:
$$
H_\calC(x,y)=F_\calC(x,y)+F_\calC(y,x).
$$

Figure~\ref{fig:FunkSimplex} displays the Funk weak metric in the probability simplex.
The Funk Distance (FD, oriented distance) in the probability simplex is defined by
\begin{equation}
\rho_{\mathrm{FD}}(p,q):=\log \max_{i} \frac{p_i}{q_i}.
\end{equation}
Thus the Jeffreys-type arithmetic symmetrization of the Funk distance yields the Hilbert distance:
\begin{eqnarray*}
\rhofd(p,q)+\rhofd(q,p) &=& \log \max_{i} \frac{p_i}{q_i}\max_{i} \frac{q_i}{p_i},\\
&=& \log \frac{ \max_{i}\frac{p_i}{q_i}}{\min_{i} \frac{p_i}{q_i}},\\
&=& \rhohg(p,q).
\end{eqnarray*}

We checked experimentally that the Funk distance is information monotone.
The following lemma proves this property:

\begin{lemma}\label{thm:funkmono}
    Let $p,q\in\Delta^d$, then $\tilde{p}=(p_0+p_1,p_2,\cdots,p_d)$ and
    $\tilde{q}=(q_0+q_1,q_2,\cdots,q_d)$ are their coarse-grained points on
    $\Delta^{d-1}$. We have
    \begin{equation*}
        \rhofd(\tilde{p},\tilde{q}) \le \rhofd(p,q).
    \end{equation*}
\end{lemma}
\begin{proof}
    Denote $\iota=\max\{p_0/q_0, p_1/q_1\}$. As $q_0,q_1>0$, we have
    \begin{equation*}
        p_0 \le \iota q_0, \quad p_1 \le \iota q_1.
    \end{equation*}
    Therefore
    \begin{equation*}
        \frac{p_0+p_1}{q_0+q_1} \le \frac{\iota q_0+ \iota q_1}{q_0+q_1} = \iota.
    \end{equation*}
    Therefore
    \begin{equation*}
        \max\left\{
            \frac{p_0+p_1}{q_0+q_1}, \frac{p_2}{q_2}, \cdots, \frac{p_d}{q_d}
        \right\}
        \le
        \max\left\{
            \frac{p_0}{q_0}, \frac{p_1}{q_1}, \frac{p_2}{q_2}, \cdots, \frac{p_d}{q_d}
        \right\}.
    \end{equation*}
    Hence
    \begin{equation*}
        \log
        \max\left\{
            \frac{p_0+p_1}{q_0+q_1}, \frac{p_2}{q_2}, \cdots, \frac{p_d}{q_d}
        \right\}
        \le \log\max_i \frac{p_i}{q_i}.
    \end{equation*}
    By the definition of the Funk distance, we get $\rhofd(\tilde{p},\tilde{q}) \le \rhofd(p,q)$.
\end{proof}

\begin{theorem}\label{thm:FunkMonotone}
    The Funk distance $\rhofd$ in $\Delta^d$ satisfies the information monotonicity.
\end{theorem}
The proof is straightforward from Lemma~\ref{thm:funkmono} by noting that any
coarse-grained probability can be recursively defined by merging two bins.
Since the sum of two information monotone distances is monotone, we get another proof that Hilbert distance as the sum of the forward and reverse Funk (weak) metric is information monotone.

\begin{figure}
\centering
\includegraphics[width=.5\textwidth]{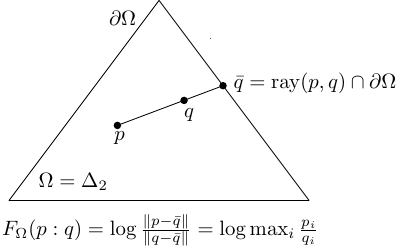}

\caption{Funk weak metric defined in the probability simplex.\label{fig:FunkSimplex}}

\end{figure}

It is interesting to explore clustering based on the Funk geometry,
which we leave as a future work.


\section{$K$-divergence}

The $K$-divergence~\cite{Lin-1991,JensenFamily-2010} is a $f$-divergence
(hence, information monotone~\cite{IG-2016}) defined by
\begin{equation*}
\rhok(p,q) = \rho_{\mathrm{IG}}\left(p,\frac{p+q}{2}\right)
=\sum_{i=0}^d \lambda_p^i\log \frac{2\lambda_p^i}{\lambda_p^i+\lambda_q^i}.
\end{equation*}
The celebrated Jensen-Shannon divergence~\cite{Lin-1991} is another $f$-divergence
obtained by symmetrizing the $K$-divergence:
$$
\rho_{\mathrm{JS}}(p,q)
=\frac{1}{2}\left( \rho_{\mathrm{K}}\left(p,q\right) + \rho_{\mathrm{K}}\left(q,p\right)\right),
$$
The square root of the Jensen-Shannon divergence~\cite{JSHilbert-2004} is a metric distance,
and since the square root function is a monotonously increasing function,
we deduce that $\sqrt{\rho_{\mathrm{JS}}(p,q)}$ is information monotone.
We checked experimentally that the square root of the $K$-divergence is also information monotone.
The proof is straightforward based on the information monotonicity of the KL divergence.
\begin{theorem}
The $K$-divergence $\rhok$, or its square root $\sqrt{\rhok}$, satisfies the information monotonicity.
\end{theorem}

Figure~\ref{fig:kball} shows concentric balls based on the $K$-divergence.
Figure~\ref{fig:kcluster} demonstrates the associated clustering results on toy datasets.

\begin{figure}[htp]
\centering
\begin{subfigure}{.98\textwidth}
\includegraphics[width=\textwidth]{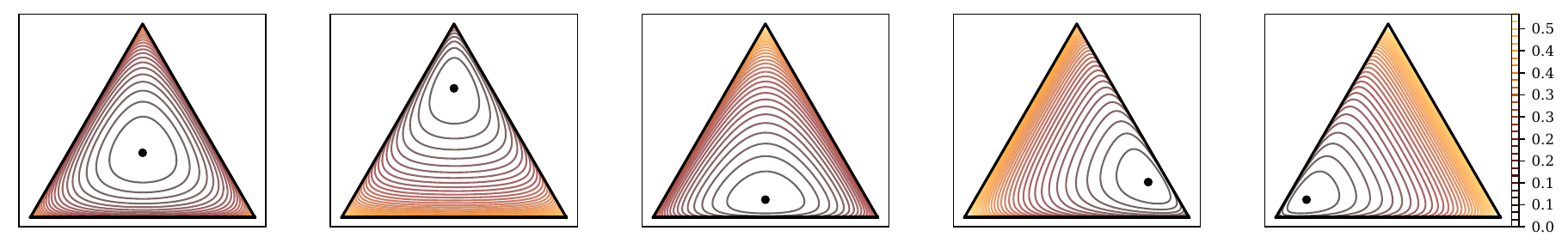}
\caption{$\rhok(p,c)$}
\end{subfigure}
\begin{subfigure}{.98\textwidth}
\includegraphics[width=\textwidth]{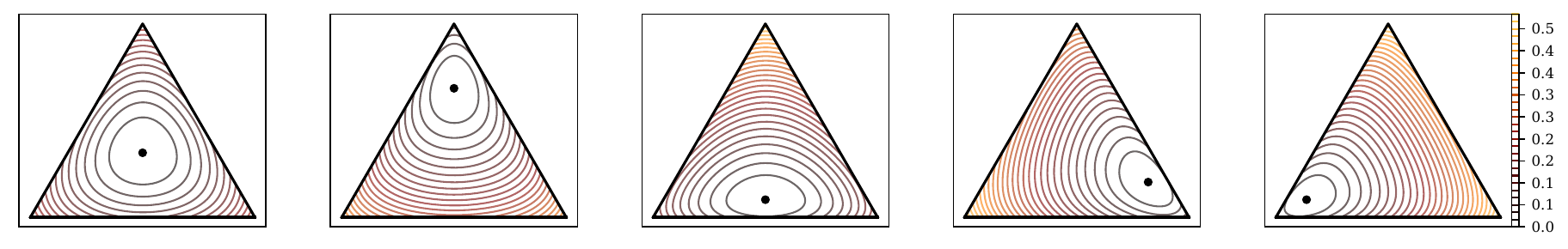}
\caption{$\rhok(c,p)$}
\end{subfigure}
\caption{Balls centered at $c\in\Delta^2$ with its radius measured by the $K$-divergence
increasing at constant speed.}%
\label{fig:kball}%
\end{figure}

\begin{figure}[htp]
\centering
\begin{subfigure}{.4\textwidth}
\includegraphics[width=\textwidth]{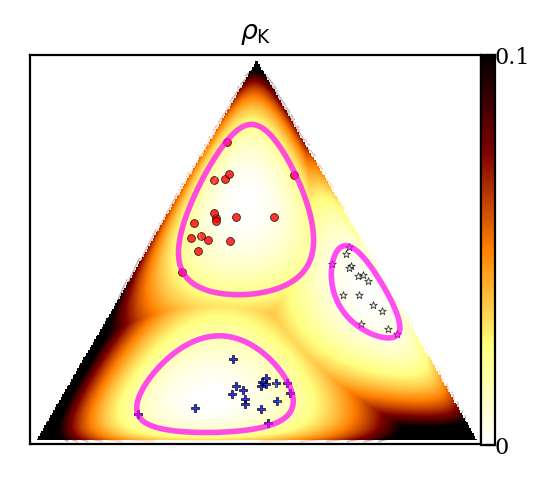}
\caption{$k=3$ clusters}
\end{subfigure}
\begin{subfigure}{.4\textwidth}
\includegraphics[width=\textwidth]{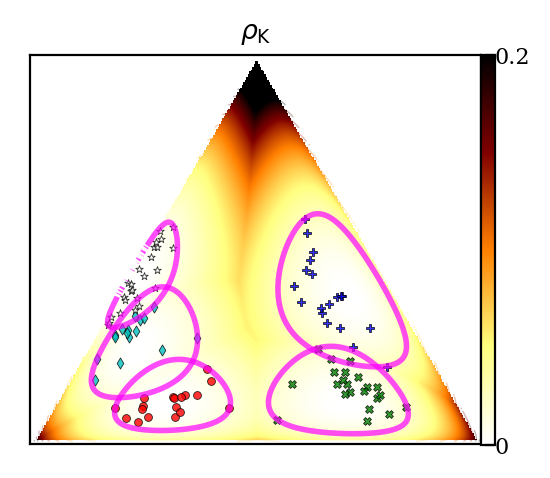}
\caption{$k=5$ clusters}
\end{subfigure}                                                                                                                                                                                                                                 \caption{$k$-center clustering results on a toy dataset in the space of trinomials $\Delta^2$
based on the $K$-divergence. The color density maps indicate the distance from any point to
its nearest cluster center.
\label{fig:kcluster}}
\end{figure}

\section{An efficient algorithm to compute Hilbert cross-ratio distance for a polytope domain}
Without loss of generality, we present the calculation technique for the simplex below (although that in that case we have the direct formula of Eq.~\ref{eq:BG}).
The method extends straightforwardly to arbirary polytope domain.

Given $p,q\in\Delta^d$, we first need to compute the intersection of the line $(pq)$ with the border of the $d$-dimensional probability simplex to get the two intersection points $p'$ and $q'$ so that $p',p,q,q'$ are ordered on $(pq)$.
Once this is done, we simply apply the formula in Eq.~\ref{eg:hgd} to get the Hilbert distance.

A $d$-dimensional simplex consists of $d+1$ vertices with their corresponding $(d-1)$-dimensional facets.
For the probability simplex $\Delta^d$, let $e_i=(\underbrace{0,\ldots,0}_{i}, 1, 0, \ldots,0)$ denote the $d+1$
vertices of the standard simplex embedded in the hyperplane $H_\Delta: \sum_{i=0}^d \lambda^{i}=1$ in $\bbR^{d+1}$.
Let $f_{\backslash{}j}$ denote the simplex facets that is the convex hull of all vertices except $e_j$:
$f_{\backslash j}=\co(e_0,\ldots,e_{j-1},e_{j+1},\ldots,e_{d})$.
Let $H_{\backslash{}j}$ denote the hyperplane supporting this facet,
which is the affine hull $f_{\backslash j}=\aff(e_0,\ldots,e_{j-1},e_{j+1},\ldots,e_d)$.

To compute the two intersection points of $(pq)$ with $\Delta^d$, a naive algorithm
consists in computing the unique intersection point $r_j$ of the line $(pq)$ with each hyperplane
$H_{\backslash{}j}$ ($j=0,\cdots,d$) and checking whether $r_j$ belongs to $f_{\backslash{}j}$.

\begin{figure}
\centering

\includegraphics[width=.5\textwidth]{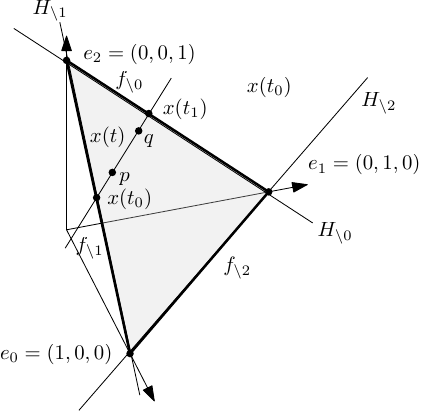}

\caption{Calculating the two intersection points $x(t_0)$ and $x(t_1)$ of the line $(pq)$ with the boundary of the probability simplex $\Delta_d$: For each facet $f_{\backslash i}$, we calculate the intersection point of line $x(t)=(1-t)p+tq$ with the $d$-dimensional hyperplane $H_{\backslash i}$ supporting the facet $f_{\backslash i}$.\label{fig:vizcomp}}
\end{figure}

A much more efficient implementation given by Alg.~(\ref{alg:distance}) calculates
the intersection point of the line $x(t)=(1-t)p+tq$ with each $H_{\backslash{}j}$ ($j=0,\cdots,d$).
These intersection points are represented using the coordinate $t$.
For example, $x(0)=p$ and $x(1)=q$. Due to convexity,
any intersection point with $H_{\backslash{}j}$ must satisfy either $t\le0$ or $t\ge1$.
Then, the two intersection points with $\partial\Delta^d$ are obtained by
$t_0=\max\{t:\,\exists{j},\;x(t)\in{H}_{\backslash{j}}\text{ and }t\le0\}$ and
$t_1=\min\{t:\,\exists{j},\;x(t)\in{H}_{\backslash{j}}\text{ and }t\ge1\}$.
Figure~\ref{fig:vizcomp} illustrates this calculation method.
This algorithm only requires $O(d)$ time and $O(1)$ memory.

\begin{lemma}
The Hilbert distance in the probability simplex can be computed in optimal $\Theta(d)$ time.
\end{lemma}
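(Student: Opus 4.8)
The plan is to establish matching $O(d)$ upper and $\Omega(d)$ lower bounds, so that $\Theta(d)$ is optimal in the algebraic computation-tree model.

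For the upper bound I would invoke the closed-form formula of Eq.~\ref{eq:BG}, which expresses the Hilbert distance as the variation norm $\|\log p - \log q\|_{\var}$. Computing this reduces to a single linear scan: form the $d+1$ differences $\delta_i = \log \lambda_p^i - \log \lambda_q^i$, track the running maximum and minimum of the $\delta_i$ in one pass, and return their difference. Each of the $O(d)$ primitives (one logarithm, one subtraction, one comparison per index) is constant-time, so the total cost is $O(d)$ time and $O(1)$ auxiliary memory. Equivalently one may appeal to the intersection-based method described above, which locates the two boundary points $x(t_0), x(t_1)$ by intersecting the line $(pq)$ with each of the $d+1$ facet-supporting hyperplanes and taking the extremal parameters $t_0 \le 0$ and $t_1 \ge 1$; this likewise costs $O(d)$.

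For the lower bound I would argue that any correct algorithm must inspect all $\Theta(d)$ input coordinates, since $\rho_\HG$ genuinely depends on every one of them: the value is governed by the indices attaining the maximum and minimum of the $\delta_i$, and an adversary can force either extremal index to sit at any prescribed coordinate. Concretely, suppose an algorithm halts after reading only a proper subset $S$ of the coordinates. I would then exhibit two admissible input pairs that agree on all coordinates indexed by $S$ yet have different Hilbert distances, because one of them makes the variation norm be attained at an unread index. This contradicts correctness, so $\Omega(d)$ coordinate reads, hence $\Omega(d)$ time, are necessary.

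The main obstacle is carrying out the adversary construction under the normalization constraint $\sum_i \lambda^i = 1$, which couples the coordinates and forbids perturbing a single unread coordinate in isolation. The clean fix is to exploit the scale-invariance $\rho_\BG(\alpha \tilde{p}, \beta \tilde{q}) = \rho_\BG(\tilde{p}, \tilde{q})$ and the identity $\rho_\BG = \rho_\HG$: I would run the adversary argument on the cone $\bbR_{+,*}^{d+1}$, where the ray coordinates are unconstrained and a single unread entry of $\tilde{p}$ or $\tilde{q}$ can be perturbed freely to shift the extremal index of $\delta_i$, and then transport the conclusion back to $\Delta^d$ by dehomogenization. Because the cone and simplex distances coincide, the $\Omega(d)$ bound descends to the probability simplex, matching the upper bound and yielding optimality.
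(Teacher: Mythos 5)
Your upper bound is essentially the paper's own proof: the paper establishes the lemma by exhibiting the $O(d)$-time, $O(1)$-memory intersection procedure of Algorithm~\ref{alg:distance} (equivalently, the variation-norm formula of Eq.~\ref{eq:BG}), and treats optimality as immediate since any algorithm must read the $\Theta(d)$-sized input. Your attempt to turn that last remark into a rigorous adversary lower bound goes beyond the paper, but the specific route you chose has a genuine gap. Lower bounds do not transfer from a larger input domain to a restricted one: an algorithm for the simplex problem is only required to be correct on normalized inputs $p,q\in\Delta^d$, whereas your adversary on the cone $\bbR_{+,*}^{d+1}$ produces \emph{unnormalized} input pairs agreeing on the read set $S$. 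A simplex algorithm may behave arbitrarily on such inputs, so no contradiction with its correctness follows. The identity $\rho_\BG=\rho_\HG$ on the simplex only says that the simplex problem is a restriction of the cone problem, and restrictions can only get easier, never harder; equivalently, mapping your cone inputs into $\Delta^d$ requires dehomogenization, which rescales every coordinate and destroys the agreement on $S$ that the adversary argument needs.

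The fix is to run the adversary directly in $\Delta^d$, which the normalization constraint does not actually prevent: if the algorithm reads at most $d-1$ of the $d+1$ coordinates of $p$, then at least \emph{two} coordinates $\lambda_p^i,\lambda_p^j$ are unread, and the adversary can transfer mass between them, keeping $\sum_k \lambda_p^k=1$ and every read coordinate fixed, e.g.\ sending $\lambda_p^i\to\epsilon$ while increasing $\lambda_p^j$ accordingly. Both resulting inputs lie in the open simplex, agree on everything the algorithm read, yet $\log\lambda_p^i-\log\lambda_q^i\to-\infty$ changes (indeed blows up) the variation norm, so the two inputs have different Hilbert distances. This yields $\Omega(d)$ reads on $p$ (and symmetrically on $q$) entirely within $\Delta^d$, and with that replacement your $\Theta(d)$ conclusion stands.
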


\begin{algorithm}[t]
\KwData{Two points $p=(\lambda_p^0,\cdots,\lambda_p^d)$, $q=(\lambda_q^0,\cdots,\lambda_q^d)$
in the $d$-dimensional simplex $\Delta^d$}
\KwResult{Their Hilbert distance $\rhohg(p,q)$}
\Begin{
$t_0\leftarrow-\infty$;~$t_1\leftarrow+\infty$\;
\For{$i=0\cdots{d}$}{
\If{$\lambda_p^i\neq\lambda_q^i$}{
$t\leftarrow\lambda_p^i/(\lambda_p^i-\lambda_q^i)$\;
\uIf{$t_0<t\le0$}{
$t_0\leftarrow{}t$\;
}\ElseIf{$1\le{t}<t_1$}{
$t_1\leftarrow{}t$\;}
}
}
\uIf{$t_0=-\infty$ or $t_1=+\infty$} {
Output $\rhohg(p,q)=0$\;
} \uElseIf{$t_0=0$ or $t_1=1$}{
Output $\rhohg(p,q)=\infty$\;
}\Else{
Output $\rhohg(p,q)=\left\vert\log(1-\frac{1}{t_0})-\log(1-\frac{1}{t_1})\right\vert$\;
}}
\caption{Computing the Hilbert distance}\label{alg:distance}
\end{algorithm}

\section{Python code snippets\label{sec:python}}

The code for computing the Hilbert simplex metric is as follows:

\begin{lstlisting}[language=Python]
     def HilbertSimplex( self, other ):
        if np.allclose( self.p, other.p ): return 0
        idx = np.logical_not( np.isclose( self.p, other.p ) )
        if ( idx.sum() == 1 ): return 0

        lamb = self.p[idx] / (self.p[idx] - other.p[idx])
        t0 = lamb[ lamb <= 0 ].max()
        t1 = lamb[ lamb >= 1 ].min()
        if np.isclose( t0, 0 ) or np.isclose( t1, 1 ): return np.inf

        return np.abs( np.log( 1-1/t0 ) - np.log( 1-1/t1 ) )
\end{lstlisting}

The code for computing the Hilbert elliptope metric between two correlation matrices is as follows:

\begin{lstlisting}[language=Python]
def HilbertCorrelation( self, other ):
    S1invS2 = np.linalg.solve( self.C, other.C )
    lamb = np.linalg.eigvals( S1invS2 )
    return np.log( lamb.max() ) - np.log( lamb.min() )
\end{lstlisting}

\end{document}